\newcommand*{\KeepStyleUnderBrace}[1]{
  \mathop{%
    \mathchoice
    {\underbrace{\displaystyle#1}}%
    {\underbrace{\textstyle#1}}%
    {\underbrace{\scriptstyle#1}}%
    {\underbrace{\scriptscriptstyle#1}}%
  }\limits
}
\def\nmB{\bm{\widetilde B}}
\def\nmSig{\bm{\widetilde \Sigma}}
\def\nL{\mathcal{\widetilde L}}
\def\nR{\mathcal{\widetilde R}}
\def\tA{\mathcal{A}}
\def\tB{\mathcal{B}}
\def\tE{\mathcal{E}}
\def\tT{\mathcal{T}}
\def\entry#1{\llbracket #1 \rrbracket}
\def\est{\bm{\widehat u}_1} 
\def\estv{\widehat \lambda_1}
\def\esti{\bm{\widehat u}_i}
\def\estvi{\widehat \lambda_i}
\def\true{\bm{u}_1} 
\def\truev{\lambda_1}
\def\ls#1{\mathcal{#1}} 
\def\ntensor{\mathcal{\widetilde T}} 
\def\nHOSVD{\mathcal{\widetilde T}_{(12)(3\ldots k)}} 
\def\tensor#1{\mathcal{#1}} 
\def\entry#1{\llbracket #1 \rrbracket}
\def\eqdef{\stackrel{\text{\tiny \rm def}}{=}} 
\def\Vec{\operatorname{Vec}}
\def\Spanspace{\operatorname{Span}}
\def\Loss{\operatorname{Loss}}
\def\bbR{\mathbb{R}}
\newcommand{\matr}[1]{\bm{#1}}
\newcommand{\norm}[1]{\left\lVert#1\right\rVert_\sigma}
\newcommand{\vectornorm}[1]{\left\lVert#1\right\rVert_2}
\newcommand{\Fnorm}[1]{\left\lVert#1\right\rVert_F}
\newcommand{\normSize}[2]{#1\lVert#2#1\rVert_\sigma}
\newcommand{\FnormSize}[2]{#1\lVert#2#1\rVert_F} 
\DeclareMathOperator*{\argmax}{arg\,max}
\DeclareMathOperator*{\maximize}{maximize}
\newcommand{\suppfref}[1]{{Supplementary Figure~\ref{#1}}}
\newcommand{\supptref}[1]{{Supplementary Table~\ref{#1}}}
\def\sref#1{Section~\ref{#1}}
\newcommand{\tikzmark}[1]{\tikz[overlay,remember picture] \node (#1) {};}
\newcommand*{\SpaceReservedForComments}{2.9cm}%
\newcommand*{\HorizontalOffset}{-0.5em}%
\newcommand*{\VerticalOffset}{.8ex}%
\newcommand*{\AddNote}[4][]{%
    \begin{tikzpicture}[overlay, remember picture]
        \draw [decoration={brace,amplitude=0.2em},decorate,thick,red, #1]
            ($(#3)+(\HorizontalOffset,-\VerticalOffset)$) --  ($(#2)+(\HorizontalOffset,\VerticalOffset)$)
            node [align=left, text width=\SpaceReservedForComments-1.0em, pos=0.5, anchor=east] {#4};
    \end{tikzpicture}
    }
    \algrenewcommand\alglinenumber[1]{\tikzmark{\arabic{ALG@line}}\tiny#1:}
\theoremstyle{plain}
\newtheorem{thm}{Theorem}[section]
\newtheorem{lem}{Lemma}[section]
\newtheorem{cor}{Corollary}[section]
\newtheorem{prop}{Proposition}[section]
\theoremstyle{definition}
\newtheorem{defn}{Definition}[section]
\newtheorem{assume}{Assumption}[section]
\theoremstyle{remark}
\newtheorem{rmk}{Remark}[section]
\algnewcommand\algorithmicinput{\textbf{Input:}}
\algnewcommand\algorithmicoutput{\textbf{Output:}}
\algnewcommand\INPUT{\item[\algorithmicinput]}
\algnewcommand\OUTPUT{\item[\algorithmicoutput]}
\begin{document}

\begin{center}
\begin{spacing}{1.5}
\textbf{\Large  Tensor Decompositions via Two-Mode Higher-Order~SVD~(HOSVD)}
\end{spacing}

\vspace{5mm}
Miaoyan Wang$^1$ and Yun S. Song$^{1,2,3}$

\vspace{5mm}
$^1$Department of Mathematics, University of Pennsylvania\\
$^2$Department of Statistics, University of California, Berkeley\\
$^3$Computer Science Division, University of California, Berkeley 
\end{center}
\vspace{5mm}

\begin{abstract}

Tensor decompositions have rich applications in statistics and machine learning, and developing efficient, accurate algorithms for the problem has received much attention recently.
Here, we present a new method built on Kruskal's uniqueness theorem to decompose symmetric, nearly orthogonally decomposable tensors.
Unlike the classical higher-order singular value decomposition  which unfolds a tensor along a single mode, we consider unfoldings along two modes and use rank-1 constraints to characterize the underlying components. 
This tensor decomposition method provably handles a greater level of noise compared to previous methods and achieves a high estimation accuracy.  Numerical results demonstrate that our algorithm is robust to various noise distributions and that it performs especially favorably as the order increases.

\end{abstract}

\section{Introduction}

Tensor decompositions have recently drawn increased attention in statistical and machine learning applications. For example, tensor decompositions provide powerful tools to estimate parameters in various latent variable models via the method of moments \cite{anandkumar2012spectral, anandkumartensor, anandkumar2014tensor} and lead to the development of efficient denoising techniques in independent component analysis \cite{comon1994independent, comon2010handbook, vempala2015max}. 

Decomposition of higher-order tensors is fraught with challenges, since in general most computational problems regarding tensors are NP-hard \cite{hillar2013most}.  However, the tensors considered in the aforementioned applications possess special structures that facilitate computation.
A class of such tensors which we study in this paper are nearly orthogonal decomposable tensors of the following form: 
\begin{equation}\label{eq:problem}
\ntensor=\sum_{i=1}^r \lambda_i \matr{u}_i^{\otimes k}+\text{noise}, 
\end{equation}
where $\{\matr{u}_i\}_{i=1}^r$ is a set of orthonormal vectors in $\bbR^d$, $\lambda_i$ are non-zero scalars in $\bbR$, and the noise is assumed to be symmetric and small (bounded by a positive constant $\varepsilon$ under some natural measurement). Our goal is to estimate the factor pairs $\{(\matr{u}_i,\lambda_i)\in\bbR^d \times \bbR\}_{i=1}^r$ from the noisy observation $\ntensor$. 
Often the number $r$  of factors is also unknown and it needs to be determined from $\ntensor$; we consider this general case at the end of the paper.
In contrast to existing works that have mostly focused on order-3 tensors (i.e., $k=3$), we here present a general framework applicable to an arbitrary order $k\geq 3$.  Unless stated otherwise, we use the term ``tensor'' to mean a tensor of order $\geq 3$. 

Current tensor decomposition methods vary in their objective function, tolerance of the noise level $\varepsilon$, target range of $r$, and runtime. The robust tensor power method (TPM) \cite{anandkumar2014tensor}, which computes the fixed point of a certain iterative function,  is a natural generalization of the matrix power method to the $k=3$ case. TPM is easy to implement, but its noise tolerance is restricted, i.e.,\ $\varepsilon \lesssim O(d^{-1})$ with random initialization.  A more recent work \cite{richard2014statistical} proposes using a hybrid of a square unfolding and power iterations to improve the noise tolerance.  This technique is designed for single-spike models (i.e., $r=1$) with Gaussian noise, 
and the sensitivity of the algorithm when some $\lambda_i$ are close to each other has not been explored.
Another decomposition method, orthogonal joint diagonalization (OJD) \cite{kuleshov2015tensor}, seeks all factors simultaneously by joint diagonalization of a set of matrices from random contractions. This approach reaches the full potential in the full rank setting; however, applying OJD to low-rank tensors requires further modifications.

In this paper we propose a new algorithm that reduces the orthogonal decomposition of tensors to that of matrices. Specifically, we unfold the tensor along two modes and demand the left singular vectors of the resulting matrix to be close to a \emph{Kronecker square}~\cite{kofidis2002best}
(i.e., $\matr{x}^{\otimes 2}$ for some vector $\matr{x}$).
By viewing the length-$d^2$ vectors as elements of $\bbR^{d\times d}$, this is equivalent to imposing a ``nearly-rank-1'' constraint in the two-mode singular space.  
We show that the two-mode matrix singular value decomposition (SVD), in conjunction with the nearly-rank-1 constraint, provides an accurate approximation of $\{\matr{u}_i^{\otimes 2}\}$.  We then estimate the underlying factors $\{\matr{u}_i\}$ using the dominant eigenvectors of those nearly rank-1 matrices.  

Our work is inspired by and built on Kruskal's uniqueness theorem \cite{kruskal1977three, sidiropoulos2000uniqueness,bhaskara2014uniqueness}: when $\varepsilon$ vanishes in \eqref{eq:problem}, the set of $\{\matr{u}_i\}$ must be uniquely determined even in the case of degenerate $\lambda_i$s. This is in sharp contrast to the matrix case, where imposing proper constraints on the matrix SVD is needed to avoid the ambiguity caused by degenerate singular values. In fact, the notion of Kronecker-squared singular vector, denoted $\Vec(\matr{u}_i^{\otimes 2})$, is a necessary and sufficient condition for $\matr{u}_i$ to be a desired component in orthogonal tensor decomposition.  Our method ensures the exact recovery in the noiseless case.  Moreover, it achieves very good accuracy at a higher noise level $O(d^{-(k-2)/2})$ than TPM's noise tolerance of $O(d^{-(k-1)/2})$ \cite{richard2014statistical}. 

The remainder of the paper is organized as follows. In Section~\ref{sec:preliminaries}, we introduce some tensor notation and algebra. Section~\ref{sec:SOD} describes the key observation that characterizes the components of orthogonal decomposable tensors.  Our main algorithm and perturbation analysis are summarized in Section~\ref{sec:nearlySOD}.  In Section~\ref{sec:simulation} we demonstrate the accuracy and efficiency of our method through simulations, and in \sref{sec:proofs} we provide
proofs of our theoretical results.

\section{Preliminaries}\label{sec:preliminaries}

We use $\tT =\entry{ \tT_{i_1\ldots i_k}}   \in \bbR^{d_1\times\cdots \times d_k}$ to denote an order-$k$, $(d_1,\ldots, d_k)$-dimensional tensor with entries $\tT_{i_1\ldots i_k}\in\bbR$ where $1 \leq i_n \leq d_n$ for $n = 1,\ldots,k$. A tensor $\tT\in\bbR^{d_1\times \cdots\times d_k}$ is called symmetric if $d_1=\cdots=d_k$ and $\tT_{i_1 \ldots i_k}=\tT_{\pi(i_1)\ldots \pi(i_k)}$ for all permutations $\pi$ of  $\{1,\ldots ,k\}$. We use $[n]$ to denote the $n$-set $\{1,\ldots, n\}$ for $n\in\mathbb{N}_{+}$ and $\mathbf{S}^{d-1}=\{\matr{x}\in\bbR^{d}\colon \vectornorm{\matr{x}}=1\}$ to denote the $(d-1)$-dimensional unit sphere.

A symmetric tensor $\tT$ can be viewed as a multilinear map  \cite{lim2006singular}.  For any  $\matr{x}=(x_1,\ldots,x_d)^T\in\bbR^d$, define 
\[
\tT(\matr{x},\ldots,\matr{x})=\sum_{i_1=1}^d\cdots\sum_{i_k=1}^d \tT_{i_1\ldots i_k}x_{i_1}\cdots x_{i_k}.
\]
The \emph{spectral norm}, or the \emph{$l^2$-norm}, of $\tT$ is defined as 
\[
\norm{\tT}=\sup_{\matr{x}\in\mathbf{S}^{d-1}}|\tT(\matr{x},\ldots,\matr{x})|.
\]
Note that when $k=2$, the tensor $l^2$-norm reduces to the classical matrix $l^2$-norm. 
The Frobenius norm of the tensor $\tT$ is defined as
\[
\Fnorm{\tT} =  \left( \sum_{i_1=1}^d\cdots \sum_{i_k=1}^d \tT^2_{i_1\cdots i_k}\right)^{1/2}.
\]

Given a symmetric order-$k$ tensor $\tT=\entry{\tT_{i_1\ldots i_k} } \in \bbR^{d \times\cdots \times d}$, we can map the indices from $k$-tuples $(i_1,\ldots,i_k)$  to $\ell$-tuples, where $1\leq \ell \leq k$, in various ways to reshape the tensor into lower-order objects. This operation is called \emph{tensor unfolding}. For instance, an order-3 symmetric tensor $\tT$ can be unfolded into a $d$-by-$d^2$ matrix $\tT_{(1)(23)}$ by defining the $(a,b)$-element of the matrix as $\big[\tT_{(1)(23)}\big]_{ab}=\tT_{i_1 i_2 i_3}$, where $a=i_1$ and $b=i_2+(i_3-1)d$. 
The notation $(j_1 \ldots j_m)$ appearing in the subscript of an unfolded tensor denotes that the modes $j_1,\ldots,j_m$ are combined into a single mode.
The following three unfoldings will be of particular interest to us: 
\begin{enumerate}
\item \emph{One-mode unfolding} $\tT_{(1)(2\ldots k)}$, which reshapes $\tT$ into a $d \times d^{k-1}$ matrix.
\item \emph{Two-mode unfolding}  $\tT_{(12)(3\ldots k)}$, which reshapes $\tT$ into a $d^2\times d^{k-2}$ matrix.
\item \emph{Order-3 unfolding}  $\tT_{(1)(2)(3\ldots k)}$, which reshapes $\tT$ into a $d\times d\times d^{k-2}$ cube. 
\end{enumerate}
We now introduce the notion of \emph{two-mode singular space} which is central to our methods. Let $\tT\in\bbR^{d\times\cdots\times d}$ be an order-$k$ symmetric tensor and $\tT_{(12)(3\ldots k)}$ be its two-mode unfolding. We use
\[ 
	\tT_{(12)(3\ldots k)}=\sum_i \mu_i\matr{a}_i\matr{b}^T_i
\]
to denote the {\it two-mode higher-order SVD} (HOSVD); that is, $\mu_i\geq 0$ is the $i$th largest singular value of the matrix $\tT_{(12)(3\ldots k)}$, and $\matr{a}_i\in\bbR^{d^2}$ (respectively, $\matr{b}_i\in\bbR^{d^{k-2}}$) is the $i$th left (respectively, right) singular vector corresponding to $\mu_i$.

\begin{defn} [Two-Mode Singular Space] The $s$-truncated two-mode singular space of $\tT$ is defined by
\[
\ls{LS}^{(s)}=\Spanspace\{\matr{a}_1,\ldots,\matr{a}_s\},
\]
where $\matr{a}_i$ is the $i$th left singular vector of $\tT_{(12)(3\ldots k)}$. 
\end{defn}

\begin{rmk}
Since $\bbR^{d^2}$ is isomorphic to $\bbR^{d\times d}$, we also write $\ls{LS}^{(s)}=\Spanspace\{\text{Mat}(\matr{a}_1), \ldots,\text{Mat}(\matr{a}_s)\}$, where $\text{Mat}(\cdot)$ denotes the matricization operation that unstacks a length-$d^2$ vector into a $d$-by-$d$ matrix. Conversely, we use $\Vec(\cdot)$ to denote the concatenation of matrix columns into a vector. Throughout the paper, we do not distinguish between these two representations of $\ls{LS}^{(s)}$. It should be clear from the context whether we are viewing the elements of the two-mode singular space as length-$d^2$ vectors or $d\times d$ matrices. 
\end{rmk}

\begin{rmk}
If $\mu_s$ is strictly larger than $\mu_{s+1}$, the $s$-truncated space $\ls{LS}^{(s)}$ is uniquely determined. 
\end{rmk}

Tensors arising in applications (such as parameter estimation for latent variable models, independent component analysis, etc) frequently possess special structures. In this paper, we consider a class of tensors that are symmetric and (nearly) orthogonal decomposable. A tensor $\tT$ is called \emph{symmetric and orthogonally decomposable} (SOD,  \cite{mu2015successive}) if $\tT$ can be written as $\tT=\sum_{i=1}^r\lambda_i\matr{u}_i^{\otimes k}$, where $\{ \matr{u}_i\}_{i\in[r]}$ are orthonormal vectors in $\bbR^d$, $\{ \lambda_i\}_{i\in[r]}$ are non-zero scalars in $\bbR$, and $r$ is a positive integer. In this case, each $\matr{u}_i$ is called a \emph{factor} and $r$ the \emph{rank} of the SOD tensor $\tT$. 

Unlike in the matrix case where every symmetric matrix has an eigen-decomposition, symmetric tensors are not necessarily SOD. We are particularly interested in symmetric tensors within a small neighborhood of SOD tensors; i.e.,\ $\ntensor=\sum_{i=1}^r\lambda_i\matr{u}_i^{\otimes k}+\tE$ where $\tE\in\bbR^{d\times \cdots \times d}$ is a symmetric but otherwise arbitrary tensor with $\norm{\tE}\leq \varepsilon$. Here $\tE$ represents the noise tensor arising from, for example, sampling error, model misspecification, finite sample size, etc. We refer to this class of tensors as \emph{nearly-SOD tensors} \cite{mu2015successive}.  For simplicity we will assume $\lambda_i>0$ for all $i\in[r]$ hereinafter. 
The case where $\lambda_i$s have arbitrary signs will be discussed at the end of Section~\ref{sec:nearlySOD}.

\section{Tensor vs. Matrix Decompositions}\label{sec:SOD}

In this section, we restrict our attention to SOD tensors and relate their decompositions to matrix decompositions. The special structure of SOD tensors ($\tT=\sum_i\lambda_i\matr{u}_i^{\otimes k}$, where $\matr{u}_i$ are orthonormal) allows us to illustrate the main idea with a clean and simple proof. We characterize the underlying factors $\{\matr{u}_i\}$ of $\tT$ using rank-1 matrices in the two-mode singular space of $\tT$. As we shall see later, this perspective plays a key role in our algorithm. 
 
\subsection{Characterization of Robust Eigenvectors}
Tensor decompositions possess an interesting uniqueness property not present in matrix decompositions. Consider an SOD tensor, $\tT=\sum_{i=1}^r\lambda_i\matr{u}_i^{\otimes k}$. Kruskal's theorem guarantees that the set of orthonormal vectors $\{\matr{u}_i\}_{i=1}^r$ is unique up to signs even when some $\lambda_i$s are degenerate. A rank $r > 1$ matrix, on the other hand, can be decomposed in multiple manners into a sum of outer-product terms in the case of degenerate $\lambda_i$s. To make the distinction explicit, we refer to these unique components $\{\matr{u}_i\}$ as \emph{robust eigenvectors} \cite{anandkumar2014tensor}.

\begin{defn}[Anandkumar et al \cite{anandkumar2014tensor}] 
A unit vector $\matr{a}\in\bbR^d$ is called a robust eigenvector of $\tT$ if $\matr{a}\in\{\matr{u}_1,\ldots,\matr{u}_r\}$.
\end{defn}

Decomposing an SOD tensor amounts to finding all its robust eigenvectors. Note that the tensor decomposition $\tT=\sum_{i=1}^r\lambda_i\matr{u}_i^{\otimes k}$ implies a series of matrix SVDs, such as $\tT_{(1)(2\ldots k)}=\sum_{i=1}^r\lambda_i\Vec(\matr{u}_i)\Vec(\matr{u}_i^{\otimes k-1})^T$, $\tT_{(12)(3\ldots k)}=\sum_{i=1}^r\lambda_i\Vec(\matr{u}_i^{\otimes 2})\Vec(\matr{u}_i ^{\otimes k-2})^T, \ldots,$ and so on. 
This suggests a way to use matrix SVDs to recover $\{\matr{u}_i\}$. Despite the seeming simplicity, however, matrix SVDs may lead to spurious solutions because they are not guaranteed to be unique. For example, in the case of $\lambda_1=\lambda_2$, $\matr{u}_1+\matr{u}_2$ is a (unnormalized) left singular vector of $\tT_{(1)(2\ldots k)}$. Clearly, $\matr{u}_1+\matr{u}_2$ is non-parallel to any robust eigenvector of $\tT$. Similarly, $\Vec(\matr{u}_1^{\otimes 2}+\matr{u}_2^{\otimes 2})$ is a (unnormalized) left singular vector of $\tT_{(12)(3\ldots k)}$ without being parallel to any $\Vec(\matr{u}_i^{\otimes 2})$.

Fortunately, such spurious solutions can be ruled out by enforcing a certain Kronecker-product constraint on matrix singular vectors. Specifically, we have the following characterization of robust eigenvectors:

\begin{thm}\label{thm:characterization}
A unit vector $\matr{a}\in\bbR^d$ is a robust eigenvector of $\tT$ if and only if $\Vec(\matr{a}^{\otimes 2})$ is a left singular vector of $\tT_{(12)(3\dots k)}$ corresponding to a non-zero singular value. 
\end{thm}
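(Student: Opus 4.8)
The plan is to reduce the claim to a single explicit factorization of the two-mode unfolding together with an elementary rank argument for symmetric matrices. Write $M:=\tT_{(12)(3\ldots k)}$. As already noted above, unfolding $\tT=\sum_{i=1}^r\lambda_i\matr{u}_i^{\otimes k}$ gives
\[
	M=\sum_{i=1}^r\lambda_i\,\Vec(\matr{u}_i^{\otimes 2})\,\Vec(\matr{u}_i^{\otimes(k-2)})^T .
\]
The one elementary identity I would invoke is $\langle \matr{u}_i^{\otimes m},\matr{u}_j^{\otimes m}\rangle=\langle\matr{u}_i,\matr{u}_j\rangle^{m}=\delta_{ij}$ for every integer $m\ge 1$. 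Since $k\ge 3$, both $\{\Vec(\matr{u}_i^{\otimes 2})\}_{i\in[r]}$ and $\{\Vec(\matr{u}_i^{\otimes(k-2)})\}_{i\in[r]}$ are therefore orthonormal systems, so the display is a singular value decomposition of $M$ and, using orthonormality of the right factors,
\[
	MM^T=\sum_{i=1}^r\lambda_i^2\,\Vec(\matr{u}_i^{\otimes 2})\Vec(\matr{u}_i^{\otimes 2})^T .
\]
I will then use two standard consequences: (i) a unit vector is a left singular vector of $M$ with nonzero singular value precisely when it is a unit eigenvector of $MM^T$ with strictly positive eigenvalue; and (ii) any such eigenvector lies in the range of $M$, which by the first display is contained in $W:=\Spanspace\{\Vec(\matr{u}_i^{\otimes 2})\}_{i\in[r]}$. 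Routing everything through $MM^T$, rather than through a fixed choice of SVD, is what keeps the argument valid when some $\lambda_i$ coincide — the very case the theorem is meant to handle.

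The forward direction is then immediate. If $\matr{a}=\matr{u}_j$, then using the formula for $MM^T$ and orthonormality, $MM^T\Vec(\matr{u}_j^{\otimes 2})=\lambda_j^2\,\Vec(\matr{u}_j^{\otimes 2})$, so $\Vec(\matr{a}^{\otimes 2})$ is a unit eigenvector of $MM^T$ with eigenvalue $\lambda_j^2>0$; by (i) it is a left singular vector of $M$ attached to the nonzero singular value $\lambda_j$. For the converse, suppose $\Vec(\matr{a}^{\otimes 2})$ is a left singular vector of $M$ with nonzero singular value. By (ii), $\Vec(\matr{a}^{\otimes 2})\in W$, which, read in the matrix picture via $\text{Mat}(\cdot)$, says
\[
	\matr{a}\matr{a}^T=\sum_{i=1}^r c_i\,\matr{u}_i\matr{u}_i^T
\]
for some scalars $c_i$. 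This is the only step carrying real content: since $\{\matr{u}_i\}$ is orthonormal, the right-hand side is symmetric with nonzero eigenvalues exactly the nonzero $c_i$, whereas $\matr{a}\matr{a}^T$ has rank one with its single nonzero eigenvalue equal to $\norm{\matr{a}}^2=1$. Matching these forces exactly one coefficient, say $c_j$, to be nonzero, with $c_j=1$; hence $\matr{a}\matr{a}^T=\matr{u}_j\matr{u}_j^T$, i.e.\ $\matr{a}=\pm\matr{u}_j$, so $\matr{a}$ is a robust eigenvector of $\tT$ (up to sign).

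I do not expect a genuine obstacle here: once the two-mode unfolding is written out, the proof is short. The points needing care are all of bookkeeping type — verifying that the two orthonormal families really appear (this is exactly where $k\ge 3$ enters), phrasing ``left singular vector with nonzero singular value'' through the eigenvectors of $MM^T$ so that degenerate $\lambda_i$ cause no trouble, and acknowledging the sign ambiguity in the converse, which is intrinsic to $\matr{a}^{\otimes 2}$ when $k$ is even and is otherwise removed by the standing assumption $\lambda_i>0$.
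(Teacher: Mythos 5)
Your proposal is correct and follows essentially the same route as the paper: write the two-mode unfolding as $\sum_i\lambda_i\Vec(\matr{u}_i^{\otimes 2})\Vec(\matr{u}_i^{\otimes(k-2)})^T$, observe this is an SVD because both Kronecker-power families are orthonormal (where $k\ge 3$ enters), conclude that any left singular vector with nonzero singular value lies in $\Spanspace\{\Vec(\matr{u}_i^{\otimes 2})\}$, and finish with the rank-1 argument forcing a single term $\matr{a}^{\otimes 2}=\matr{u}_j^{\otimes 2}$. Routing the degenerate-$\lambda_i$ case through $M M^T$ rather than through the eigenspace of a fixed singular value, as the paper does, is only a cosmetic variation.
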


A proof of Theorem~\ref{thm:characterization} is provided in \sref{appendix:proofcharacterization}. Theorem~\ref{thm:characterization} is closely related to the uniqueness property and it provides a criterion for a unit vector $\matr{a}$ to be a robust eigenvector of $\tT$. 
An earlier work~\cite{anandkumar2014tensor} has shown that robust eigenvectors can be characterized using local maximizers of the objective function $\matr{a}\mapsto  \tT(\matr{a},\ldots,\matr{a})/\vectornorm{\matr{a}}^k$. Our result provides a new perspective that does not require checking the gradient and/or the Hessian of the objective function.

\subsection{Exact Recovery for SOD Tensors}
For an SOD tensor $\tT=\sum_{i=1}^r\lambda_i\matr{u}_i^{\otimes k}$, we define an $r$-dimensional linear space $\ls{LS}_0\eqdef\Spanspace\{\matr{u}^{\otimes 2}_i: i\in[r]\}$.  The elements of $\ls{LS}_0$ can be viewed as either length-$d^2$ vectors or $d$-by-$d$ matrices.
The space $\ls{LS}_0$ is exactly recovered by $\ls{LS}^{(r)}$, the $r$-truncated two-mode left singular space of $\tT$, where $r$ equals the rank of $\tT_{(12)(3\ldots k)}$. Similar to Theorem~\ref{thm:characterization}, imposing rank-1 constraints ensures the desired solutions in $\ls{LS}_0$:

\begin{prop} \label{prop:searchspace}
Every rank-1 matrix in $\ls{LS}_0$ is (up to a scalar) the Kronecker square of some robust eigenvector of $\tT$.
\end{prop}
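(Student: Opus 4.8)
The plan is to use the orthonormality of $\{\matr{u}_i\}_{i\in[r]}$ to give every matrix in $\ls{LS}_0$ an explicit eigendecomposition, from which the rank is immediately visible. Identifying the Kronecker square $\matr{u}_i^{\otimes 2}$ with the $d\times d$ matrix $\matr{u}_i\matr{u}_i^T$, a generic element of $\ls{LS}_0$ is $M=\sum_{i=1}^r c_i\matr{u}_i\matr{u}_i^T$ for scalars $c_1,\ldots,c_r\in\bbR$. First I would record the one computation that drives everything: for each $j\in[r]$, orthonormality gives $M\matr{u}_j=\sum_{i=1}^r c_i\matr{u}_i\langle\matr{u}_i,\matr{u}_j\rangle=c_j\matr{u}_j$. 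Thus every $\matr{u}_j$ with $c_j\neq 0$ is an eigenvector of $M$ with nonzero eigenvalue $c_j$, and these eigenvectors are mutually orthogonal, hence linearly independent. It follows that $\operatorname{rank}(M)$ equals the number of indices $i$ with $c_i\neq 0$.

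Now assume $M$ has rank one. By the count just established, exactly one coefficient is nonzero, say $c_j\neq 0$ and $c_i=0$ for all $i\neq j$. Then $M=c_j\matr{u}_j\matr{u}_j^T$, i.e.\ $M$ is the scalar $c_j$ times $\text{Mat}(\matr{u}_j^{\otimes 2})$. Since $\matr{u}_j$ is a factor of the SOD tensor $\tT$, it is by definition a robust eigenvector of $\tT$, and therefore $M$ is, up to the scalar $c_j$, the Kronecker square of a robust eigenvector of $\tT$. Passing to the length-$d^2$ vector representation via the isomorphism $\bbR^{d\times d}\cong\bbR^{d^2}$ gives $\Vec(M)=c_j\Vec(\matr{u}_j^{\otimes 2})$, the stated conclusion.

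I do not expect a genuine obstacle here: once the matrix/vector identification is in place, the statement reduces to the elementary observation above. The two points that merely need care in the write-up are (i) consistently using the $d\times d$ matrix picture of $\ls{LS}_0$, in which ``rank one'' is the natural notion, versus the length-$d^2$ vector picture in which the Kronecker-square constraint was phrased in Theorem~\ref{thm:characterization}; and (ii) noting that the zero matrix (all $c_i=0$) is not of rank one, so the hypothesis ``$M$ is rank one'' already excludes it. It is the linear independence of $\{\matr{u}_i\}$ (a consequence of orthonormality) that makes $\operatorname{rank}(M)$ coincide with the number of nonzero $c_i$; this is the single structural fact the argument relies on.
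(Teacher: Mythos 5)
Your proof is correct and follows essentially the same route as the paper's: both write a general element of $\ls{LS}_0$ as $\sum_i \alpha_i \matr{u}_i^{\otimes 2}$ and use orthonormality to equate the rank with the number of nonzero coefficients, forcing exactly one to survive under the rank-1 hypothesis. Your version merely spells out the eigendecomposition step that the paper leaves implicit.
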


Proposition \ref{prop:searchspace} implies that rank-1 matrices in $\ls{LS}_0$ are sufficient to find $\{\matr{u}_i\}$.  Note that  a matrix $\matr{M}\in\bbR^{d\times d}$ being rank-1 is equivalent to $\norm{\matr{M}}/\Fnorm{\matr{M}}=1$, since
$\norm{\matr{M}} \leq \Fnorm{\matr{M}} \leq \sqrt{\text{rank $\matr{M}$}} \norm{\matr{M}}$.  Hence, Proposition~\ref{prop:searchspace} immediately suggests the following algorithm:
\begin{equation}\label{eq:noiseless}
\begin{aligned}
&\maximize_{\matr{M} \in\bbR^{d\times d}} \norm{\matr{M}},\\
&\text{subject to } \matr{M}\in \ls{LS}_0 \text{ and }\Fnorm{\matr{M}}=1.
\end{aligned}
\end{equation}

\begin{thm}[Exact Recovery in the Noiseless Case]\label{thm:noiseless}
The optimization problem \eqref{eq:noiseless} has exactly $r$ pairs of local maximizers $\{\pm \matr{M}^*_i\colon i\in[r]\}$. Furthermore, they satisfy the following three properties:
\renewcommand{\leftmargini}{9mm}
\begin{enumerate}[label=(A\arabic*)]
\item $\norm{\matr{M}^*_i}=1$ for all $i\in[r]$. \label{A1}
\item $\left|\langle \Vec(\matr{M}^*_i),\ \Vec(\matr{M}^*_j) \rangle\right|=\delta_{ij}$ for all $i,j\in[r]$, where
$\langle \cdot,\cdot\rangle$ denotes the inner product.
\item There exists a permutation $\pi$ on $[r]$ such that $\matr{M}^*_i=\pm \matr{u}_{\pi(i)}^{\otimes 2}$ for all $i\in[r]$. \label{A3}
\end{enumerate}
\end{thm}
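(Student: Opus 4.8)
The plan is to reduce the matrix program \eqref{eq:noiseless} to a transparent optimization on a Euclidean sphere, using only the orthonormality of the $\matr{u}_i$. First I would record that the $r$ matrices $\text{Mat}(\matr{u}_i^{\otimes 2})=\matr{u}_i\matr{u}_i^T$ form an orthonormal set in the Frobenius inner product, since $\langle \matr{u}_i\matr{u}_i^T,\ \matr{u}_j\matr{u}_j^T\rangle=(\matr{u}_i^T\matr{u}_j)^2=\delta_{ij}$. In particular they are linearly independent and span $\ls{LS}_0$, so every $\matr{M}\in\ls{LS}_0$ has a unique expansion $\matr{M}=\sum_{i=1}^r c_i\matr{u}_i\matr{u}_i^T$ with $\Fnorm{\matr{M}}^2=\sum_{i=1}^r c_i^2$. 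The assignment $\matr{M}\mapsto(c_1,\dots,c_r)$ is thus a linear isometry of $(\ls{LS}_0,\Fnorm{\cdot})$ onto $(\bbR^r,\vectornorm{\cdot})$, carrying the feasible set $\{\matr{M}\in\ls{LS}_0:\Fnorm{\matr{M}}=1\}$ homeomorphically onto the unit sphere $\mathbf{S}^{r-1}\subset\bbR^r$.

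Next I would evaluate the objective. Completing $\{\matr{u}_i\}_{i=1}^r$ to an orthonormal basis of $\bbR^d$, the symmetric matrix $\matr{M}=\sum_i c_i\matr{u}_i\matr{u}_i^T$ has eigenvalues $c_1,\dots,c_r$ (and $0$ otherwise), so $\norm{\matr{M}}=\max_{i\in[r]}|c_i|=\vectornorm{(c_1,\dots,c_r)}_{\infty}$. Hence \eqref{eq:noiseless} is equivalent, via the above homeomorphism, to maximizing $c\mapsto\vectornorm{c}_{\infty}$ over $c\in\mathbf{S}^{r-1}$, and the local maximizers of the two problems are in bijection.

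The core step is to identify the local maximizers of $\vectornorm{c}_{\infty}$ on $\mathbf{S}^{r-1}$ as exactly $\pm\matr{e}_1,\dots,\pm\matr{e}_r$. Since $\vectornorm{c}_{\infty}\le\vectornorm{c}=1$ with equality precisely at the signed coordinate vectors, those are global maximizers. Conversely, if $c^*\in\mathbf{S}^{r-1}$ is not a signed coordinate vector, it has at least two nonzero entries; choosing an index $i$ with $|c_i^*|=m\eqdef\vectornorm{c^*}_{\infty}$ and an index $l\neq i$ with $c_l^*\neq0$, we have $(c_i^*)^2+(c_l^*)^2>m^2$. Freezing the remaining coordinates and letting $(c_i,c_l)$ move along the circle $c_i^2+c_l^2=(c_i^*)^2+(c_l^*)^2$ so that $|c_i|$ increases through $m$, the point stays on $\mathbf{S}^{r-1}$ while $\vectornorm{c}_{\infty}\ge|c_i|>m$; so $c^*$ is not a local maximizer. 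This gives exactly $r$ antipodal pairs of local maximizers, which pull back to $\{\pm\matr{u}_{\pi(i)}^{\otimes 2}:i\in[r]\}$ for a suitable permutation $\pi$, proving \ref{A3}; then \ref{A1} is $\norm{\matr{u}_j\matr{u}_j^T}=1$ and \ref{A2} is the Frobenius orthonormality from the first paragraph (up to the sign introduced by the $\pm$). I expect the only delicate point to be making the perturbation argument fully rigorous: one must check that the perturbed point really stays on $\mathbf{S}^{r-1}$, and that the chosen coordinate $i$ continues to realize the maximum for small perturbations — which it does because $|c_i|$ moves continuously away from $m$ upward while $m$ strictly dominates every coordinate not tied at the maximum, so a careful parametrization of the perturbing circle suffices.
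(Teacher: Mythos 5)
Your proposal is correct and follows essentially the same route as the paper: both reduce \eqref{eq:noiseless} to maximizing the $\ell^\infty$-norm of the coefficient vector over $\mathbf{S}^{r-1}$ and rule out non-coordinate points as local maximizers by perturbing two coordinates along a circle that preserves the sum of squares. The only cosmetic difference is that you identify the maximizers directly from the equality case of $\vectornorm{c}_\infty\le\vectornorm{c}$, whereas the paper invokes Proposition~\ref{prop:searchspace} for that step.
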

A proof is provided in \sref{appendix:proofnoiseless}.  Since $\norm{\matr{M}} \leq \Fnorm{\matr{M}}$ in general and $\matr{M}$ is constrained to satisfy
$\Fnorm{\matr{M}}=1$, property~\ref{A1} implies that every local maximizer $\matr{M}^*_i$ is a global maximizer. Therefore, any algorithm that ensures local optimality is able to  recover exactly the set of matrices $\{\matr{u}^{\otimes 2}_i\}$. As a by-product, the number $r$ of factors is recovered by the number of linearly independent solutions $\{\matr{M}^*_{i}\}$. In addition, property~\ref{A3} indicates $\{\matr{u}_i\}$ can be extracted from the dominant eigenvectors of the optimal solutions $\{\matr{M}^*_i\}$. 

\section{Two-Mode HOSVD via Nearly Rank-1 Matrix Pursuit}\label{sec:nearlySOD}
In most statistical and machine learning applications, the observed SOD tensors are perturbed by noise.  In this section we extend the results in Section~\ref{sec:SOD} to nearly-SOD tensors, which take the form 
\begin{equation}\label{eq:model}
\ntensor=\sum_{i=1}^r\lambda_i\matr{u}_i^{\otimes k}+\tE,
\end{equation}
where the first part on the right hand side is SOD and $\tE\in\bbR^{d\times \cdots \times d}$ is a symmetric but otherwise arbitrary noise tensor satisfying $\norm{\tE}\leq \varepsilon$. Our goal is to estimate the underlying pairs $\{(\matr{u}_i,\lambda_i)\in\bbR^d\times \bbR\}_{i\in[r]}$ from $\tensor{\widetilde T}$.  We note that the number of factors $r \in \mathbb{N}_{+}$ is typically unknown and has to be determined empirically. 
As in most previous works on tensor decomposition, we assume that $r$ is known in Sections~\ref{section:algorithm} and \ref{section:perturbation}. In Section~\ref{section:number}, we describe a rule of thumb for choosing $r$.

\subsection{Algorithm}\label{section:algorithm}
An orthogonal decomposition of a noisy tensor $\ntensor$ does not necessarily exist, and therefore finding rank-1 matrices in the two-mode singular space of $\ntensor$ may not be possible.  Nevertheless, the formulation shown in \eqref{eq:noiseless} suggests a practical way to approximate $\{\matr{u}_i\}$. Specifically, we seek to find a ``nearly'' rank-1 matrix $\matr{\widehat{M}}$ via the following optimization:
\begin{equation}\label{eq:noisy}
\begin{aligned}
&\maximize_{\matr{M} \in\bbR^{d\times d}} \norm{\matr{M}},\\
&\text{subject to } \matr{M}\in \ls{LS}^{(r)} \text{ and }\Fnorm{\matr{M}}=1,
\end{aligned}
\end{equation}
and take the dominant eigenvector $\matr{\widehat u}=\argmax_{\matr{x}\in\mathbf{S}^{d-1}} |\matr{x}^T\matr{\widehat M}\matr{x}|$ as an estimator of $\matr{u}_i$. Recall that $\ls{LS}^{(r)}=\Spanspace\{\matr{a}_1,\ldots, \matr{a}_r\}$, where $\matr{a}_i$ is the $i$th left singular vector of the two-mode unfolding $\ntensor_{(12)(3\ldots k)}$. In principle, the $r$-truncated two-mode singular space $\ls{LS}^{(r)}$ might not be uniquely determined; for example, this occurs when the $r$th and $(r+1)$th singular values of $\ntensor_{(12)(3\ldots k)}$ are equal. We shall make the following assumption on the model \eqref{eq:model} to ensure the uniqueness of $\ls{LS}^{(r)}$:

 \begin{assume}[Signal-to-Noise Ratio]\label{assumption}
In the notation of model~\eqref{eq:model}, assume $\varepsilon\leq  \lambda_{\min}/\left[c_0d^{(k-2)/2}\right]$, where $\lambda_{\min}=\min_{i\in[r]}\lambda_i$ and $c_0>2$ is a constant that does not depend on $d$. 
\end{assume}

We refer to $\lambda_{\min}/\varepsilon$ as the signal-to-noise ratio (SNR) of $\ntensor$, because $\lambda_{\min}$ represents the minimum eigenvalue of the signal tensor $\sum_{i=1}^r\lambda_i\matr{u}_i^{\otimes k}$, whereas $\varepsilon$ represents the maximum eigenvalue of the noise tensor $\tE$. 
In \sref{appendix:proofuniqueness}, we prove the following result:
\begin{prop}\label{prop:uniqueness}
Under Assumption~\ref{assumption}, the two-mode singular space $\ls{LS}^{(r)}$ is uniquely determined. 
\end{prop}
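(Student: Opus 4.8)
The plan is to reduce the claim to a strict gap between two consecutive singular values of the two-mode unfolding. Recall from the remark following the definition of the two-mode singular space that $\ls{LS}^{(r)}$ is uniquely determined as soon as $\mu_r>\mu_{r+1}$, where $\mu_1\geq\mu_2\geq\cdots$ denote the singular values of $\ntensor_{(12)(3\ldots k)}$. So it is enough to show $\mu_r>\mu_{r+1}$ under Assumption~\ref{assumption}.

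First I would split the two-mode unfolding as $\ntensor_{(12)(3\ldots k)}=\matr{S}+\matr{N}$, with $\matr{S}\eqdef\sum_{i=1}^r\lambda_i\Vec(\matr{u}_i^{\otimes 2})\Vec(\matr{u}_i^{\otimes k-2})^T$ the unfolding of the SOD part and $\matr{N}\eqdef\tE_{(12)(3\ldots k)}$ the unfolding of the noise. Since $\{\matr{u}_i\}_{i\in[r]}$ is orthonormal, $\langle\matr{u}_i^{\otimes m},\matr{u}_j^{\otimes m}\rangle=\langle\matr{u}_i,\matr{u}_j\rangle^{m}=\delta_{ij}$, so both $\{\Vec(\matr{u}_i^{\otimes 2})\}_{i\in[r]}$ and $\{\Vec(\matr{u}_i^{\otimes k-2})\}_{i\in[r]}$ are orthonormal systems; hence the displayed expression for $\matr{S}$ is already its SVD and, using $\lambda_i>0$, the matrix $\matr{S}$ has singular values $\lambda_1,\ldots,\lambda_r$ followed by zeros. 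In particular the $r$th singular value of $\matr{S}$ is $\lambda_{\min}$ and its $(r+1)$th is $0$. Weyl's perturbation inequality for singular values then gives $\mu_r\geq\lambda_{\min}-\norm{\matr{N}}$ and $\mu_{r+1}\leq 0+\norm{\matr{N}}$, so $\mu_r-\mu_{r+1}\geq\lambda_{\min}-2\norm{\matr{N}}$.

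It remains to prove the key estimate $\norm{\matr{N}}=\norm{\tE_{(12)(3\ldots k)}}\leq d^{(k-2)/2}\varepsilon$; combined with $\varepsilon\leq\lambda_{\min}/(c_0d^{(k-2)/2})$ and $c_0>2$ this yields $\mu_r-\mu_{r+1}\geq\lambda_{\min}(1-2/c_0)>0$, finishing the proof. To bound $\norm{\matr{N}}$, observe that for any $\matr{Y}\in\bbR^{d^{k-2}}$ with $\Fnorm{\matr{Y}}=1$, contracting $\tE$ against $\matr{Y}$ over modes $3,\ldots,k$ produces a symmetric $d\times d$ matrix $\matr{M}_{\matr{Y}}$, and $\norm{\matr{N}}=\max_{\Fnorm{\matr{Y}}=1}\Fnorm{\matr{M}_{\matr{Y}}}$. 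Since $\matr{M}_{\matr{Y}}$ is $d\times d$, $\Fnorm{\matr{M}_{\matr{Y}}}\leq\sqrt{d}\,\norm{\matr{M}_{\matr{Y}}}=\sqrt{d}\,\max_{\matr{x}\in\mathbf{S}^{d-1}}|\tE(\matr{x},\matr{x},\matr{Y})|$, where $\tE(\matr{x},\matr{x},\matr{Y})$ is the full contraction of $\tE$ with $\matr{x}$ in modes $1,2$ and $\matr{Y}$ in modes $3,\ldots,k$. By Cauchy--Schwarz, $|\tE(\matr{x},\matr{x},\matr{Y})|\leq\Fnorm{\tE(\matr{x},\matr{x},\cdot,\ldots,\cdot)}$, the Frobenius norm of an order-$(k-2)$ symmetric tensor whose spectral norm is at most $\norm{\tE}\leq\varepsilon$. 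I would then invoke the elementary bound $\Fnorm{\tT}\leq d^{(m-1)/2}\norm{\tT}$ for any symmetric order-$m$ tensor $\tT\in\bbR^{d\times\cdots\times d}$ — proved by induction on $m$: a one-mode flattening is a matrix with $d$ rows, so its Frobenius norm is at most $\sqrt{d}$ times its operator norm, which equals $\max_{\matr{x}}\Fnorm{\tT(\matr{x},\cdot,\ldots,\cdot)}$, and contracting one mode of a symmetric tensor does not increase its spectral norm. Applying this with $m=k-2$ gives $\Fnorm{\tE(\matr{x},\matr{x},\cdot,\ldots,\cdot)}\leq d^{(k-3)/2}\varepsilon$, hence $\norm{\matr{N}}\leq\sqrt{d}\cdot d^{(k-3)/2}\varepsilon=d^{(k-2)/2}\varepsilon$, as required.

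The Weyl step and the final arithmetic are routine. The one genuinely technical point is the estimate $\norm{\tE_{(12)(3\ldots k)}}\leq d^{(k-2)/2}\varepsilon$: it relies on the Frobenius-versus-spectral comparison for symmetric tensors and, crucially, on the classical but non-obvious fact that for a symmetric tensor the multilinear spectral norm over arbitrary (not necessarily equal) unit arguments coincides with $\sup_{\matr{x}}|\tE(\matr{x},\ldots,\matr{x})|=\norm{\tE}$ — this is what lets one replace the spectral norm of a partial contraction of $\tE$ by $\norm{\tE}$ in two places above. Everything else is bookkeeping about which modes are merged in which unfolding.
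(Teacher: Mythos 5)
Your proof is correct and follows the same route as the paper: reduce uniqueness of $\ls{LS}^{(r)}$ to the gap $\mu_r>\mu_{r+1}$, apply Weyl's inequality to the unfolded perturbation model, and invoke the bound $\norm{\tE_{(12)(3\ldots k)}}\leq d^{(k-2)/2}\varepsilon$ together with Assumption~\ref{assumption}. The only difference is that you prove the unfolding norm bound from scratch (correctly, modulo the classical Banach-type equality of the multilinear and homogeneous spectral norms for symmetric tensors, which you explicitly flag), whereas the paper simply cites it from \cite{wang2016operator}.
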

Assumption \ref{assumption} implies that the threshold SNR scales as $O(d^{(k-2)/2})$. Although it may appear stringent, this assumption is prevailingly made by most computationally tractable algorithms \cite{richard2014statistical}. In fact, for order-3 tensors, our scaling $O(\sqrt{d})$ is less stringent than the $O(d)$ required by the power iteration approach \cite{anandkumar2014tensor}.

\begin{algorithm}[t]
\hspace*{\SpaceReservedForComments}{}%
\begin{minipage}{\dimexpr\linewidth-\SpaceReservedForComments\relax}
  \caption{Two-mode HOSVD}\label{alg:main}
  \begin{algorithmic}[1]
  \INPUT Noisy tensor $\ntensor$ where $\ntensor=\sum_{i=1}^r \lambda_i\matr{u}^{\otimes k}_i+\tE$, number of factors $r$.
  \OUTPUT $r$ pairs of estimators $(\matr{\widehat u}_i,\widehat \lambda_i)$.
  \vspace{.1cm}
  \State Reshape the tensor $\ntensor$ into a $d^2$-by-$d^{k-2}$ matrix $\ntensor_{(12)(3\ldots k)}$;
  \State Find the top $r$ left singular vectors of $\ntensor_{(12)(3\ldots k)}$, denoted $\{\matr{a}_1,\ldots,\matr{a}_r\}$;
  \State {\bf Initialize} $\ls{LS}^{(r)}=\Spanspace\{\matr{a}_i\colon i\in[r]\}$;
  \For {$i$=1 to r}
  \State Solve $\matr{\widehat M}_i=\argmax\limits_{\matr{M}\in\ls{LS}^{(r)},\Fnorm{\matr{M}}=1}\norm{\matr{M}}$ and $\matr{\widehat u}_i=\argmax\limits_{\matr{u}\in\mathbf{S}^{d-1}}|\matr{u}^T\matr{\widehat M}_i\matr{u}|$;
  \State Update $\matr{\widehat M}_i\leftarrow \ntensor_{(1)(2)(3\ldots k)}(\matr{I},\matr{I},\Vec(\matr{\widehat u}_i^{\otimes (k-2)}))$ and $\matr{\widehat u}_i\leftarrow \argmax\limits_{\matr{u}\in\mathbf{S}^{d-1}}|\matr{u}^T\matr{\widehat M}_i\matr{u}|$;
  \State Return $(\matr{\widehat u}_i,\widehat \lambda_i)\leftarrow (\matr{\widehat u}_i,\ntensor(\matr{\widehat u}_i,\ldots,\matr{\widehat u}_i))$;
  \State Set $\ls{LS}^{(r)}\leftarrow \ls{LS}^{(r)}\cap \left[\Vec(\matr{\widehat u}_i^{\otimes 2})\right]^\perp$;
  \EndFor
  \vspace{-.4cm}
  \end{algorithmic}
\AddNote[black]{1}{2}{\small Two-Mode HOSVD}
\AddNote[black]{5}{5}{\small Nearly Rank-1 Matrix}
\AddNote[black]{6}{7}{\small Post-Processing}
\AddNote[black]{8}{8}{\small Deflation}
\end{minipage}%
\end{algorithm}

Algorithm~\ref{alg:main} outlines our method, which we divide into four parts for ease of reference. The algorithm consists of $r$ successive iterations. At each iteration, we search for a nearly rank-1 matrix $\matr{\widehat M}$ in the space $\ls{LS}^{(r)}$ and compute the top eigenvector of $\matr{\widehat M}$ (or some refined version of $\matr{\widehat M}$). We then deflate the space $\ls{LS}^{(r)}$ and repeat the procedure until a full decomposition is obtained.

In the noiseless case, Algorithm~\ref{alg:main} guarantees the exact recovery of $\{\matr{u}_i\}$. Further study demonstrates that this approach also accurately approximates $\{\matr{u}_i\}$ in the presence of noise satisfying Assumption~\ref{assumption}. Here we discuss a few critical algorithmic aspects 
of the four steps in Algorithm~\ref{alg:main} and defer theoretical analyses to the next section. 
\medskip

{\bf Two-Mode HOSVD}. We perform the $r$-truncated SVD on the two-mode unfolded matrix $\ntensor_{(12)(3\ldots k)}\in\mathbb{R}^{d^2\times d^{k-2}}$. This step typically requires $O(d^kr)$ floating-point operations (flops). When $\ntensor_{(12)(3\ldots k)}$ is a ``fat'' matrix, the computational cost can be reduced by finding the $r$-truncated eigen decomposition of the $d^2$-by-$d^2$ matrix $[\ntensor_{(12)(3\ldots k)}][\ntensor_{(12)(3\ldots k)}]^T$. The latter approach involves $O(d^k)$ flops for matrix multiplication and $O(d^4r)$ flops for eigen decomposition. Hence, the total cost of this component is $O(d^kr)$ for $k=3$, $4$ and $O(d^k)$ for $k\geq 5$.

\medskip
{\bf Nearly Rank-1 Matrix}. This step of the algorithm requires optimization, and the computational cost depends on the choice of the optimization subroutine. For $r=1$ (i.e.,\ single-spike model), no actual optimization is needed since ${\matr{\widehat M}}_1\eqdef\arg\max_{\matr{M}\in\ls{LS}^{(1)},\Fnorm{M}=1}\norm{\matr{M}}$ is simply $\text{Mat}(\matr{a}_1)$. For $r\geq 2$, there exist various subroutines for optimizing $\norm{\matr{M}}$. Here we choose to use coordinate ascent with initialization $\matr{\widehat M}_i^{(0)}=\text{Mat}(\matr{a}_i)$ at the $i$th iteration. Briefly, we introduce two decision variables $\matr{x}\in\mathbf{S}^{d-1}$ and $\matr{\alpha}\in\mathbf{S}^{r-1}$, and rewrite \eqref{eq:noisy} as $\max_{\matr{x}\in\mathbf{S}^{d-1},\matr{\alpha}\in\mathbf{S}^{r-1}}H(\matr{x},\matr{\alpha})$, where $H(\matr{x},\matr{\alpha})\eqdef \matr{x}^T[\alpha_1\text{Mat}(\matr{a}_i)+\cdots+\alpha_r\text{Mat}(\matr{a}_r)]\matr{x}$. Since both $\max_{\matr{x}\in\mathbf{S}^{d-1}}H(\matr{\matr{x}},\cdot)$ and $\max_{\matr{\alpha}\in\mathbf{S}^{r-1}}H(\cdot,\matr{\alpha})$ have closed-form solutions, we update $\matr{x}$ and $\matr{\alpha}$ alternatively until convergence. The computational cost is $O(d^2)+O(d^2r)$ for each update. 
Upon finding $\matr{\widehat M}_i$, we set $\matr{\widehat u}_i=\argmax_{\matr{u}\in\mathbf{S}^{d-1}}|\matr{u}^T\matr{\widehat M}_i\matr{u}|$.

Although the above procedure finds only a local maximum, in practice we found that the converged points generally have nearly optimal objective values. In fact, local optimality itself is not a severe issue in our context, because we seek a total of top $r$ maximizers in $\ls{LS}^{(r)}$ but the order in which they are found is unimportant. When the noise is small enough, at some iterations there could be multiple close-to-optimal choices of $\matr{\widehat M}_i$, with negligible difference between their objective values. In that case,  any of these choices performs equally well in estimating $\matr{u}_i$ and it is thus of little interest to identify a true global optimum.

\medskip
{\bf Post-Processing}. In this step we update $ \matr{\widehat M}_i\leftarrow \ntensor_{(1)(2)(3\ldots k)}(\matr{I},\matr{I},\Vec(\matr{\widehat u}_i^{\otimes (k-2)}))$ and take its top eigenvector as an estimator of $\matr{u}_i$. Here, $\matr{I}$ denotes the $d$-by-$d$ identity matrix. This step requires $O(d^k)$ flops. The updated matrix $\matr{\widehat M}_i$ is arguably still close to rank-1 and provides a better estimation of $\matr{u}_i$. This post-processing is intended to make the algorithm more robust to higher-order noise. 
Intuitively, merging modes $(3,\ldots,k)$ in the earlier stage of the algorithm amplifies the noise (measured by spectral norm) because of the corresponding loss in multilinear structure \cite{wang2016operator}. The updated matrix $\matr{\widehat M}_i=\ntensor(\matr{I},\matr{I},\matr{\widehat u}_i,\ldots,\matr{\widehat u}_i)$ helps to alleviate such an effect because it essentially decouples the combined modes. The theoretical analysis in the next section confirms this intuition. 

\medskip
{\bf Deflation}. Our deflation strategy is to update the search space using the orthogonal complement of $\Vec(\matr{\widehat u}_i^{\otimes 2})$ in $\ls{LS}^{(r)}$, where $\matr{\widehat u}_i$ is the estimator in the current iteration. This can be done by setting the basis vector $\matr{a}_j \leftarrow \matr{a}_j -\langle \Vec(\matr{\widehat u}_i^{\otimes 2}), \matr{a}_j \rangle \Vec(\matr{\widehat u}_i^{\otimes 2})$ for all $j\in [r]$, followed by normalizing $\{\matr{a}_j\}_{j\in[r]}$. The complexity involved is $O(d^2 r)$. An alternative deflation strategy is to update the tensor $\ntensor\leftarrow \ntensor-\widehat \lambda_i\matr{\widehat u}_i^{\otimes k}$ and then repeat lines 1--7 of Algorithm~\ref{alg:main}. A careful analysis reveals that both approaches effectively control the accumulated error. We adopt the former strategy because it allows us to avoid recalculating of the two-mode HOSVD. 

\medskip
The total cost of our algorithm is $O(d^k)$ per iteration. This is comparable to or even lower than that of competing methods. For order $k=3$, TPM has complexity $O(d^3M)$ per iteration (where $M$ is the number of restarts), and OJD has complexity $O(d^3L)$ per iteration (where $L$ is the number of matrices). Furthermore, we have provided theoretical results for an arbitrary order $k$, while neither the TPM nor the OJD method provided a theoretical analysis for higher-order cases.

\subsection{Theoretical Analysis}\label{section:perturbation}
In this section we theoretically analyze the accuracy of the estimators $\{(\matr{\widehat u}_i,\widehat \lambda_i)\in\bbR^d\times \bbR\}_{i\in[r]}$ produced by our algorithm. We first consider extracting one pair $(\matr{\widehat u}_1,\widehat \lambda_1)$ by Algorithm~\ref{alg:main}  and provide the corresponding error bounds. We then show that the accumulated error caused by deflation has only a higher-order effect and is thus negligible. The stability for the full decomposition is rendered through a combination of these two results. 

We develop a series of algebraic techniques based on multilinear functional analysis, Weyl's and Wedin's perturbation theorems~\cite{weyl1949inequalities, wedin1972perturbation}, some of which may be of independent interest. Here we only state the key theorems and lemmas. All proofs are deferred to \sref{sec:proofs}.

Following \cite{richard2014statistical}, we use a loss function that allows the sign-flip error. 

\begin{defn} \label{def:loss}
Let $\matr{a}$, $\matr{b}\in \bbR^d$ be two unit vectors, and define
\[
\Loss(\matr{a},\matr{b})=\min\left(\vectornorm{\matr{a}-\matr{b}}, \vectornorm{\matr{a}+\matr{b}}\right).
\]
If $a$, $b$ are two scalars in $\bbR$, we define $\Loss(a,b)=\min\left(|a-b|, |a+b|\right).$
\end{defn}

The following lemma describes the deviation of the perturbed space $\ls{LS}^{(r)}$ from the true space $\ls{LS}_0$ under small noise. The result can be viewed as an analogue of Wedin's perturbation in the context of two-mode HOSVD. 
\begin{thm}[Perturbation of $\ls{LS}_0$]\label{thm:LS}
Suppose $c_0\geq 10$ in Assumption~\ref{assumption}. Then, 
\begin{equation}\label{eq:pertLS}
\max_{
\substack{
\matr{M}\in\ls{LS}^{(r)},\\
\Fnorm{\matr{M}}=1
}
}\; 
\min_{\matr{M^*}\in\ls{LS}_0}\norm{\matr{M}-\matr{M^*}} \leq {d^{(k-3)\over 2}\varepsilon\over \lambda_{\min}}+o(\varepsilon).
\end{equation}
\end{thm}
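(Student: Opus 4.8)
The plan is to realize the worst-case residual between the perturbed space $\ls{LS}^{(r)}$ and $\ls{LS}_0$ as a \emph{single} contraction of the noise tensor $\tE$, and then to bound that contraction by exploiting multilinear (rather than merely matrix) structure. Write the two-mode unfolding as $\ntensor_{(12)(3\ldots k)}=B+N$, where $B=\sum_{i=1}^r\lambda_i\Vec(\matr u_i^{\otimes 2})\Vec(\matr u_i^{\otimes (k-2)})^T$ is the unfolding of the SOD part and $N=\tE_{(12)(3\ldots k)}$. Since $\{\Vec(\matr u_i^{\otimes 2})\}$ and $\{\Vec(\matr u_i^{\otimes (k-2)})\}$ are each orthonormal (their Gram entries are powers of $\langle\matr u_i,\matr u_j\rangle$), the displayed formula for $B$ is already an SVD: its column space is exactly $\ls{LS}_0$ and its nonzero singular values are $\lambda_1\geq\cdots\geq\lambda_r$ (reordered).

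The first ingredient is the operator-norm bound $\norm{N}\leq d^{(k-2)/2}\varepsilon$ (which also re-derives Proposition~\ref{prop:uniqueness}). I would obtain it from an auxiliary inequality that I expect to be the technical core of the argument: for every symmetric order-$m$ tensor $G\in\bbR^{d\times\cdots\times d}$, $\Fnorm{G}\leq d^{(m-1)/2}\norm{G}$. This follows by induction on $m$: flatten $G$ along one mode into a $d\times d^{m-1}$ matrix, so $\Fnorm{G}\leq\sqrt d$ times its operator norm (the rank is at most $d$); identify that operator norm with $\sup_{\vectornorm{\matr x}=1}\Fnorm{G(\matr x,\cdot,\ldots,\cdot)}$; apply the inductive bound to the symmetric order-$(m-1)$ tensor $G(\matr x,\cdot,\ldots,\cdot)$; and pass back to $\norm{G}$ via Banach's theorem (the spectral norm of a symmetric form equals its fully multilinear supremum). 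Combining with Banach applied to $\tE$ gives $\norm{N}=\sup_{\Fnorm{X}=\Fnorm{Y}=1}\tE(X,Y)\leq\sqrt d\,\sup_{\vectornorm{\matr z}=1}\Fnorm{\tE(\matr z,\matr z,\cdot,\ldots,\cdot)}\leq\sqrt d\cdot d^{(k-3)/2}\varepsilon$. By Weyl's inequality, $\mu_r\geq\lambda_{\min}-\norm{N}\geq\lambda_{\min}(1-1/c_0)>0$ under Assumption~\ref{assumption} with $c_0\geq 10$.

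Now fix any $\matr M\in\ls{LS}^{(r)}$ with $\Fnorm{\matr M}=1$ and write $\Vec(\matr M)=\sum_{i=1}^r\alpha_i\matr a_i$ with $\sum_i\alpha_i^2=1$. Put $\matr c=\sum_{i=1}^r(\alpha_i/\mu_i)\matr b_i\in\bbR^{d^{k-2}}$, so that $\ntensor_{(12)(3\ldots k)}\matr c=\Vec(\matr M)$ and $\vectornorm{\matr c}\leq 1/\mu_r$. The crucial choice is the comparison point $\matr M^\ast\eqdef\text{Mat}(B\matr c)$, which lies in $\ls{LS}_0$ because $B\matr c$ is in the column space of $B$; note this is \emph{not} the Frobenius projection of $\matr M$ onto $\ls{LS}_0$, and that substitution is exactly what buys the extra factor $d^{-1/2}$. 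With this choice $\Vec(\matr M-\matr M^\ast)=\ntensor_{(12)(3\ldots k)}\matr c-B\matr c=N\matr c$, i.e.\ $\matr M-\matr M^\ast$ equals the $d\times d$ matrix $\tE(\matr I,\matr I,\text{Ten}(\matr c))$ obtained by contracting the last $k-2$ modes of $\tE$ against $\matr c$. Hence
$\norm{\matr M-\matr M^\ast}=\sup_{\vectornorm{\matr z}=1}|\tE(\matr z,\matr z,\text{Ten}(\matr c))|\leq\sup_{\vectornorm{\matr z}=1}\Fnorm{\tE(\matr z,\matr z,\cdot,\ldots,\cdot)}\cdot\vectornorm{\matr c}\leq d^{(k-3)/2}\varepsilon/\mu_r$,
using the auxiliary inequality on the order-$(k-2)$ tensor $\tE(\matr z,\matr z,\cdot,\ldots,\cdot)$ together with Banach once more. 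Since $\matr M$ was arbitrary, the left side of \eqref{eq:pertLS} is at most $d^{(k-3)/2}\varepsilon/\mu_r\leq d^{(k-3)/2}\varepsilon/(\lambda_{\min}-d^{(k-2)/2}\varepsilon)=d^{(k-3)/2}\varepsilon/\lambda_{\min}+o(\varepsilon)$, which is the claim.

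The main obstacle is the auxiliary Frobenius-versus-spectral inequality and the disciplined use of Banach's theorem to move between the partially contracted forms $\tE(\matr z,\matr z,\cdot,\ldots,\cdot)$ and the genuine tensor spectral norm of $\tE$; keeping the bookkeeping of the $\Vec/\text{Mat}/\text{Ten}$ identifications and of which modes are contracted against what is where the care concentrates. The perturbation-theoretic content—Weyl for the singular-value gap, and the identity $\matr a_i=\mu_i^{-1}\ntensor_{(12)(3\ldots k)}\matr b_i$ that underlies the choice of $\matr c$ and $\matr M^\ast$ (in the spirit of Wedin's $\sin\Theta$ theorem)—is comparatively routine.
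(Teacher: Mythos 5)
Your proposal is correct, and it proves the stated bound. The central mechanism is the same as the paper's: you realize every unit-Frobenius $\matr M\in\ls{LS}^{(r)}$ as a contraction of $\ntensor$ against an element $\matr c$ of the right singular space (this is exactly the paper's Lemma~\ref{lem:LS}, in unnormalized form), isolate the noise contraction $\tE_{(1)(2)(3\ldots k)}(\matr I,\matr I,\cdot)$ as the deviation from $\ls{LS}_0$, and control it by the unfolding inequality $\norm{\tE_{(1)(2)(3\ldots k)}}\leq d^{(k-3)/2}\norm{\tE}$, which you re-derive from Banach's theorem rather than citing \cite{wang2016operator}. Where you genuinely diverge is in how the size of the contraction vector is controlled. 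The paper normalizes $\matr b_{\matr M}$ to a unit vector, writes $\matr M$ as a ratio, and lower-bounds the denominator $\Fnorm{\ntensor_{(1)(2)(3\ldots k)}(\matr I,\matr I,\matr b_{\matr M})}$ via a triangle inequality plus a Wedin $\sin\Theta$ bound on the right singular space (Lemma~\ref{lem:RS}); you instead keep $\matr c=\sum_i(\alpha_i/\mu_i)\matr b_i$ unnormalized, use the exact identity $\ntensor_{(12)(3\ldots k)}\matr c=\Vec(\matr M)$ together with $\vectornorm{\matr c}\leq 1/\mu_r$, and close with Weyl's inequality $\mu_r\geq\lambda_{\min}-d^{(k-2)/2}\varepsilon$. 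Your choice of comparison point $\matr M^*=\text{Mat}(B\matr c)$ (not the Frobenius projection) makes the residual exactly $N\matr c$, which is a cleaner bookkeeping device and lets you skip Wedin for this theorem entirely. The trade-off is that the paper's Lemma~\ref{lem:RS} and its quantitative corollaries (e.g., Corollary~\ref{cor:RSvalue}, which feeds the non-asymptotic constants $1.13/c_0$ in Corollary~\ref{cor:deviationvalue} used downstream in Lemmas~\ref{lem:bound}--\ref{lem:deflation}) do additional work elsewhere in the argument, so if you wanted to reproduce the full chain of results you would still need a finite-$c_0$ version of your bound, which your $\mu_r\geq\lambda_{\min}(1-1/c_0)$ estimate in fact already supplies.
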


We choose to use the matrix spectral norm to measure the closeness of $\ls{LS}^{(r)}$ and $\ls{LS}_0$. Although the Frobenius norm may seem an easier choice, it poorly reflects the multilinear nature of the tensor and leads to suboptimal results. Note that the dimension factor vanishes when $k=3$ (i.e.,\ order-3 tensors).

The following lemma guarantees the existence of a nearly rank-1 matrix in the perturbed space $\ls{LS}^{(r)}$.

\begin{lem}[Nearly Rank-1 Matrix] \label{lem:max}
Under Assumption \ref{assumption}, we have
\begin{equation}\label{eq:maxrank}
\max_{
\substack{
\matr{M}\in\ls{LS}^{(r)},\\
\Fnorm{\matr{M}}=1
}
} \norm{\matr{M}}\geq 1-{d^{(k-2)\over 2}\over \lambda_{\min}}\varepsilon+o(\varepsilon).
\end{equation}
\end{lem}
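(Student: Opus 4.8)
\noindent\textbf{Proof plan for Lemma~\ref{lem:max}.}
The plan is to exhibit a single explicit feasible matrix in $\ls{LS}^{(r)}$ that is already nearly rank-$1$ --- the Frobenius-normalized projection onto $\ls{LS}^{(r)}$ of the rank-$1$ matrix $\matr{u}_1\matr{u}_1^T$ (any $\matr{u}_i$ would do) --- and to control the spectral-norm error introduced by that projection via a Wedin-type bound on the two-mode unfolded matrix. First I would pass to the matrix picture: write $\ntensor_{(12)(3\ldots k)}=A+E$ with $A=\sum_{i=1}^r\lambda_i\Vec(\matr{u}_i^{\otimes 2})\Vec(\matr{u}_i^{\otimes (k-2)})^T$ and $E=\tE_{(12)(3\ldots k)}$. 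Since $\{\matr{u}_i\}$ are orthonormal, so are $\{\Vec(\matr{u}_i^{\otimes 2})\}$ and $\{\Vec(\matr{u}_i^{\otimes (k-2)})\}$, hence this expression for $A$ is already an SVD: $A$ has rank $r$, nonzero singular values $\lambda_1,\ldots,\lambda_r$, and left singular space exactly $\ls{LS}_0$. Let $P_0$ and $P$ denote the orthogonal projections of $\bbR^{d^2}$ onto $\ls{LS}_0$ and onto $\ls{LS}^{(r)}$, the latter being well defined by Proposition~\ref{prop:uniqueness}.

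Next I would bound the deviation of the perturbed singular space from $\ls{LS}_0$. The operator-norm-of-unfolding estimate of~\cite{wang2016operator} --- obtained by taking the SVD $\matr{X}=\sum_j\sigma_j\matr{p}_j\matr{q}_j^T$ of a unit-Frobenius test matrix on the merged mode $(12)$, using that contracting two modes of the symmetric tensor $\tE$ against unit vectors leaves the spectral norm $\leq\varepsilon$, converting from spectral to Frobenius norm on the resulting order-$(k-2)$ tensor at a cost of $d^{(k-3)/2}$, and applying Cauchy--Schwarz through $\sum_j\sigma_j\leq\sqrt{d}$ --- gives $\norm{E}\leq d^{(k-2)/2}\varepsilon$. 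Under Assumption~\ref{assumption} with $c_0>2$ this forces $\norm{E}\leq\lambda_{\min}/c_0<\lambda_{\min}/2\leq\lambda_r/2$, so by Weyl's inequality $\sigma_r(A+E)\geq\lambda_r-\norm{E}$ and $\sigma_{r+1}(A+E)\leq\norm{E}$; the gap $\lambda_r-\norm{E}$ is positive, and Wedin's $\sin\Theta$ theorem applied to the top-$r$ left singular subspaces of $A$ and $A+E$ yields
\[
\norm{P-P_0}\ \leq\ \frac{\norm{E}}{\lambda_r-\norm{E}}\ =\ \frac{\norm{E}}{\lambda_r}+O\!\big(\norm{E}^2\big)\ \leq\ \frac{d^{(k-2)/2}}{\lambda_{\min}}\,\varepsilon+o(\varepsilon),
\]
where we used $\lambda_r\geq\lambda_{\min}$ and $\norm{E}^2=O(\varepsilon^2)=o(\varepsilon)$.

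Finally I would assemble the estimate. Set $\matr{v}=\Vec(\matr{u}_1^{\otimes 2})\in\ls{LS}_0$, so $P_0\matr{v}=\matr{v}$ and $\vectornorm{\matr{v}}=1$, and put $\matr{\widehat M}=\text{Mat}(P\matr{v})/\Fnorm{P\matr{v}}$, which lies in $\ls{LS}^{(r)}$ with unit Frobenius norm and is therefore admissible for the maximum in~\eqref{eq:maxrank}. Since $\Fnorm{P\matr{v}}\leq\vectornorm{\matr{v}}=1$ we get $\norm{\matr{\widehat M}}\geq\norm{\text{Mat}(P\matr{v})}$; writing $\text{Mat}(P\matr{v})=\matr{u}_1\matr{u}_1^T-\text{Mat}\big((P_0-P)\matr{v}\big)$ and using $\norm{\matr{u}_1\matr{u}_1^T}=1$, the elementary bound $\norm{\text{Mat}(\matr{w})}\leq\Fnorm{\text{Mat}(\matr{w})}=\vectornorm{\matr{w}}$ for $\matr{w}\in\bbR^{d^2}$, and $\vectornorm{(P_0-P)\matr{v}}\leq\norm{P-P_0}$, the triangle inequality gives
\[
\max_{\matr{M}\in\ls{LS}^{(r)},\,\Fnorm{\matr{M}}=1}\norm{\matr{M}}\ \geq\ \norm{\matr{\widehat M}}\ \geq\ 1-\norm{P-P_0}\ \geq\ 1-\frac{d^{(k-2)/2}}{\lambda_{\min}}\,\varepsilon+o(\varepsilon),
\]
which is~\eqref{eq:maxrank}. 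The step I expect to be the main obstacle is the unfolded-noise bound $\norm{\tE_{(12)(3\ldots k)}}\leq d^{(k-2)/2}\varepsilon$: merging modes $(1,2)$ destroys part of the multilinear structure and inflates the spectral norm, and showing the inflation factor is of order exactly $d^{(k-2)/2}$ --- so that it is consistent with the signal-to-noise threshold in Assumption~\ref{assumption} --- is the delicate point. A lesser subtlety is keeping the Wedin remainder at $o(\varepsilon)$ rather than $O(\varepsilon)$, which I handle through the first-order expansion of $1/(\lambda_r-\norm{E})$; the remaining manipulations are routine applications of Weyl's and Wedin's theorems together with $\norm{\cdot}\leq\Fnorm{\cdot}$.
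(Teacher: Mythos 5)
Your proposal is correct and follows essentially the same route as the paper: both arguments project the rank-one matrix $\matr{u}_1^{\otimes 2}\in\ls{LS}_0$ onto the perturbed space $\ls{LS}^{(r)}$, note that the Frobenius norm of the projection is at most $1$, and bound the spectral-norm loss by the sine of the canonical angle between $\ls{LS}_0$ and $\ls{LS}^{(r)}$ via Weyl/Wedin together with the unfolding bound $\norm{\tE_{(12)(3\ldots k)}}\leq d^{(k-2)/2}\varepsilon$. Your $\norm{P-P_0}$ is exactly the paper's $\sin\Theta(\ls{LS}_0,\ls{LS}^{(r)})$ (equal-dimensional subspaces), and your quantity $(P_0-P)\matr{v}=(I-P)\matr{v}$ is the paper's $\matr{M}_1\big|_{\ls{LS}^{\perp}}$, so the two write-ups coincide step for step.
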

In fact, we can show that there exist at least $r$ linearly independent matrices (here we view matrices as elements of $\bbR^{d^2}$) that satisfy the above inequality. 
More generally, at $i$th iteration of Algorithm~\ref{alg:main}, there are at least $r-i+1$ linearly independent nearly rank-1 matrices in the search space.

The following lemma provides an error bound for the estimator $\matr{\widehat u}_i$ extracted from a nearly rank-1 matrix in $\ls{LS}^{(r)}$. 
\begin{lem}[Error Bound for the First Component]\label{lem:bound}
Let $\matr{\widehat M}_1$ and $\matr{\widehat u}_1$ respectively be the matrix and the vector defined in line 5 of Algorithm~\ref{alg:main}. Suppose $c_0\geq 10$ in Assumption~\ref{assumption}. Then, there exists $i^*\in[r]$ such that
\begin{equation}\label{eq:boundmaintext}
\Loss(\matr{\widehat u}_1,\matr{u}_{i^*}) \leq  {d^{(k-3)\over 2}\varepsilon\over \lambda_{\min}}+o(\varepsilon).
\end{equation}
\end{lem}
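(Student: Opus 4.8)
The plan is to approximate the nearly-rank-$1$ matrix $\matr{\widehat M}_1$ by a matrix $\matr{M}^*$ lying in the noiseless space $\ls{LS}_0$ — where the rank-$1$ structure is completely understood (cf.\ Proposition~\ref{prop:searchspace}) — to show that this $\matr{M}^*$ has a dominant, near-unit eigenvalue whose eigenvector is one of the true factors $\matr{u}_{i^*}$, and finally to transfer this to $\matr{\widehat u}_1$ via an eigenvector-perturbation bound. Concretely, set $\eta=d^{(k-3)/2}\varepsilon/\lambda_{\min}$ and $\delta=d^{(k-2)/2}\varepsilon/\lambda_{\min}$; both are $O(\varepsilon)$ for fixed $d$ and are bounded by $1/c_0$ under Assumption~\ref{assumption}. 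Since $\matr{\widehat M}_1$ attains the maximum in \eqref{eq:noisy}, Lemma~\ref{lem:max} gives $\norm{\matr{\widehat M}_1}\ge 1-\delta+o(\varepsilon)$, and Theorem~\ref{thm:LS}, applied to $\matr{\widehat M}_1$ (a unit-Frobenius element of $\ls{LS}^{(r)}$), produces a matrix $\matr{M}^*\in\ls{LS}_0$ with $\norm{\matr{\widehat M}_1-\matr{M}^*}\le\eta+o(\varepsilon)$.

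The next step is to analyze the spectrum of $\matr{M}^*$. Since $\ls{LS}_0=\Spanspace\{\matr{u}_i^{\otimes2}:i\in[r]\}$ and $\{\matr{u}_i\}$ is orthonormal, identifying $\matr{u}_i^{\otimes2}$ with the rank-$1$ matrix $\matr{u}_i\matr{u}_i^T$ shows these are a Frobenius-orthonormal, pairwise-commuting family, so $\matr{M}^*=\sum_{i=1}^r c_i\,\matr{u}_i\matr{u}_i^T$ is symmetric with eigenpairs $(c_i,\matr{u}_i)$ (and $0$ on the orthogonal complement), $\norm{\matr{M}^*}=\max_i|c_i|$, and $\Fnorm{\matr{M}^*}^2=\sum_i c_i^2$. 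Pick $i^*\in\argmax_i|c_i|$. The reverse triangle inequality for $\norm{\cdot}$ gives $|c_{i^*}|=\norm{\matr{M}^*}\ge\norm{\matr{\widehat M}_1}-\norm{\matr{\widehat M}_1-\matr{M}^*}\ge 1-\delta-\eta+o(\varepsilon)=1-O(\varepsilon)$. In the other direction, since $\matr{\widehat M}_1-\matr{M}^*$ is a $d\times d$ matrix, $\Fnorm{\matr{M}^*}\le\Fnorm{\matr{\widehat M}_1}+\sqrt d\,\norm{\matr{\widehat M}_1-\matr{M}^*}\le 1+\sqrt d\,\eta+o(\varepsilon)=1+\delta+o(\varepsilon)$; here the $\sqrt d$ loss is harmless precisely because $\sqrt d\cdot d^{(k-3)/2}=d^{(k-2)/2}$ is the SNR rate of Assumption~\ref{assumption}. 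Combining, $\sum_{i\neq i^*}c_i^2=\Fnorm{\matr{M}^*}^2-c_{i^*}^2=O(\varepsilon)$, so $c_{i^*}$ is the \emph{unique} eigenvalue of $\matr{M}^*$ of magnitude $1-o(1)$ while all other eigenvalues (including the zeros) have magnitude $O(\sqrt\varepsilon)$; in particular the spectral gap isolating $c_{i^*}$ is $g\ge 1-o(1)$.

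Finally I would run a Weyl/Wedin ($\sin\theta$) argument. Write $\matr{\widehat M}_1=\matr{M}^*+\matr E$ with $\norm{\matr E}\le\eta+o(\varepsilon)$. Weyl's inequality pairs the spectra of $\matr{\widehat M}_1$ and $\matr{M}^*$ up to $\norm{\matr E}$, so $\matr{\widehat M}_1$ has exactly one eigenvalue of magnitude $1-o(1)$, within $\norm{\matr E}$ of $c_{i^*}$; hence $\matr{\widehat u}_1=\argmax_{\matr{u}\in\mathbf{S}^{d-1}}|\matr{u}^T\matr{\widehat M}_1\matr{u}|$ is the eigenvector of $\matr{\widehat M}_1$ attached to the perturbed $\matr{u}_{i^*}$-eigenvalue. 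Wedin's theorem (in the symmetric case) then gives $\sin\angle(\matr{\widehat u}_1,\matr{u}_{i^*})\le\norm{\matr E}/(g-\norm{\matr E})=\eta/(1-o(1))+o(\varepsilon)=\eta+o(\varepsilon)$, and converting to the loss via $\Loss(\matr{a},\matr{b})^2=2\bigl(1-|\langle\matr{a},\matr{b}\rangle|\bigr)=2\sin^2\angle(\matr{a},\matr{b})/\bigl(1+|\cos\angle(\matr{a},\matr{b})|\bigr)$, which equals $\sin^2\angle(\matr{a},\matr{b})\,(1+o(1))$ when the angle is $o(1)$, we obtain $\Loss(\matr{\widehat u}_1,\matr{u}_{i^*})=\sin\angle(\matr{\widehat u}_1,\matr{u}_{i^*})\,(1+o(1))\le d^{(k-3)/2}\varepsilon/\lambda_{\min}+o(\varepsilon)$, which is \eqref{eq:boundmaintext}. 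I expect the main obstacle to be the constant bookkeeping in these last two paragraphs: one must verify (i) that the $\sqrt d$ loss in passing from the spectral bound of Theorem~\ref{thm:LS} to a Frobenius bound on $\matr{M}^*$ does not degrade the rate — it does not, by the $\sqrt d\cdot d^{(k-3)/2}=d^{(k-2)/2}$ remark — and (ii) that the denominator $g-\norm{\matr E}$ in the $\sin\theta$ estimate is $1-o(1)$, not merely bounded away from $0$, so that the leading constant in \eqref{eq:boundmaintext} is exactly $1$. A subsidiary point is checking that $\matr{\widehat u}_1$, defined through the \emph{maximum absolute} Rayleigh quotient, genuinely tracks the $\matr{u}_{i^*}$-direction, which again uses the strict dominance $|c_{i^*}|=1-o(1)$ over the remaining $O(\sqrt\varepsilon)$ eigenvalues established above.
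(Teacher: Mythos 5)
Your proposal is correct and follows essentially the same route as the paper: invoke Lemma~\ref{lem:max} for near-rank-1-ness, Theorem~\ref{thm:LS} to produce $\matr{M}^*\in\ls{LS}_0$ within spectral distance $d^{(k-3)/2}\varepsilon/\lambda_{\min}$, then a Wedin $\sin\Theta$ bound across a spectral gap of size $1-o(1)$, and finally the $\Loss$-versus-$\sin$ conversion. The only (immaterial) difference is that you establish the gap from the spectrum of $\matr{M}^*$ via $\Fnorm{\matr{M}^*}\le 1+\sqrt{d}\,\eta$, whereas the paper reads the gap off the eigenvalues of $\matr{\widehat M}_1$ itself using $\Fnorm{\matr{\widehat M}_1}=1$ exactly.
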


We note that in Lemma~\ref{lem:bound}, the perturbation bound for the estimator does not depend on the eigen-gaps (i.e., $|\lambda_i-\lambda_j|$).
This feature fundamentally distinguishes tensor decompositions from matrix decompositions. Also, note that an amplification of the error (by a polynomial factor of $d$) is observed in the bound. The following lemma suggests a simple post-processing step which improves the error bound.

\begin{lem}[Post-Processing] \label{lem:post-processing}
Let $\matr{\widehat M}_1$ and $\matr{\widehat u}_1$ respectively be the matrix and the vector defined in line 6 of Algorithm~\ref{assumption}.  Further, define $\widehat \lambda_1=\ntensor(\matr{\widehat u}_1,\ldots, \matr{\widehat u}_1)$ and let $i^*\in[r]$ be the index that appears in Lemma \ref{lem:bound}. 
Suppose $c_0\geq \max\big\{10,{3(k-2)\over 2}+{6\lambda_{\max}\over \lambda_{\min}}\big\}$ in Assumption~\ref{assumption}. Then, 
\begin{equation}\label{eq:firstbound}
\Loss(\matr{\widehat u}_1,\matr{u}_{i^*}) \leq {\varepsilon \over \lambda_{i^*}}+o(\varepsilon), \quad
\Loss(\widehat\lambda_1,\lambda_{i^*}) \leq \varepsilon+o(\varepsilon).
\end{equation}
\end{lem}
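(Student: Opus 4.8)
The plan is to push the coarse estimate of Lemma~\ref{lem:bound} through the post-processing update, exploiting that contracting $\ntensor$ against $\matr{I},\matr{I},\Vec(\matr{\widehat u}_1^{\otimes(k-2)})$ ``decouples'' the merged modes and brings the relevant noise down from the unfolding scale to $\norm{\tE}\le\varepsilon$. Write $\matr{v}$ for the vector $\matr{\widehat u}_1$ coming out of line~5 and let $i^*$ be the index of Lemma~\ref{lem:bound}; after possibly replacing $\matr{v}$ by $-\matr{v}$ we have $\vectornorm{\matr{v}-\matr{u}_{i^*}}\le\eta$ with $\eta\eqdef d^{(k-3)/2}\varepsilon/\lambda_{\min}+o(\varepsilon)$, and Assumption~\ref{assumption} forces $\eta=o(1)$, in particular $\eta<\sqrt2/2$, so $\matr{u}_{i^*}$ is the unique factor within distance $\sqrt2/2$ of $\matr{v}$. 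Expanding the post-processed matrix $\matr{\widehat M}_1=\ntensor(\matr{I},\matr{I},\matr{v},\ldots,\matr{v})$ (with $k-2$ copies of $\matr{v}$),
\[
\matr{\widehat M}_1=\sum_{i=1}^r\lambda_i\langle\matr{u}_i,\matr{v}\rangle^{k-2}\matr{u}_i\matr{u}_i^T+\matr{E}_1,\qquad \matr{E}_1\eqdef\tE(\matr{I},\matr{I},\matr{v},\ldots,\matr{v}),
\]
where $\norm{\matr{E}_1}\le\norm{\tE}\le\varepsilon$ by the standard fact that a symmetric tensor's multilinear spectral norm equals its diagonal one, so partial contractions against unit vectors do not increase it.

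Next I would read off the block structure relative to $\Spanspace\{\matr{u}_{i^*}\}\oplus\matr{u}_{i^*}^\perp$. From $\vectornorm{\matr{v}}=1$ and orthonormality, $\langle\matr{u}_{i^*},\matr{v}\rangle=1-\tfrac12\vectornorm{\matr{v}-\matr{u}_{i^*}}^2\in[1-\eta^2/2,1]$, while $\langle\matr{u}_i,\matr{v}\rangle=\langle\matr{u}_i,\matr{v}-\matr{u}_{i^*}\rangle$ for $i\ne i^*$, so $\max_{i\ne i^*}|\langle\matr{u}_i,\matr{v}\rangle|\le\eta$ and $\sum_{i\ne i^*}\langle\matr{u}_i,\matr{v}\rangle^2\le\eta^2$. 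Hence
\[
\matr{\widehat M}_1=\sigma'\,\matr{u}_{i^*}\matr{u}_{i^*}^T+\bigl(\matr{u}_{i^*}\matr{b}^T+\matr{b}\matr{u}_{i^*}^T\bigr)+\matr{C},
\]
where $\sigma'=\lambda_{i^*}\langle\matr{u}_{i^*},\matr{v}\rangle^{k-2}+\matr{u}_{i^*}^T\matr{E}_1\matr{u}_{i^*}=\lambda_{i^*}+O(\varepsilon)+O(\lambda_{i^*}\eta^2)$, the mixing vector $\matr{b}=(\matr{I}-\matr{u}_{i^*}\matr{u}_{i^*}^T)\matr{E}_1\matr{u}_{i^*}$ has $\vectornorm{\matr{b}}\le\varepsilon$, and $\matr{C}$ is supported on $\matr{u}_{i^*}^\perp$ with $\norm{\matr{C}}\le\lambda_{\max}\eta^{k-2}+\varepsilon=o(1)$. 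The point is that the entire signal sum is block diagonal --- the $i\ne i^*$ terms act inside $\matr{u}_{i^*}^\perp$ --- so the only coupling between $\matr{u}_{i^*}$ and its complement comes from the noise, through $\matr{b}$. Treating $\sigma'\matr{u}_{i^*}\matr{u}_{i^*}^T+\matr{C}$ as the unperturbed matrix (dominant eigenvector $\matr{u}_{i^*}$, spectral gap $\ge|\sigma'|-\norm{\matr{C}}=\lambda_{i^*}(1-o(1))$) and $\matr{u}_{i^*}\matr{b}^T+\matr{b}\matr{u}_{i^*}^T$ as the perturbation (norm $\vectornorm{\matr{b}}\le\varepsilon$), a $\sin\theta$-type perturbation bound in the style of Wedin's theorem gives $\sin\angle(\matr{\widehat u}_1,\matr{u}_{i^*})\le\vectornorm{\matr{b}}/(|\sigma'|-\norm{\matr{C}}-\vectornorm{\matr{b}})\le\varepsilon/\lambda_{i^*}+o(\varepsilon)$; since this angle is $o(1)$, $\Loss(\matr{\widehat u}_1,\matr{u}_{i^*})=2\sin(\angle/2)\le\varepsilon/\lambda_{i^*}+o(\varepsilon)$, the first claim. (For odd $k$ the sign in $\matr{v}^{\otimes(k-2)}$ is immaterial because line~6 maximizes $|\matr{u}^T\matr{\widehat M}_1\matr{u}|$ and $|\sigma'|$ dominates $\norm{\matr{C}}$, so the magnitude-largest eigenvector is the one near $\pm\matr{u}_{i^*}$.)

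For the eigenvalue, set $\zeta\eqdef\Loss(\matr{\widehat u}_1,\matr{u}_{i^*})\le\varepsilon/\lambda_{i^*}+o(\varepsilon)$ and plug the post-processed $\matr{\widehat u}_1$ into $\widehat\lambda_1=\ntensor(\matr{\widehat u}_1,\ldots,\matr{\widehat u}_1)=\sum_i\lambda_i\langle\matr{u}_i,\matr{\widehat u}_1\rangle^k+\tE(\matr{\widehat u}_1,\ldots,\matr{\widehat u}_1)$. The last term is at most $\varepsilon$ in absolute value; $\langle\matr{u}_{i^*},\matr{\widehat u}_1\rangle^k=\pm(1-O(\zeta^2))$; and, as before, $\sum_{i\ne i^*}|\langle\matr{u}_i,\matr{\widehat u}_1\rangle|^k\le\bigl(\max_{i\ne i^*}|\langle\matr{u}_i,\matr{\widehat u}_1\rangle|\bigr)^{k-2}\sum_{i\ne i^*}\langle\matr{u}_i,\matr{\widehat u}_1\rangle^2\le\zeta^k$, so the off-diagonal signal contributes $O(\lambda_{\max}\zeta^k)=o(\varepsilon)$. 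Therefore $\widehat\lambda_1=\pm\lambda_{i^*}+O(\varepsilon)$ with $O(\lambda_{i^*}\zeta^2)$ and $O(\lambda_{\max}\zeta^k)$ corrections that are $o(\varepsilon)$, so $\Loss(\widehat\lambda_1,\lambda_{i^*})\le\varepsilon+o(\varepsilon)$. What remains is quantitative bookkeeping: the strengthened constant $c_0\ge\max\{10,\tfrac{3(k-2)}{2}+6\lambda_{\max}/\lambda_{\min}\}$ is precisely what guarantees that each contamination term above --- $\lambda_{i^*}\eta^2$, $\lambda_{\max}\eta^{k-2}$, $\varepsilon\eta$, $\varepsilon^2/\lambda_{i^*}$, $\lambda_{\max}\zeta^k$ --- is genuinely $o(\varepsilon)$, equivalently that $|\sigma'|-\norm{\matr{C}}$ stays $\lambda_{i^*}(1-o(1))$ and $\matr{u}_{i^*}$ stays the unambiguous nearest factor. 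I expect the main obstacle to be this control together with the one conceptual point that keeps the leading constant clean: recognizing that the degenerate, off-diagonal signal mass only shrinks the eigen-gap by an $o(1)$ amount rather than entering the numerator of the perturbation bound, so that no $\lambda_{\max}/\lambda_{\min}$ factor multiplies $\varepsilon$ in the final estimates; the symmetric-tensor norm identity $\norm{\tE(\matr{I},\matr{I},\matr{v},\ldots,\matr{v})}\le\norm{\tE}$ is the other ingredient that must be invoked (or established) so that the noise is measured at the tensor, not the unfolding, scale.
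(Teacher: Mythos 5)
Your proposal is correct and follows essentially the same route as the paper: expand $\ntensor_{(1)(2)(3\ldots k)}(\matr{I},\matr{I},\Vec(\matr{\widehat u}_1^{\otimes (k-2)}))$ into the signal $\sum_i\lambda_i\langle\matr{\widehat u}_1,\matr{u}_i\rangle^{k-2}\matr{u}_i^{\otimes 2}$ plus a noise matrix of spectral norm at most $\varepsilon$, use the coarse bound of Lemma~\ref{lem:bound} to show the $i^*$ coefficient dominates with gap $\lambda_{i^*}(1-o(1))$, and then apply the Wedin-type corollary for the vector and a direct multilinear expansion for $\widehat\lambda_1$. The only cosmetic difference is your finer block split that isolates the off-diagonal noise $\matr{b}$ as the sole perturbation; the paper simply treats the entire noise matrix (norm $\leq\varepsilon$) as the perturbation against the pure signal $\sum_i\eta_i\matr{u}_i^{\otimes 2}$, which yields the same leading term $\varepsilon/\lambda_{i^*}$ while subtracting $\varepsilon$ from the gap only once, making the nonasymptotic verification $\Delta-\varepsilon>0$ under the stated $c_0$ marginally cleaner than your $|\sigma'|-\norm{\matr{C}}-\vectornorm{\matr{b}}>0$.
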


Compared to Lemma~\ref{lem:bound}, the leading terms in the above error bounds no longer scale with $d$. 
Finally, a careful analysis shows that the estimation error does not amplify through deflation:
\begin{lem} [Deflation of the Singular Space]\label{lem:deflation}
Suppose $c_0\geq \max\{10, {3(k-2)\over 2}+{6\lambda_{\max}\over \lambda_{\min}}\}$ in Assumption~\ref{assumption}, and 
for a fixed subset $X\subset[r]$, suppose there exists a set $\{\matr{\widehat u}_i\}_{i\in X}$ of unit vectors satisfying
\[
\Loss(\matr{\widehat u}_i,\matr{u}_{\pi(i)})\leq {2\varepsilon\over \lambda_{\pi(i)}}+o(\varepsilon),\quad \text{for all } i\in X,
\]
where $\pi$ is a permutation on $[r]$. 
Then,
\[
\max_{\substack{\matr{M}\in \ls{LS}^{(r)}(X),\\ \Fnorm{\matr{M}}=1}} \; \min_{\matr{M}^*\in\ls{LS}_0(X)} \norm{\matr{M}-\matr{M}^*}\leq {2d^{(k-3)\over 2}\varepsilon\over \lambda_{\min}}+o(\varepsilon),
\]
where $\ls{LS}^{(r)}(X)$ and $\ls{LS}_0(X)$ are residual spaces defined as $\ls{LS}^{(r)}(X)\eqdef\ls{LS}^{(r)}\cap\Spanspace\{ \matr{\widehat u}_i^{\otimes 2}: i\in X\}^{\perp}$ and $\ls{LS}_0(X)\eqdef\Spanspace\{\matr{u}_{\pi(i)}^{\otimes 2}\colon i\in [r]\backslash X\}$. 
\end{lem}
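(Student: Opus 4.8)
The plan is to deduce this from Theorem~\ref{thm:LS} by showing that the additional deflation constraints inflate the perturbation bound by at most a factor of two. Fix any $\matr{M}\in\ls{LS}^{(r)}(X)$ with $\Fnorm{\matr{M}}=1$ (when $X=[r]$ there may be no such $\matr{M}$, in which case the left-hand side is a supremum over an empty set and the bound holds vacuously), and denote by $\eta=d^{(k-3)/2}\varepsilon/\lambda_{\min}+o(\varepsilon)$ the right-hand side of \eqref{eq:pertLS}; note that Assumption~\ref{assumption} with $c_0\geq10$ forces $\eta\leq1$. Since $\ls{LS}^{(r)}(X)\subseteq\ls{LS}^{(r)}$, Theorem~\ref{thm:LS} supplies $\matr{M}^*\in\ls{LS}_0$ with $\norm{\matr{M}-\matr{M}^*}\leq\eta$. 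As $\{\matr{u}_j\matr{u}_j^T\}_{j\in[r]}$ is orthonormal in the Frobenius inner product, write $\matr{M}^*=\sum_{j\in[r]}c_j\matr{u}_j\matr{u}_j^T$ with $c_j=\matr{u}_j^T\matr{M}^*\matr{u}_j$, so $|c_j|\leq\norm{\matr{M}^*}\leq\norm{\matr{M}-\matr{M}^*}+\norm{\matr{M}}\leq\eta+\Fnorm{\matr{M}}\leq2$. Suppose for the moment that $|c_{\pi(i)}|\leq\eta+o(\varepsilon)$ for every $i\in X$. Then $\matr{M}^{**}\eqdef\matr{M}^*-\sum_{i\in X}c_{\pi(i)}\matr{u}_{\pi(i)}\matr{u}_{\pi(i)}^T$ lies in $\ls{LS}_0(X)$, and orthonormality of $\{\matr{u}_{\pi(i)}\}_{i\in X}$ gives
\[
\min_{\matr{N}\in\ls{LS}_0(X)}\norm{\matr{M}-\matr{N}}\leq\norm{\matr{M}-\matr{M}^{**}}\leq\norm{\matr{M}-\matr{M}^*}+\max_{i\in X}|c_{\pi(i)}|\leq2\eta+o(\varepsilon)=\frac{2d^{(k-3)/2}\varepsilon}{\lambda_{\min}}+o(\varepsilon);
\]
taking the supremum over $\matr{M}$ then proves the lemma.

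It thus remains to bound $|c_{\pi(i)}|$ for $i\in X$, and here I would use the deflation constraint $\matr{M}\perp\Vec(\matr{\widehat u}_i^{\otimes 2})$, that is, $\matr{\widehat u}_i^T\matr{M}\matr{\widehat u}_i=0$. Inserting this zero,
\[
|c_{\pi(i)}|=\bigl|\matr{u}_{\pi(i)}^T\matr{M}^*\matr{u}_{\pi(i)}-\matr{\widehat u}_i^T\matr{M}\matr{\widehat u}_i\bigr|\leq\bigl|\matr{u}_{\pi(i)}^T\matr{M}^*\matr{u}_{\pi(i)}-\matr{\widehat u}_i^T\matr{M}^*\matr{\widehat u}_i\bigr|+\bigl|\matr{\widehat u}_i^T(\matr{M}^*-\matr{M})\matr{\widehat u}_i\bigr|.
\]
The second summand is at most $\norm{\matr{M}^*-\matr{M}}\leq\eta$. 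For the first, write $\matr{\widehat u}_i=\sigma_i\matr{u}_{\pi(i)}+\matr{w}_i$, where $\sigma_i\in\{\pm1\}$ is the sign realizing $\vectornorm{\matr{w}_i}=\Loss(\matr{\widehat u}_i,\matr{u}_{\pi(i)})\eqdef\delta_i$; from $\vectornorm{\matr{w}_i}^2=2-2\sigma_i\matr{u}_{\pi(i)}^T\matr{\widehat u}_i$ one reads off $|\matr{u}_{\pi(i)}^T\matr{w}_i|=\delta_i^2/2$. Expanding the quadratic form in $\matr{\widehat u}_i$ and using $\matr{u}_{\pi(i)}^T\matr{M}^*\matr{w}_i=c_{\pi(i)}\,(\matr{u}_{\pi(i)}^T\matr{w}_i)$—which holds because $\matr{M}^*=\sum_jc_j\matr{u}_j\matr{u}_j^T$ with $\{\matr{u}_j\}$ orthonormal—the first summand equals $\bigl|2\sigma_ic_{\pi(i)}(\matr{u}_{\pi(i)}^T\matr{w}_i)+\matr{w}_i^T\matr{M}^*\matr{w}_i\bigr|\leq|c_{\pi(i)}|\,\delta_i^2+\norm{\matr{M}^*}\,\delta_i^2\leq4\delta_i^2$. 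Since the hypothesis of the lemma gives $\delta_i\leq2\varepsilon/\lambda_{\pi(i)}+o(\varepsilon)$, we have $4\delta_i^2=o(\varepsilon)$, whence $|c_{\pi(i)}|\leq\eta+o(\varepsilon)$, completing the argument.

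The step I expect to carry the real weight—and the reason the constant comes out as $2$ rather than something larger—is the collapse of the ostensibly first-order term $\matr{u}_{\pi(i)}^T\matr{M}^*\matr{w}_i$ to $c_{\pi(i)}(\matr{u}_{\pi(i)}^T\matr{w}_i)=O(\delta_i^2)$. This is a genuine Kronecker-square phenomenon: the error $\matr{w}_i$ in the estimator $\matr{\widehat u}_i$ is automatically $O(\delta_i^2)$-orthogonal to $\matr{u}_{\pi(i)}$, and $\matr{M}^*$—being an exact combination of the rank-one matrices $\matr{u}_j\matr{u}_j^T$—maps $\matr{w}_i$ into $\Spanspace\{\matr{u}_j\}_{j\in[r]}$ with $\matr{u}_{\pi(i)}$-component precisely $c_{\pi(i)}\,\matr{u}_{\pi(i)}^T\matr{w}_i$. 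A crude estimate $|\matr{u}_{\pi(i)}^T\matr{M}^*\matr{w}_i|\leq\norm{\matr{M}^*}\delta_i$ would instead leave a spurious term of order $\varepsilon/\lambda_{\min}$, which does not match the asserted bound; so it is essential to exploit that $\matr{M}^*$ lies in $\ls{LS}_0$, not just that it is spectrally close to $\matr{M}$. The remaining manipulations are routine uses of the triangle inequality, $\norm{\cdot}\leq\Fnorm{\cdot}$, and orthonormality of the $\matr{u}_j$.
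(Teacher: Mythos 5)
Your proof is correct, and it reaches the stated bound by a slightly different accounting than the paper's. The paper first invokes its Remark following Theorem~\ref{thm:LS} to write $\matr{M}=\sum_i\alpha_i\matr{u}_i^{\otimes 2}+\matr{E}'$ with $\matr{E}'$ Frobenius-orthogonal to every $\matr{u}_i^{\otimes 2}$ at the cost of doubling the residual bound to $2d^{(k-3)/2}\varepsilon/\lambda_{\min}$; it then uses the deflation constraint $\langle\matr{M},\matr{\widehat u}_i^{\otimes 2}\rangle=0$ together with $\matr{\widehat u}_i=\xi_i\matr{u}_i+\eta_i\matr{u}_i^{\perp}$, $|\eta_i|=O(\varepsilon)$, to show that the exact coefficients $\alpha_i=\langle\matr{M},\matr{u}_i^{\otimes 2}\rangle$ for $i\in X$ are $o(\varepsilon)$, so the final bound is $\max_{i\in X}|\alpha_i|+\norm{\matr{E}'}\leq 2\eta+o(\varepsilon)$. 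You instead work directly with the Theorem~\ref{thm:LS} approximant $\matr{M}^*$ (residual only $\eta$) and show the deflated coefficients $c_{\pi(i)}$ of $\matr{M}^*$ are bounded by $\eta+o(\varepsilon)$ rather than $o(\varepsilon)$ --- the two contributions of $\eta$ each supply one factor, yielding the same $2\eta$. The essential mechanism is identical in both arguments: the constraint $\matr{\widehat u}_i^T\matr{M}\matr{\widehat u}_i=0$ plus the second-order orthogonality $|\matr{u}_{\pi(i)}^T\matr{w}_i|=\delta_i^2/2$ (your ``Kronecker-square phenomenon'' is exactly the paper's $\langle\matr{E}',\matr{u}_i^{\otimes 2}\rangle=0$ bookkeeping in disguise) kills the would-be first-order cross term. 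Your version has the minor advantage of not needing the orthogonalized decomposition of the Remark, and your explicit handling of the $X=[r]$ vacuous case and of $|c_j|\leq 2$ is careful; no gaps.
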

The above result demonstrates that the error due to deflation is bounded by a factor that does not depend on the iteration number. This implies that our proposed deflation strategy is numerically stable when extracting the subsequent components. Similarly, if we define the residual tensor, $\ntensor(X)\eqdef\tensor{\widetilde T}-\sum_{i\in X}\widehat \lambda_i \matr{\widehat u}_i^{\otimes k}$, one can show \cite{mu2015successive} that $\norm{\ntensor(X)-\sum_{i\in [r]\backslash X}\lambda_i\matr{u}_i^{\otimes k}}\leq c\varepsilon+o(\varepsilon)$, where $c>0$ is a constant that dose not depend on the iteration number. The above observations immediately suggest two possible ways to find the subsequent factors, by deflating either the singular space or the original tensor. Note that the deflation results for tensors are non-trivial since the analogous statement for matrices is not necessarily true \cite{anandkumar2014tensor, mu2015successive}.

Applying Lemmas~\ref{lem:max}--\ref{lem:deflation} successively, we obtain the main result of this section as follows. 
\begin{thm} \label{thm:main}
Let $\ntensor=\sum_{i=1}^r\lambda_i \matr{u}_i^{\otimes k}+\tE\in\bbR^{d\times\cdots \times d}$, where $\{\matr{u}_i\}_{i\in[r]}$ is a set of
orthonormal vectors in $\bbR^d$, $ \lambda_i>0$ for all $i\in[r]$, and $\tE$ is a symmetric tensor satisfying $\norm{\tE}\leq \varepsilon$. Suppose $\varepsilon\leq \lambda_{\min}/\left[c_0d^{(k-2)/2}\right]$, where $c_0>0$ is a sufficiently large constant that does not depend on $d$. Let $\{(\matr{\widehat u}_i,\widehat \lambda_i)\in\bbR^d\times \bbR\}_{i\in[r]}$ be the output of Algorithm~\ref{alg:main} for inputs $\ntensor$ and $r$. Then, there exists a permutation $\pi$ on $[r]$ such that for all $i\in[r]$,
\begin{equation}\label{eq:finalconclusion}
\Loss(\matr{\widehat u}_i,\matr{u}_{\pi(i)}) \leq {2\varepsilon\over \lambda_{\pi(i)}}+o(\varepsilon),\quad \Loss(\widehat \lambda_i, \lambda_{\pi(i)}) \leq 2\varepsilon+o(\varepsilon),
\end{equation}
and
\begin{equation}\label{eq:totalbound}
\normSize{\bigg}{\ntensor-\sum_{i=1}^r\widehat\lambda_i\matr{\widehat u}_i^{\otimes k}}\leq C\varepsilon+o(\varepsilon),
\end{equation}
where $C=C(k)>0$ is a constant that only depends on $k$.  
\end{thm}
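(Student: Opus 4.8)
The plan is to run the $r$ iterations of Algorithm~\ref{alg:main} in order and to carry, as a loop invariant, the following statement $P(m)$: after $m$ iterations there is a partial injection $\pi$ defined on $\{1,\dots,m\}$ such that
\[
\Loss(\matr{\widehat u}_i,\matr{u}_{\pi(i)})\le \frac{2\varepsilon}{\lambda_{\pi(i)}}+o(\varepsilon),\qquad \Loss(\widehat\lambda_i,\lambda_{\pi(i)})\le 2\varepsilon+o(\varepsilon)\quad(i\le m),
\]
and, writing $X_m=\{1,\dots,m\}$, the residual space $\ls{LS}^{(r)}(X_m)$ is within $2d^{(k-3)/2}\varepsilon/\lambda_{\min}+o(\varepsilon)$ of $\ls{LS}_0(X_m)$ in the spectral-norm sense of Lemma~\ref{lem:deflation}. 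The base case $P(0)$ is immediate: $\ls{LS}^{(r)}(\emptyset)=\ls{LS}^{(r)}$ (which is well defined by Proposition~\ref{prop:uniqueness} once $c_0$ is large), $\ls{LS}_0(\emptyset)=\ls{LS}_0$, and Theorem~\ref{thm:LS} gives the bound with constant $1\le 2$.

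\textbf{Inductive step.} Fix $m<r$ and assume $P(m)$. The key observation is that iteration $m+1$, acting on the residual space $\ls{LS}^{(r)}(X_m)$, is structurally identical to the first iteration acting on $\ls{LS}^{(r)}$, except that the perturbation radius of the relevant singular space is inflated from $d^{(k-3)/2}\varepsilon/\lambda_{\min}$ to $2d^{(k-3)/2}\varepsilon/\lambda_{\min}$, and that $\ls{LS}_0(X_m)$ is still spanned by Kronecker squares of orthonormal vectors, namely $\{\matr{u}_{\pi(i)}^{\otimes 2}:i\in[r]\setminus X_m\}$, so its rank-$1$ matrices are exactly $\pm\matr{u}_{j}^{\otimes2}$ with $j$ ranging over those indices (the residual analogue of Proposition~\ref{prop:searchspace}). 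I would therefore rerun the proofs of Lemma~\ref{lem:max}, Lemma~\ref{lem:bound}, and Lemma~\ref{lem:post-processing} with this inflated radius: those proofs invoke Assumption~\ref{assumption} only through ``$c_0$ large enough,'' so taking $c_0$ in Theorem~\ref{thm:main} to exceed $2\max\{10,\tfrac{3(k-2)}2+\tfrac{6\lambda_{\max}}{\lambda_{\min}}\}$ absorbs the factor $2$ and yields a nearly rank-$1$ matrix $\matr{\widehat M}_{m+1}\in\ls{LS}^{(r)}(X_m)$, then $\Loss(\matr{\widehat u}_{m+1},\matr{u}_j)\le 2d^{(k-3)/2}\varepsilon/\lambda_{\min}+o(\varepsilon)$ for some $j\in[r]\setminus X_m$, and after the post-processing of line~6 the sharpened bounds $\Loss(\matr{\widehat u}_{m+1},\matr{u}_j)\le 2\varepsilon/\lambda_j+o(\varepsilon)$ and $\Loss(\widehat\lambda_{m+1},\lambda_j)\le 2\varepsilon+o(\varepsilon)$. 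Set $\pi(m+1):=j$; it is a new index because $\matr{\widehat u}_{m+1}$ is nearly orthogonal to each $\matr{u}_{\pi(i)}$, $i\le m$. Since $\langle\matr{\widehat u}_i,\matr{\widehat u}_{i'}\rangle^2\approx\delta_{ii'}$, the deflated vectors $\Vec(\matr{\widehat u}_i^{\otimes2})$, $i\le m+1$, are nearly orthonormal, so $\ls{LS}^{(r)}(X_{m+1})$ has dimension $r-(m+1)$ and the hypotheses of Lemma~\ref{lem:deflation} hold for $X=X_{m+1}$, re-establishing the residual-space bound. This proves $P(m+1)$. After $r$ steps $\pi$ is a permutation of $[r]$ and \eqref{eq:finalconclusion} follows (finitely many $o(\varepsilon)$ contributions still sum to $o(\varepsilon)$).

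\textbf{Reconstruction bound.} For \eqref{eq:totalbound}, relabel so that $\pi=\mathrm{id}$ and choose the sign of each $\matr{\widehat u}_i$ aligned with $\matr{u}_i$ (the product $\widehat\lambda_i\matr{\widehat u}_i^{\otimes k}$ is independent of this choice), and write
\[
\ntensor-\sum_{i=1}^r\widehat\lambda_i\matr{\widehat u}_i^{\otimes k}=\tE+\sum_{i=1}^r(\lambda_i-\widehat\lambda_i)\matr{u}_i^{\otimes k}+\sum_{i=1}^r\widehat\lambda_i\bigl(\matr{u}_i^{\otimes k}-\matr{\widehat u}_i^{\otimes k}\bigr).
\]
The first term has spectral norm $\le\varepsilon$; the second is a genuine orthogonal decomposition, so its norm equals $\max_i|\lambda_i-\widehat\lambda_i|\le 2\varepsilon+o(\varepsilon)$. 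For the third, expand $\matr{u}_i^{\otimes k}-\matr{\widehat u}_i^{\otimes k}=\sum_{j=1}^k\matr{\widehat u}_i^{\otimes(j-1)}\otimes(\matr{u}_i-\matr{\widehat u}_i)\otimes\matr{u}_i^{\otimes(k-j)}$; for each $j$ and each unit $\matr{x}$, pulling out $\max_i\widehat\lambda_i|\langle\matr{u}_i-\matr{\widehat u}_i,\matr{x}\rangle|\le\max_i\widehat\lambda_i\Loss(\matr{u}_i,\matr{\widehat u}_i)\le 2\varepsilon+o(\varepsilon)$ and using Bessel's inequality for the orthonormal $\{\matr{u}_i\}$ together with $\langle\matr{\widehat u}_i,\matr{\widehat u}_{i'}\rangle^2\approx\delta_{ii'}$ (so that $\sum_i\langle\matr{u}_i,\matr{x}\rangle^2\le1$, $\sum_i\langle\matr{\widehat u}_i,\matr{x}\rangle^2\le1+o(1)$, hence $\sum_i|\langle\matr{\widehat u}_i,\matr{x}\rangle|^{j-1}|\langle\matr{u}_i,\matr{x}\rangle|^{k-j}\le1+o(1)$ for $k\ge3$ by H\"older), each of the $k$ pieces has spectral norm $\le 2\varepsilon+o(\varepsilon)$, so the third term is $\le 2k\varepsilon+o(\varepsilon)$. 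Summing gives \eqref{eq:totalbound} with $C(k)=2k+3$; it is precisely the (near-)orthonormality of the two families that keeps $C$ independent of $r$.

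\textbf{Main obstacle.} The crux is the inductive step: one must make rigorous the claim that every later iteration behaves like the first iteration with a uniformly worse constant. Concretely, I must check that the proofs of Lemmas~\ref{lem:max}--\ref{lem:post-processing} use nothing beyond (i) the spectral-norm closeness of the relevant singular space to a span of Kronecker squares of orthonormal vectors, and (ii) Assumption~\ref{assumption} with a sufficiently large constant, so that replacing ``$\ls{LS}_0$ at radius $d^{(k-3)/2}\varepsilon/\lambda_{\min}$'' by ``$\ls{LS}_0(X_m)$ at radius $2d^{(k-3)/2}\varepsilon/\lambda_{\min}$'' only rescales constants while the SNR hypothesis survives after enlarging $c_0$. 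The delicate bookkeeping is: (a) how large $c_0$ must be so that this survives through all $r$ rounds without drift, and (b) verifying that deflation preserves the residual SOD structure so that the analogue of Proposition~\ref{prop:searchspace} still identifies the rank-$1$ matrices of $\ls{LS}_0(X_m)$ — this is where the cleanness of Lemma~\ref{lem:deflation} (a factor independent of the iteration index) is doing the real work. Everything else reduces to triangle inequalities, the telescoping expansion above, and Bessel/H\"older.
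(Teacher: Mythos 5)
Your induction for \eqref{eq:finalconclusion} is essentially the paper's own argument: induct on the iteration index, invoke Lemma~\ref{lem:deflation} with $X=[s]$ to get the residual-space bound with the extra factor $2$, and then rerun Theorem~\ref{thm:LS} and Lemmas~\ref{lem:max}--\ref{lem:post-processing} with $\varepsilon$ replaced by $2\varepsilon$, absorbing the factor into $c_0$. Where you genuinely diverge is the reconstruction bound \eqref{eq:totalbound}. The paper deduces it from an imported residual lemma (Lemma~\ref{lem:residual}, taken from \cite{mu2015successive}) that bounds $\norm{\sum_i \Delta_i \matr{a}^{\otimes(k-1)}}$ for $\Delta_i=\lambda_i\matr{u}_i^{\otimes k}-\widehat\lambda_i\matr{\widehat u}_i^{\otimes k}$ via a one-mode contraction, yielding an unspecified $C(k)$. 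You instead split the residual as $\tE+\sum_i(\lambda_i-\widehat\lambda_i)\matr{u}_i^{\otimes k}+\sum_i\widehat\lambda_i(\matr{u}_i^{\otimes k}-\matr{\widehat u}_i^{\otimes k})$, use that the middle term is itself SOD (so its spectral norm is $\max_i|\lambda_i-\widehat\lambda_i|$), and control the last term by the telescoping identity plus Bessel/H\"older with the near-orthonormality of $\{\matr{\widehat u}_i\}$. This is self-contained, gives an explicit constant $C(k)=2k+3$, and makes transparent why $C$ is independent of $r$; the paper's route delegates exactly that point to the cited lemma. One small imprecision: your claim $\sum_i\langle\matr{\widehat u}_i,\matr{x}\rangle^2\le 1+o(1)$ needs the operator norm of the matrix with rows $\matr{\widehat u}_i^T$, which is $1+O(\sqrt{r}\,\varepsilon/\lambda_{\min})=1+o(1)$ as $\varepsilon\to0$ with the other parameters fixed, so it goes through, but it is worth stating that bound explicitly rather than appealing only to pairwise near-orthogonality. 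Your "main obstacle" paragraph correctly identifies the point the paper also treats tersely (that later iterations reuse the first-iteration lemmas on the deflated space with only a rescaled constant), so there is no gap relative to the paper's own level of rigor.
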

\begin{rmk}
With little modification, our results also apply to the case when not all $\lambda_i$ are positive. In that case, Assumption~\ref{assumption} needs to be modified to $\varepsilon\leq |\lambda|_\text{\rm min}/\left[ c_0d^{(k-2)/2}\right]$ where $|\lambda|_\text{\rm min}=\min_{i\in[r]}{|\lambda_i|}$.
\end{rmk}

\subsection{Determining the Number of Factors}\label{section:number}
Determining the rank of a general tensor is intrinsically difficult \cite{hillar2013most}. For nearly-SOD tensors, however, $r$ can be approximated in various ways. 
When the number of factors $r$ is unknown, we relax the search space $\ls{LS}^{(r)}$ to $\ls{LS}^{(n)}$ where $n\eqdef \min\{\text{Rank}(\ntensor_{(12)(3\ldots k)}), d\}$. Note that we always have $\ls{LS}^{(n)}\supset \ls{LS}^{(r)}$ under Assumption~\ref{assumption}. Algorithm~\ref{alg:number} describes a rule of thumb for choosing $r$.

\begin{algorithm}[t]
  \caption{Determining the Number of Factors}\label{alg:number}
  \begin{algorithmic}[1]
\INPUT Noisy tensor $\ntensor$ where $\ntensor=\sum_{i=1}^r \lambda_i \matr{u}_i^{\otimes k}+\tE$;
\OUTPUT The number of factors $\widehat r$.

  \State Run Algorithm~\ref{alg:main} with $n$ iterations. Let $(\matr{\widehat M}_i,\matr{\widehat u}_i,\widehat \lambda_i)\in\bbR^{d\times d}\times \bbR^d\times \bbR$ denote the output from the $i$th iteration where $i\in[n]$;
    
 \State  Choose the subset $\tensor{S}\subset [n]$ for which $\normSize{\big}{\matr{\widehat M}_i}$ (i.e.,\ the objective value) is close to 1;

\State Sort $\{{\widehat \lambda_i}^2\colon i\in\tensor{S}\}$ in decreasing order. 

Pick $r$ by the ``elbow'' method of the scree plot (similarly as in matrix PCA).

\end{algorithmic}
\end{algorithm}

Although Algorithm~\ref{alg:number} is heuristic, our simulation results suggest that it provides a good approximation in most cases encountered. The use of $\widehat \lambda_i^2$ as a guiding criterion is justified by the following observation: $\FnormSize{\big}{\ntensor-\sum_i\widehat \lambda_i\matr{\widehat u}_i^{\otimes k}}^2\approx \FnormSize{\big}{\ntensor}^2-\sum_i\widehat \lambda^2_i$, provided that $\{\matr{\widehat u}_i\}$ are approximately orthogonal to each other.

\section{Numerical Experiments}\label{sec:simulation}

We assessed the performance of our algorithm by simulating tensors of order $k=3$, $4$, and $5$.  We generated nearly-SOD tensors $\ntensor=\sum_{i=1}^r\lambda_i\matr{e}_i^{\otimes k}+ \tE \in \bbR^{d\times\cdots\times d}$, where $\matr{e}_i$ is the $i$th canonical basis vector and $\lambda_i$s are i.i.d.\ draws from $\text{Unif}[0.8,1.2]$. The noise tensor $\tE$ was generated with noise level $\sigma$ under one of the following three random models:

\begin{enumerate}
\item (Gaussian) For $i_1\leq \cdots \leq i_k,$ draw independent entries $\tE_{i_1 \ldots i_k}\sim \tensor{N}(0,\sigma^2)$ uniformly at random. 
\item (Bernoulli) For $i_1\leq \cdots \leq i_k,$ draw independent entries $\tE_{i_1\ldots i_k}=\pm \sigma$ with probability $1/2$ for each. 
\item  (Student's $t$-distribution) For $i_1\leq \cdots \leq i_k,$ draw independent entries $\tE_{i_1\cdots i_k} \sim \sigma t(5)$. We use $t$-distribution with 5 degrees of freedom to mimic heavy-tailed data in real-world applications. 
\end{enumerate}

Condition on the i.i.d.\ entries $\{ \tE_{i_1\ldots i_k}\}_{i_1 \leq \cdots \leq i_k}$, we generated the remaining entries by imposing the symmetry condition $\tE_{i_1\ldots i_k}=\tE_{\pi(i_1)\ldots \pi(i_k)}$, where $(i_1,\ldots, i_k)\in[d]^k$ and $\pi$ is a permutation of $[k]$.  We set the dimension $d\in\{25,50\}$ and the number of factors $r\in\{2, 10, 25\}$, and varied $\sigma$. For each scenario, we generated $50$ trials and applied Algorithm~\ref{alg:main} to obtain the estimators $\{\matr{\widehat u}_i,\widehat \lambda_i\}_{i\in[r]}$. The estimation accuracy was assessed by the average $l^2$ loss ${1\over r}\sum_{i=1}^r\Loss(\matr{\widehat u}_i, \matr{u}_{\pi(i)})$ across the 50 trials.  We compared the two-mode HOSVD (TM-HOSVD) with the orthogonal joint diagonalization (OJD) method \cite{kuleshov2015tensor} and the tensor power method (TPM)~\cite{anandkumar2014tensor} with random initialization. The software packages for TPM and OJD were originally designated for $k=3$, so we extended them to $k\geq 4$. Since both TPM and OJD require users to specify the number of factors, we provided them with the ground-truth $r$.

\begin{figure}[t]
\centerline{\includegraphics[width=14cm]{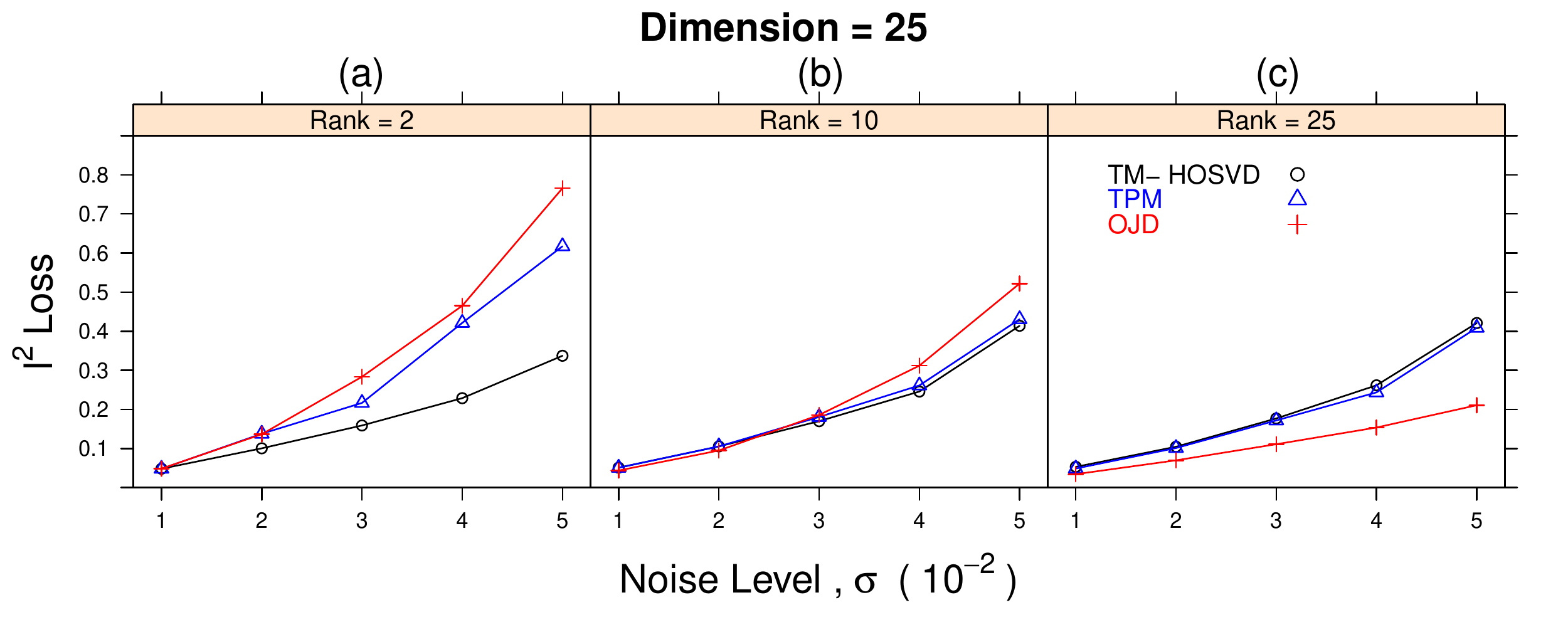}}
\centerline{\includegraphics[width=14cm]{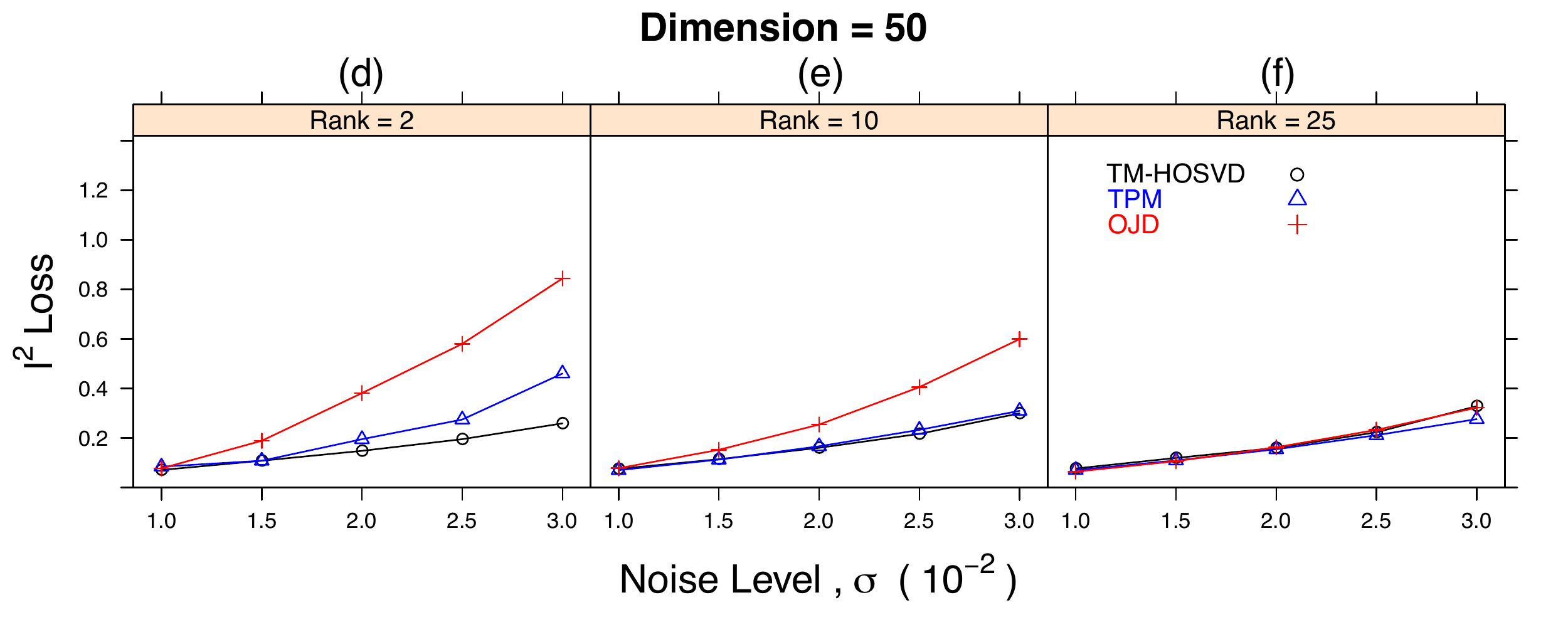}}
 \caption{Average $l^2$ loss for decomposing order-3 nearly SOD tensors with Gaussian noise.}
 \label{fig:gaussian}
\end{figure}

\begin{figure}[t]
 \begin{center}
  \includegraphics[width=5.6cm]{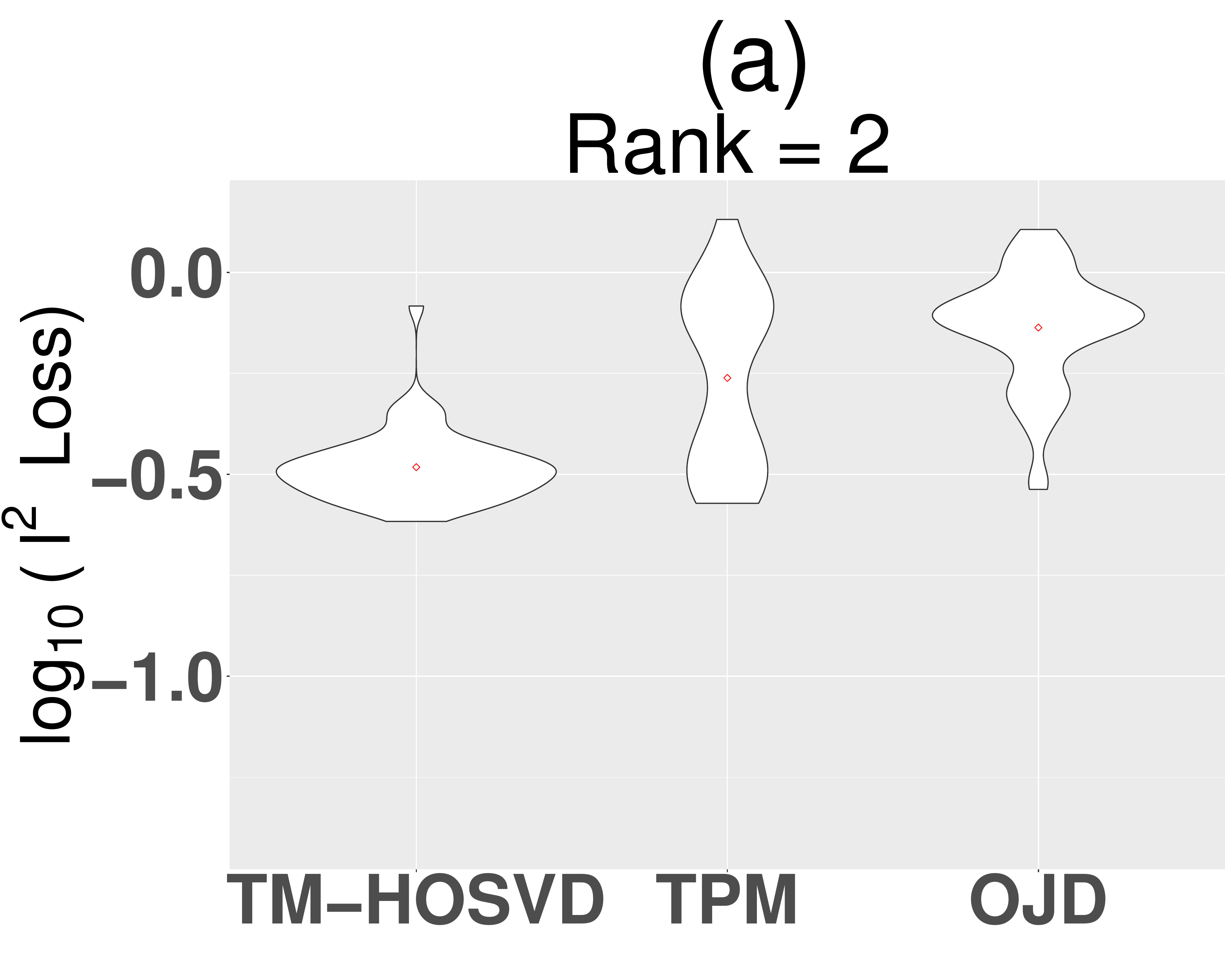}
  \includegraphics[width=4.5cm]{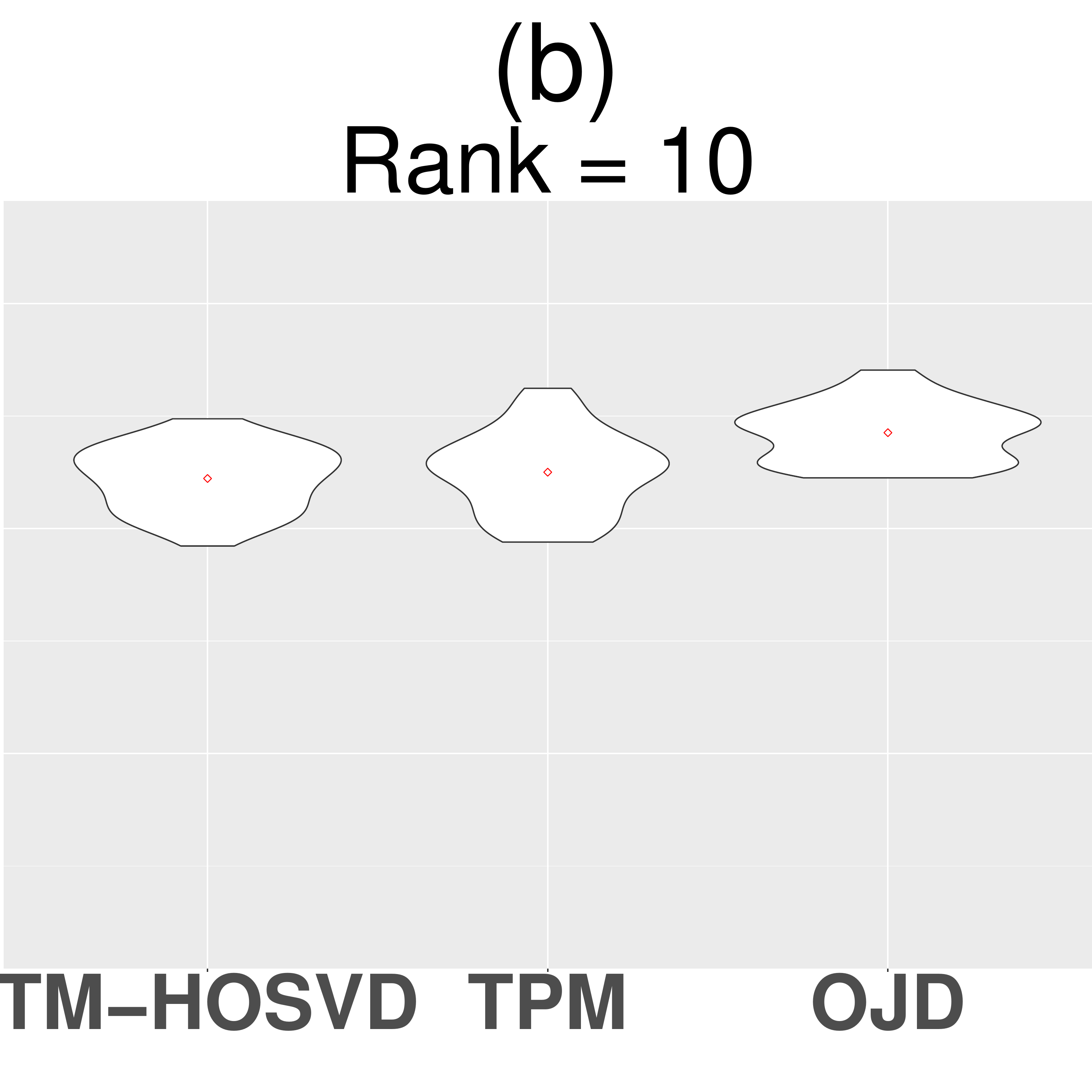}
  \includegraphics[width=4.5cm]{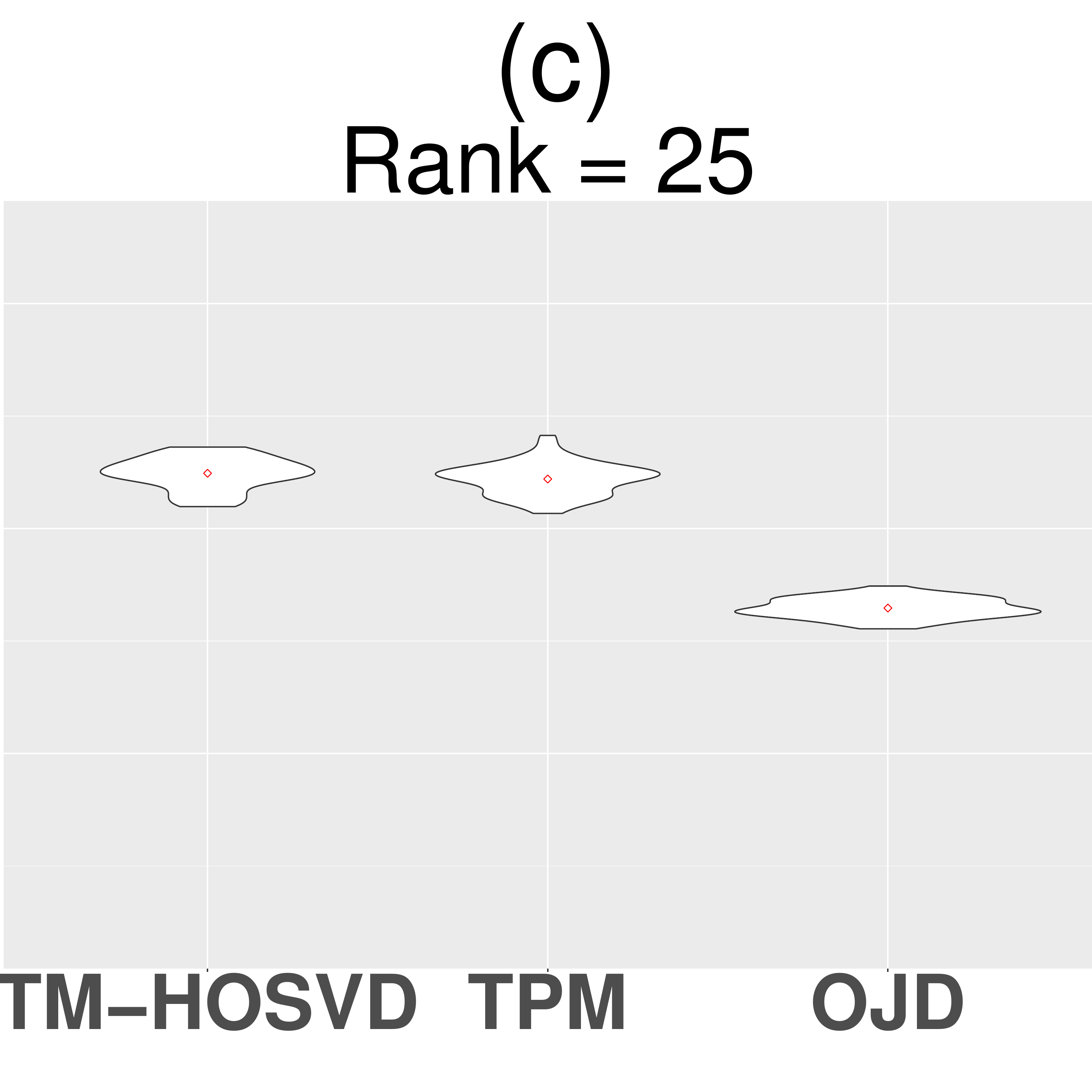}
       \caption{Empirical $l^2$ loss distribution (plotted on a log scale for better visualization) for decomposing order-3 nearly SOD tensors with Gaussian noise, $d=25$ and $\sigma=5\times 10^{-2}$.}\label{fig:distn}
        \end{center}
\end{figure}

Figures~\ref{fig:gaussian} and \ref{fig:distn}, and \suppfref{fig:suppfigure1} (Appendix~\ref{appendix:figures}) demonstrate the robustness of TM-HOSVD to various types of error distributions. For $k=3$, TM-HOSVD outperformed TPM and OJD when the rank is low to moderate. For $k=4$ and $5$, TM-HOSVD consistently achieved a higher accuracy across the full range of the rank (Figure~\ref{fig:order4} and \suppfref{fig:suppfigure2}, Appendix~\ref{appendix:figures}). The empirical runtime is provided in \supptref{tab:supptable1} (Appendix~\ref{appendix:figures}).

We compared the $l^2$ loss distribution of the three methods at the highest noise level ($\sigma=5\times 10^{-2}$ for order-3 tensors and $\sigma=1.5\times 10^{-2}$ for order-4 tensors). The results confirm our earlier conclusion: TM-HOSVD is able to tolerate a greater level of noise. In several scenarios, the loss distribution displayed a bimodal pattern, suggesting a mixture of estimates with good/bad convergence performance. This feature is particularly noticeable in Figure~\ref{fig:distn}a and Figure~\ref{fig:order4}d--f. Further investigation revealed that the poor convergence (e.g.,\ oscillation or local-optimum) occurred less frequently in TM-HOSVD. This is potentially due to the fact that TM-HOSVD starts with the more informative search space $\ls{LS}^{(r)}$ with a lower dimension $r\leq d$, whereas TPM/OJD starts with a random direction in a $d$-dimensional space.

\begin{figure}[t]
   \begin{center}
 \includegraphics[width=15cm]{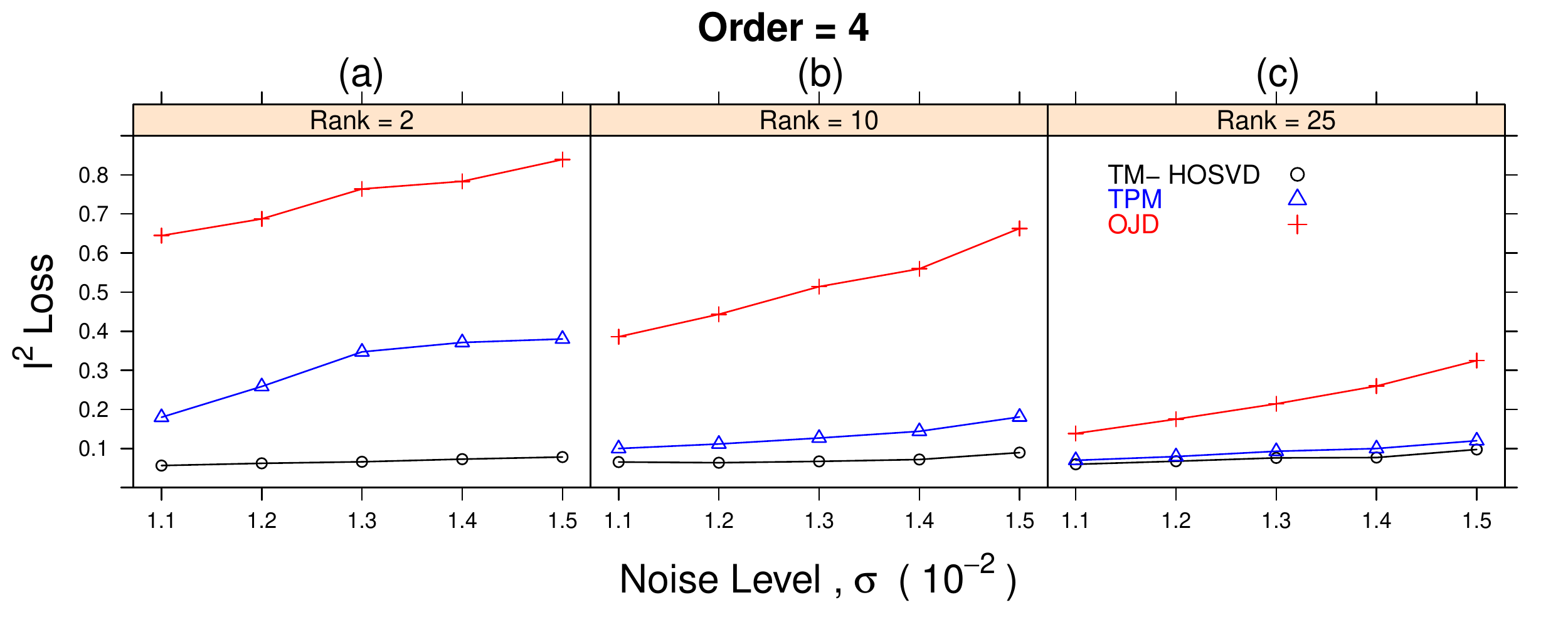}
  \includegraphics[width=5.6cm]{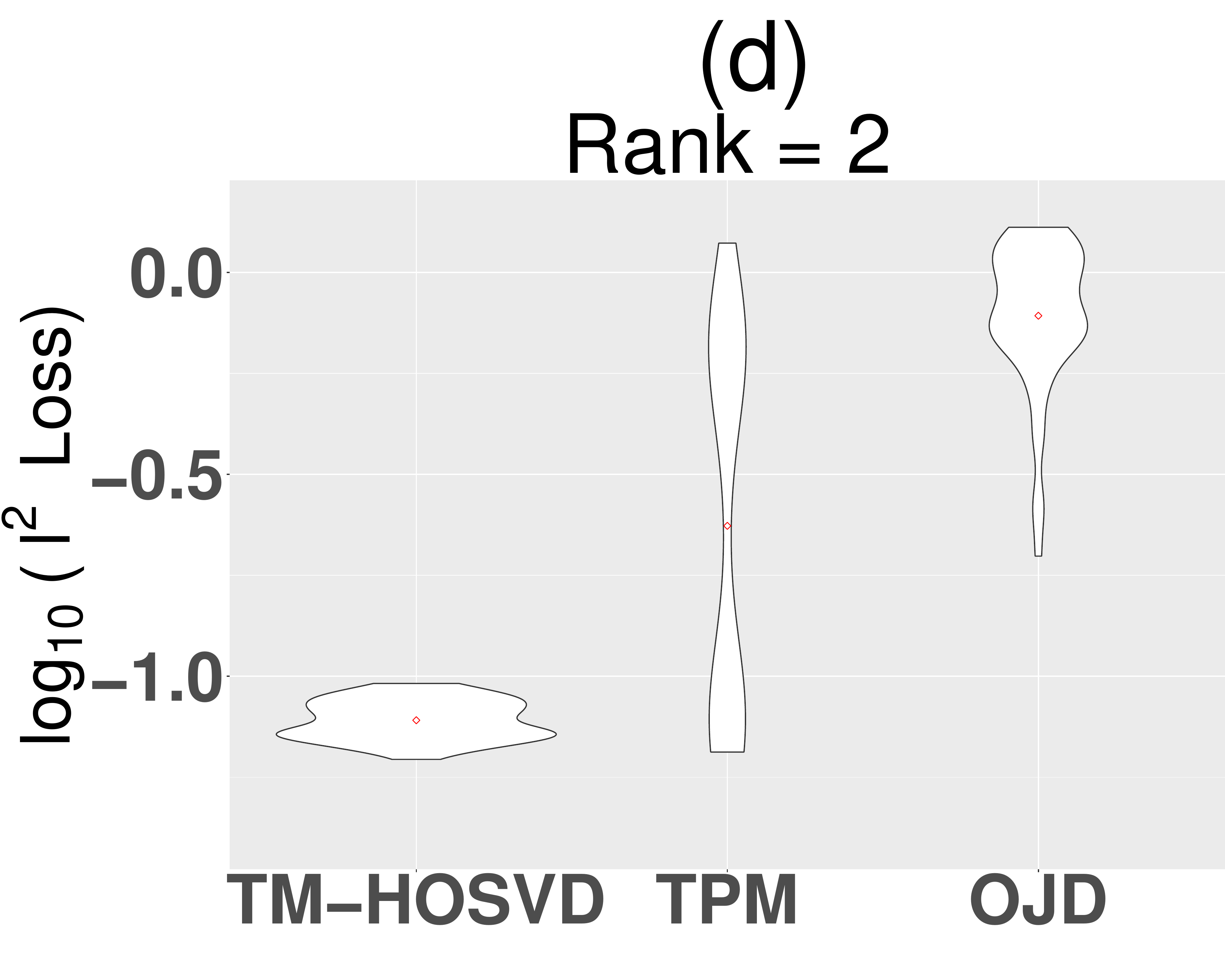}
  \includegraphics[width=4.5cm]{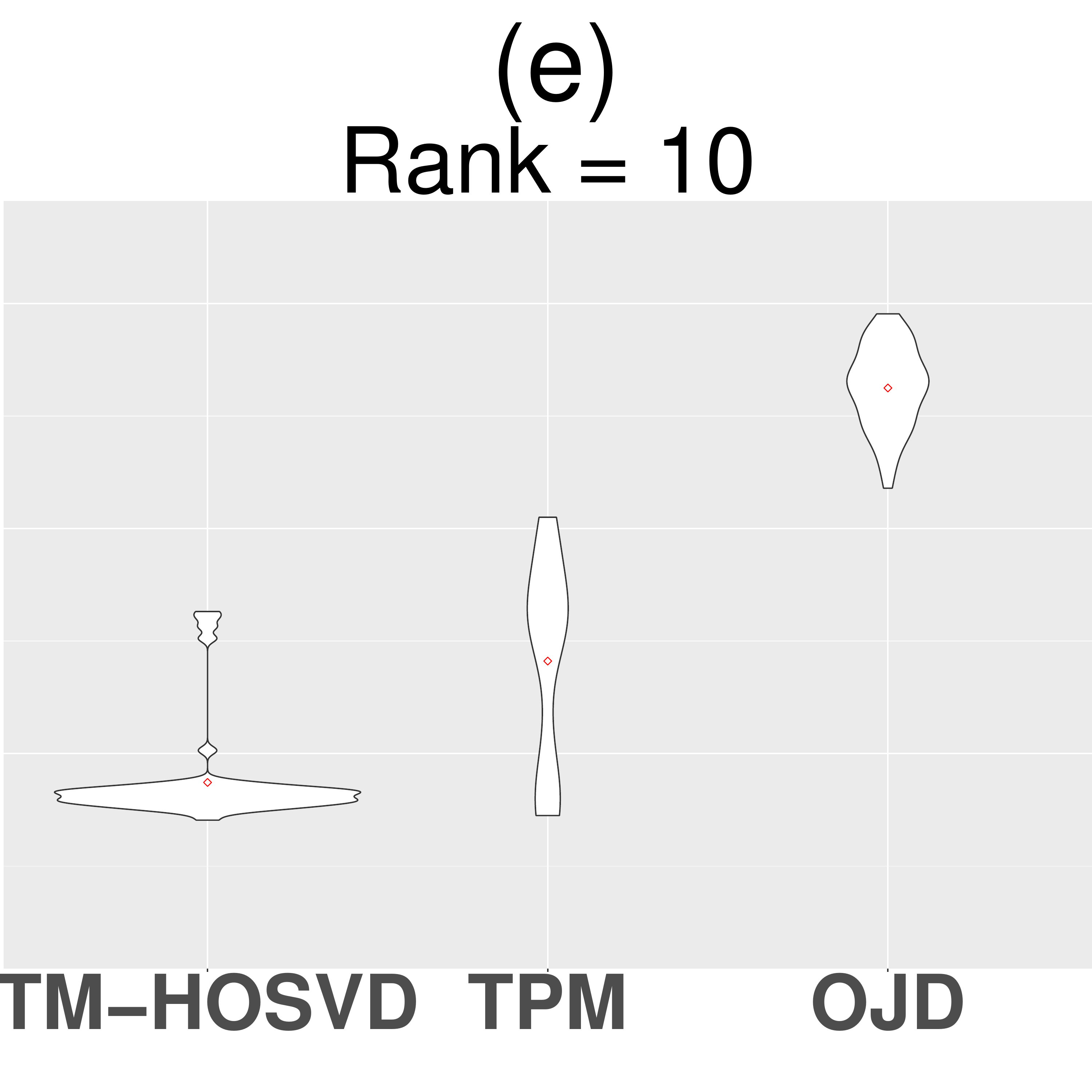}
  \includegraphics[width=4.5cm]{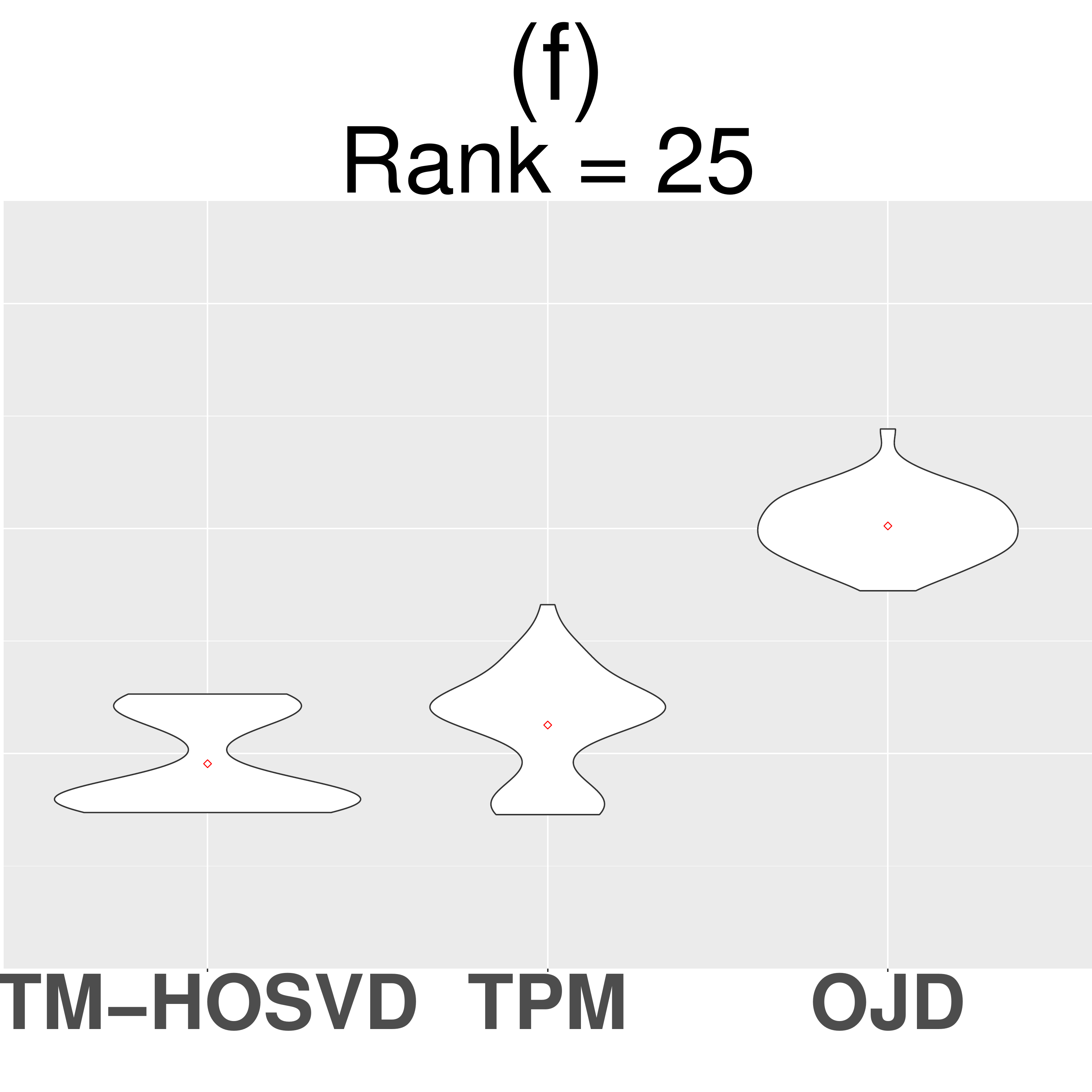}
   \caption{(a--c) Average $l^2$ loss for decomposing order-4 SOD tensors with Gaussian noise, $d=25$. (d--f) Empirical distribution of $l^2$ loss (on a log scale) for decomposing order-4 nearly SOD tensors with Gaussian noise, $d=25$ and $\sigma=1.5\times 10^{-2}$.}\label{fig:order4}
      \end{center}
 \end{figure}

 We also noticed that in the full-rank case for $k=3$, OJD occasionally achieved the best accuracy among all three methods (Figure~\ref{fig:gaussian}c and Figure~\ref{fig:distn}c). We hypothesize that in the full-rank case the best rank-$r$ approximation of tensors may not be achieved by successive rank-1 approximations. Both TPM and TM-HOSVD methods take a greedy approach, so the first several estimated factors tend to explain the most signal. In contrast, OJD optimizes the objective using all $d$ factors simultaneously. Thus, the set of $d$ estimators may explain more signal overall while the first several factors explain less.  This phenomenon also explains why OJD works better for full-rank tensors than for low- and moderate-rank tensors.

\section{Proofs}\label{sec:proofs}
Here, we provide proofs of the theoretical results presented in Sections~\ref{sec:SOD} and \ref{sec:nearlySOD}.

\subsection{Proof of Theorem~\ref{thm:characterization}}\label{appendix:proofcharacterization}

\begin{proof} [Proof of Theorem~\ref{thm:characterization}]
The necessity is obvious. To prove the sufficiency, note that the tensor decomposition $\tT=\sum_{i=1}^r\lambda_i\matr{u}_i^{\otimes k}$ 
implies the two-mode HOSVD:
\begin{equation}\label{eq:matrixSVD}
\tT_{(12)(3\ldots k)}=\sum_{i=1}^r\lambda_i\Vec(\matr{u}_i^{\otimes 2})\Vec(\matr{u}_i^{\otimes (k-2)})^T,
\end{equation}
where each $\lambda_i> 0$ and $\Vec(\matr{u}_i^{\otimes 2})$ is the $i$th left singular vector corresponding to $\lambda_i$.
Now suppose $\Vec(\matr{a}^{\otimes 2})$ is the left singular vector of $\tT_{(12)(3\dots k)}$ corresponding to a non-zero singular value $\lambda\in\bbR\backslash\{0\}$. Then, by \eqref{eq:matrixSVD}, we must have 
\begin{equation}\label{eq:space}
\Vec(\matr{a}^{\otimes 2})\in\Spanspace\{\Vec(\matr{u}_i^{\otimes 2})\colon i\in[r] \text{ for which } \lambda_{i}=\lambda \}.
\end{equation}
Hence, there exist coefficients $\{\alpha_i\}$ such that $\Vec(\matr{a}^{\otimes 2})=\sum\limits_{ i\in[r]\colon  \lambda_i=\lambda }\alpha_i\Vec(\matr{u}_{i}^{\otimes 2})$. In matrix form, this reads 
\begin{equation}\label{eq:rank}
\matr{a}^{\otimes 2}=\sum_{  i\in[r] \colon  \lambda_i=\lambda}\alpha_i \matr{u}_{i}^{\otimes 2},
\end{equation}
where $\{\matr{u}_i\}$ is a set of orthonormal vectors. Notice that the matrix on the right-hand side has rank $|\{ i\in[r]\colon \lambda_i=\lambda\}|$ while the matrix on the left-hand side has rank 1. Since the rank of a matrix is unambiguously determined, we must have $|\{ i\in[r]\colon \lambda_i=\lambda\}|=1$. Therefore, $\matr{a}^{\otimes 2}=\matr{u}_{i^*}^{\otimes 2}$ holds for some $i^*\in[r]$; that is, $\matr{a}$ is a robust eigenvector of $\tT$. 
\end{proof}


\subsection{Proof of Proposition~\ref{prop:searchspace}}\label{appendix:proofsearchspace}

\begin{proof}[Proof of Proposition~\ref{prop:searchspace}]
Suppose $\matr{M}$ is a rank-1 matrix in $\ls{LS}_0=\Spanspace\{\matr{u}_1^{\otimes 2},\ldots,\matr{u}_r^{\otimes 2}\}$, where each $\matr{u}_i$ is a robust eigenvector of $\tT$. Thus, there exist coefficients $\{\alpha_i\}_{i\in[r]}$ such that
\[
\matr{M}=\alpha_1 \matr{u}_1^{\otimes 2}+\cdots+\alpha_r \matr{u}_r^{\otimes 2}. 
\]
Notice that $\{\matr{u}_i\}$ is a set of orthonormal vectors and the rank of a matrix is unambiguously determined. We must have $|\{ i\in[r]\colon \alpha_i\neq 0\}|=1$. Hence, $\matr{M}=\alpha_{i^*}\matr{u}_{i^*}^{\otimes 2}$ holds for some $i^*\in[r]$.
\end{proof}

\subsection{Proof of Theorem~\ref{thm:noiseless}}\label{appendix:proofnoiseless}

\begin{proof}[Proof of Theorem~\ref{thm:noiseless}]
Note that every matrix $\matr{M}\in\ls{LS}_0$ can be written as $\matr{M}=\alpha_1 \matr{u}_1^{\otimes 2}+\cdots+\alpha_r \matr{u}_r^{\otimes 2}$, where $\{\alpha_i\}_{i\in[r]}$ is a set of scalars in $\bbR$. Thus, the optimization problem is equivalent to 
\begin{equation}\label{optimization2}
\max_{\alpha_1^2+\cdots+\alpha^2_r=1} \norm{\alpha_1 \matr{u}_1^{\otimes 2}+\cdots+\alpha_r \matr{u}_r^{\otimes 2}}=\max_{\alpha_1^2+\cdots+\alpha_r^2=1} \max_{i\in[r]}|\alpha_i|.
\end{equation}
Let $f(\boldsymbol{\alpha})=\max_{i\in[r]}\left| \alpha_i \right|$ denote the objective function in \eqref{optimization2}, where $\boldsymbol{\alpha}=(\alpha_1,\ldots, \alpha_r)^T\in\matr{S}^{r-1}$. Notice that the objective is upper bounded by 1; i.e., $f(\boldsymbol{\alpha})\leq 1$ for all $\boldsymbol{\alpha}\in\mathbf{S}^{r-1}$. Suppose $\boldsymbol{\alpha^*}=(\alpha^*_1,\ldots, \alpha^*_r)^T\in\matr{S}^{r-1}$ is a local maximizer of \eqref{optimization2}. We show below that $f(\boldsymbol{\alpha}^*)=1$.

Suppose $f(\boldsymbol{\alpha}^*)\neq 1$. Then we must have $\max_{i\in[r]}|\alpha^*_i|<1$. Without loss of generality, assume $\alpha^*_1$ is the element with the largest magnitude in the set $\{\alpha^*_i\}_{i\in[r]}$. Since $|\alpha^*_1|<1$ and $(\alpha^*_1)^2+\cdots+(\alpha^*_r)^2=1$, there must also exist some $j\geq 2$ such that $\alpha^*_j\neq 0$. Without loss of generality again, assume $\alpha^*_2\neq 0$. 
Now construct another vector $\boldsymbol{\widetilde \alpha}=(\widetilde \alpha_1,\ldots, \widetilde \alpha_r)^T\in\bbR^r$, where
\begin{equation}
\widetilde \alpha_i=
\begin{cases}
\alpha^*_1\eta, &i=1,\\
\text{sign}(\alpha^*_2) \sqrt{(\alpha^*_2)^2-(\eta^2-1)(\alpha^*_1)^2}, &i=2,\\
\alpha^*_i,& i=3,\ldots, r,
\end{cases}
\end{equation}
and $\eta\in \bbR_{+}$ is any value in $\displaystyle \Big(1, { \sqrt{(\alpha^*_1)^2+(\alpha^*_2)^2}\over \alpha^*_1} \Big]$.
It is easy to verify that $\boldsymbol{\widetilde \alpha}\in \matr{S}^{r-1}$ for all such $\eta$. Moreover, 
\begin{equation}\label{eq:distance}
\begin{aligned}
\vectornorm{\boldsymbol{\widetilde \alpha}- \boldsymbol{\alpha}^*}^2 &=\sum_{i=1}^r (\widetilde \alpha_i- \alpha^*_i)^2\notag = (\alpha^*_1)^2(\eta -1)^2+(\alpha^*_2-{\widetilde \alpha}_2)^2 \notag \\
&\leq (\alpha^*_1)^2(\eta -1)^2+(\alpha^*_2)^2+({\widetilde \alpha}_2)^2-2(\widetilde \alpha_2)^2\notag  
=2(\alpha^*_1)^2 \eta (\eta-1).
\end{aligned}
\end{equation}
As we see in the right-hand side of the above inequality, the distance between $\boldsymbol{\widetilde \alpha}$ and $\boldsymbol{\alpha}^*$ can be arbitrarily small as $\eta\rightarrow 1^+$. However, $f(\boldsymbol{\widetilde \alpha})=|\alpha^*_1\eta |> f(\boldsymbol{\alpha}^*)$, which contradicts the local optimality of $ \boldsymbol{\alpha^*}$. Hence, we must have $f(\boldsymbol{\alpha^*})=1$, which completes the proof of (A1). As an aside, we have also proved that every local maximizer of \eqref{optimization2} is a global maximizer.

To see that there are exactly $r$ pairs of maximizers in $\ls{LS}_0$, just notice that $\norm{\matr{M}^*}/\Fnorm{\matr{M}^*}=1$ is equivalent to saying $\matr{M}^*$ is a rank-1 matrix. Thus by Proposition~\ref{prop:searchspace}, $\matr{M}^*=\pm \matr{u}_i^{\otimes 2}$ for some $i\in[r]$.  Conversely, every matrix of the form $\pm \matr{u}_i^{\otimes 2}$ is a maximizer in $\ls{LS}_0$ since $\norm{\matr{u}_i^{\otimes 2}}=1$. The conclusions (A2) and (A3) then follow from the property of $\{\matr{u}_i^{\otimes 2}\}_{i\in[r]}$. 
\end{proof}


\subsection{Auxiliary Theorems}
The following results pertain to standard perturbation theory for the singular value decomposition of matrices. 
For any matrix $\matr{X}$, we use $\matr{X}^\dag$ to denote the Hermitian transpose of $\matr{X}$.
Given a diagonal matrix $\matr{\Sigma}$ of singular values, let $\sigma_{\min}(\matr{\Sigma})$ 
and $\sigma_{\max}(\matr{\Sigma})$  denote, respectively, the minimum and the maximum singular values in $\matr{\Sigma}$.

\begin{thm}[Wedin \cite{wedin1972perturbation}] \label{wedin}
Let $\matr{B}$ and $\nmB$ be two $m\times n$ ($m\geq n$) real or complex matrices with SVDs 
\begin{equation}\label{SVD1}
\matr{B}=\matr{U}\matr{\Sigma}\matr{V}^\dag \equiv \left(\matr{U}_1,\matr{U}_2\right) 
\left(
\begin{array}{cc}
\matr{\Sigma}_1 & \matr{0}\\
\matr{0}&\matr{\Sigma}_2 \\
\matr{0}&\matr{0}\\
\end{array}
\right)
\left(
\begin{array}{c}
\matr{V}_1^\dag\\
\matr{V}_2^\dag\\
\end{array}
\right),
\end{equation}

\begin{equation}\label{SVD2}
\nmB=\matr{\widetilde U}\nmSig\matr{\widetilde V}^\dag \equiv \left(\matr{\widetilde U}_1,\matr{\widetilde U}_2\right) 
\left(
\begin{array}{cc}
\nmSig_1 & \matr{0}\\
\matr{0}&\nmSig_2 \\
\matr{0}&\matr{0}\\
\end{array}
\right)
\left(
\begin{array}{c}
\matr{\widetilde V}_1^\dag\\
\matr{\widetilde V}_2^\dag\\
\end{array}
\right),
\end{equation}
and 
\begin{equation}\label{SVD3}
\begin{aligned}
\matr{\Sigma}_1&=\emph{diag}\left(\sigma_1,\ldots,\sigma_k\right), \quad \matr{\Sigma}_2=\emph{diag}\left(\sigma_{k+1},\ldots,\sigma_n\right),\\
\nmSig_1&=\emph{diag}\left(\widetilde \sigma_1,\ldots,\widetilde \sigma_k\right), \quad \nmSig_2=\emph{diag}\left(\widetilde \sigma_{k+1},\ldots,\widetilde \sigma_n\right),\\
\end{aligned}
\end{equation}
with $\sigma_1\geq \sigma_2\geq \cdots \geq \sigma_n$ and $\tilde\sigma_1\geq \tilde\sigma_2\geq \cdots \geq \tilde\sigma_n$ in descending order.
If there exist an $\alpha\geq 0$ and a $\delta>0$ such that
\begin{equation}\label{eq:gapcondition}
\sigma_{\min}(\matr{\Sigma}_1)=\sigma_k \geq \alpha+\delta \quad \text{and}\quad \sigma_{\max}(\nmSig_2)=\tilde \sigma_{k+1} \leq \alpha,
\end{equation}
then 
\begin{equation}\label{eq:wedinconclusion}
\max\left \{ \norm{\sin \Theta (\matr{U}_1,\matr{\widetilde U}_1)}, \norm{\sin \Theta (\matr{V}_1,\matr{\widetilde V}_1)}\right \}
\leq {\max \left\{\norm{\nmB\matr{V}_1-\matr{U}_1\matr{\Sigma}_1}, \norm{\nmB^{\dag}\matr{U}_1-\matr{V}_1\matr{\Sigma}_1} \right\} \over \delta}.\\
\end{equation}
\end{thm}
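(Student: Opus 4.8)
The plan is to follow Wedin's original argument, which converts the bound into an estimate for a Sylvester equation whose conditioning is controlled by the gap $\delta$. Abbreviate the two residual matrices appearing on the right-hand side of \eqref{eq:wedinconclusion} by $\matr{R}:=\nmB\matr{V}_1-\matr{U}_1\matr{\Sigma}_1$ and $\matr{S}:=\nmB^\dag\matr{U}_1-\matr{V}_1\matr{\Sigma}_1$. Using the standard identities $\norm{\sin\Theta(\matr{U}_1,\matr{\widetilde U}_1)}=\norm{\matr{\widetilde U}_2^\dag\matr{U}_1}$ and $\norm{\sin\Theta(\matr{V}_1,\matr{\widetilde V}_1)}=\norm{\matr{\widetilde V}_2^\dag\matr{V}_1}$ (the sines of the principal angles equal the norm of the projection of one subspace onto the orthogonal complement of the other), it suffices to bound the spectral norms of $\matr{G}:=\matr{\widetilde U}_2^\dag\matr{U}_1$ and $\matr{H}:=\matr{\widetilde V}_2^\dag\matr{V}_1$.

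First I would read off two elementary consequences of the SVD \eqref{SVD2}: inspecting the block structure (and collapsing the trailing zero blocks), $\matr{\widetilde U}_2^\dag\nmB=\nmSig_2\matr{\widetilde V}_2^\dag$ and $\matr{\widetilde V}_2^\dag\nmB^\dag=\nmSig_2\matr{\widetilde U}_2^\dag$, where $\norm{\nmSig_2}=\widetilde\sigma_{k+1}\le\alpha$ by \eqref{eq:gapcondition}. Left-multiplying the defining relations of $\matr{R}$ and $\matr{S}$ by $\matr{\widetilde U}_2^\dag$ and $\matr{\widetilde V}_2^\dag$ and substituting these identities yields the coupled pair
\begin{equation}
\nmSig_2\matr{H}-\matr{G}\matr{\Sigma}_1=\matr{\widetilde U}_2^\dag\matr{R},\qquad \nmSig_2\matr{G}-\matr{H}\matr{\Sigma}_1=\matr{\widetilde V}_2^\dag\matr{S}.
\end{equation}

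Next I would consolidate these into a single Sylvester equation. Setting $\matr{K}:=\begin{pmatrix}\matr{G}\\\matr{H}\end{pmatrix}$ and introducing the Hermitian dilation $\matr{D}:=\begin{pmatrix}\matr{0}&\nmSig_2\\\nmSig_2&\matr{0}\end{pmatrix}$, the pair reads $\matr{D}\matr{K}-\matr{K}\matr{\Sigma}_1=\begin{pmatrix}\matr{\widetilde U}_2^\dag\matr{R}\\\matr{\widetilde V}_2^\dag\matr{S}\end{pmatrix}$. The coefficient matrices are both Hermitian: $\matr{D}$ has spectrum $\{\pm\widetilde\sigma_i:i>k\}\subseteq[-\alpha,\alpha]$ and $\matr{\Sigma}_1$ has spectrum $\{\sigma_1,\ldots,\sigma_k\}\subseteq[\alpha+\delta,\infty)$, so their spectra are separated by at least $\delta$. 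For a Sylvester operator $\matr{X}\mapsto\matr{D}\matr{X}-\matr{X}\matr{\Sigma}_1$ with normal coefficients whose spectra are $\delta$-separated, the inverse has operator norm at most $1/\delta$ (which follows, e.g., from the integral representation $\matr{X}=-\int_0^\infty e^{t(\matr{D}-\gamma\matr{I})}\,\matr{F}\,e^{-t(\matr{\Sigma}_1-\gamma\matr{I})}\,dt$ after a spectral shift by $\gamma\in(\alpha,\alpha+\delta)$). Applying this, and noting that $\matr{G}$ and $\matr{H}$ are row-blocks of $\matr{K}$ and that the spectral norm of the stacked residual is controlled by $\max\{\norm{\matr{R}},\norm{\matr{S}}\}$, one arrives at a bound of the form claimed in \eqref{eq:wedinconclusion}.

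The main obstacle is the bookkeeping needed to land the constant exactly as in \eqref{eq:wedinconclusion}: the consolidation above most naturally produces a bound $\max\{\norm{\matr{G}},\norm{\matr{H}}\}\le c\max\{\norm{\matr{R}},\norm{\matr{S}}\}/\delta$ with a constant $c>1$ stemming from the stacking, and sharpening $c$ to $1$ is precisely the content of Wedin's careful treatment in the general unitarily-invariant-norm setting (and is anyway unnecessary for how the theorem is used later, where only the $O(1/\delta)$ scaling matters). Two further, more routine points are: verifying the operator-norm bound on the Sylvester inverse, where the normality of $\matr{D}$ and $\matr{\Sigma}_1$ is essential (entrywise magnitude bounds on the Sylvester kernel alone do not control the spectral norm); and handling the rectangular case $m>n$, in which $\matr{\widetilde U}_2$ and $\nmSig_2$ must be padded to absorb the left null space of $\nmB$, with a short check that these extra directions contribute nothing to the products above.
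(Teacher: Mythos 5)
The paper does not prove this theorem: it is quoted verbatim (as a black box) from Wedin's 1972 paper and used only as an off-the-shelf tool via Corollary~\ref{corollary1}. So there is no paper proof to compare yours against; I can only assess your sketch on its own merits.

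Your route is the standard Davis--Kahan/Wedin argument: reduce to bounding $\matr{G}=\matr{\widetilde U}_2^\dag\matr{U}_1$ and $\matr{H}=\matr{\widetilde V}_2^\dag\matr{V}_1$, derive the coupled pair of Sylvester-type relations from the block structure of the SVD of $\nmB$, consolidate via the Hermitian dilation $\matr{D}=\left(\begin{smallmatrix}\matr{0}&\nmSig_2\\ \nmSig_2&\matr{0}\end{smallmatrix}\right)$, and invoke the $1/\delta$ operator-norm bound for a Sylvester operator with normal, $\delta$-separated coefficients. The algebra producing the coupled pair checks out, the spectrum of $\matr{D}$ lies in $[-\alpha,\alpha]$ as you say, and your integral-representation argument for the Sylvester bound (shift by $\gamma\in(\alpha,\alpha+\delta)$, then $\int_0^\infty e^{-t\delta}\,dt=1/\delta$) is correct and does genuinely need normality, as you note. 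The one genuine gap you leave is the constant: the stacking argument gives $\norm{\matr{K}}\le\sqrt{\norm{\matr{R}}^2+\norm{\matr{S}}^2}/\delta$, hence $c=\sqrt{2}$ rather than the $c=1$ in \eqref{eq:wedinconclusion}. You flag this honestly, and it is immaterial for the paper's use (Corollary~\ref{corollary1} and everything downstream only needs the $O(1/\delta)$ scaling), but as written your sketch does not actually establish the stated inequality with constant $1$ --- that requires the sharper bookkeeping in Wedin's original treatment (e.g.\ handling $\matr{G}\pm\matr{H}$ separately or working in a general unitarily invariant norm). Your flagged concerns about the rectangular padding of $\matr{\widetilde U}_2$ and $\nmSig_2$ are also real but routine: the extra rows of zeros in $\matr{\widetilde U}_2^\dag\nmB$ contribute nothing, as you anticipate.
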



\begin{rmk}\label{rm} In the above theorem,
$\matr{U}_1$, $\matr{\widetilde U}_1$ are $d$-by-$k$ matrices and $\Theta (\matr{U}_1,\matr{\widetilde U}_1)$ denotes the matrix of canonical angles between the ranges of $\matr{U}_1$ and $\matr{\widetilde U}_1$.  If we let $\mathcal{L}$ (standing for ``left'' singular vectors) and $\nL$ denote the column spaces of $\matr{U}_1$ and $\matr{\widetilde U}_1$ respectively, then by definition, $\norm{\sin \Theta (\matr{U}_1,\matr{\widetilde U}_1)}\eqdef \norm{\matr{U}_1^T\matr{\widetilde U}_1^{\perp}}=\max_{\matr{x}\in\mathcal{L}, \matr{y}\in\nL}{\matr{x}^T\matr{y} \over \vectornorm{\matr{x}}\vectornorm{\matr{y}}}$. When no confusion arises, we will simply use $\sin\Theta(\mathcal{L},\nL)$ to denote $\norm{\sin \Theta (\matr{U}_1,\matr{\widetilde U}_1)}$.
\end{rmk}


\begin{prop} \label{prop:angle}
Let $\mathcal{L}_1$, $\mathcal{L}_2$ be two subspaces in $\bbR^d$. Then for any vector $\matr{u}_1\in\mathcal{L}_1$,
\[
\sin \Theta\left(\matr{u}_1,\mathcal{L}_2\right) \leq \sin \Theta \left(\mathcal{L}_1,\mathcal{L}_2\right).
\]
\end{prop}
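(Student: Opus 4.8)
The plan is to reduce both sides to quantities involving the orthogonal projection onto $\mathcal{L}_2^\perp$, after which the inequality is nothing more than the statement that a supremum dominates any particular value it is taken over. First I would record the two relevant characterizations. By Remark~\ref{rm}, writing $P_{\mathcal{L}_2^\perp}$ for the orthogonal projector onto $\mathcal{L}_2^\perp$, the largest canonical angle between the subspaces satisfies
\[
\sin\Theta(\mathcal{L}_1,\mathcal{L}_2)=\max_{\matr{x}\in\mathcal{L}_1\setminus\{\matr{0}\}}\frac{\norm{P_{\mathcal{L}_2^\perp}\matr{x}}}{\vectornorm{\matr{x}}}=\max_{\substack{\matr{x}\in\mathcal{L}_1\\ \vectornorm{\matr{x}}=1}}\operatorname{dist}(\matr{x},\mathcal{L}_2),
\]
while the angle between a single nonzero vector and a subspace is, by definition, $\sin\Theta(\matr{u}_1,\mathcal{L}_2)=\norm{P_{\mathcal{L}_2^\perp}\matr{u}_1}/\vectornorm{\matr{u}_1}=\operatorname{dist}(\matr{u}_1/\vectornorm{\matr{u}_1},\mathcal{L}_2)$.

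Next I would dispose of the trivial case $\matr{u}_1=\matr{0}$ (both sides are nonnegative and the left side is $0$), and for $\matr{u}_1\neq\matr{0}$ simply observe that $\matr{u}_1$ is an admissible competitor in the maximization defining $\sin\Theta(\mathcal{L}_1,\mathcal{L}_2)$, precisely because $\matr{u}_1\in\mathcal{L}_1$. Therefore
\[
\sin\Theta(\matr{u}_1,\mathcal{L}_2)=\frac{\norm{P_{\mathcal{L}_2^\perp}\matr{u}_1}}{\vectornorm{\matr{u}_1}}\ \leq\ \max_{\matr{x}\in\mathcal{L}_1\setminus\{\matr{0}\}}\frac{\norm{P_{\mathcal{L}_2^\perp}\matr{x}}}{\vectornorm{\matr{x}}}=\sin\Theta(\mathcal{L}_1,\mathcal{L}_2),
\]
which is the asserted bound. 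Both sides are invariant under rescaling $\matr{u}_1$, so no normalization subtlety arises.

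This is essentially a definition chase, so I do not anticipate a genuine obstacle; the only point deserving care is to make sure that the variational formula quoted for $\sin\Theta(\mathcal{L}_1,\mathcal{L}_2)$ — namely $\max_{\matr{x}\in\mathcal{L}_1}\operatorname{dist}(\matr{x},\mathcal{L}_2)/\vectornorm{\matr{x}}$ — is genuinely the quantity appearing in Wedin's theorem, and not the a priori different $\max_{\matr{y}\in\mathcal{L}_2}\operatorname{dist}(\matr{y},\mathcal{L}_1)/\vectornorm{\matr{y}}$; this coincidence is exactly the content of Remark~\ref{rm}, so I would cite it there. If a self-contained derivation were preferred, one could instead fix orthonormal bases $\matr{U}_1$, $\matr{U}_2$ of $\mathcal{L}_1$, $\mathcal{L}_2$, let $\matr{U}_2^\perp$ be an orthonormal basis of $\mathcal{L}_2^\perp$, write $\sin\Theta(\mathcal{L}_1,\mathcal{L}_2)=\norm{(\matr{U}_2^\perp)^T\matr{U}_1}$, expand $\matr{u}_1=\matr{U}_1\matr{c}$ with $\vectornorm{\matr{c}}=\vectornorm{\matr{u}_1}$, and conclude by submultiplicativity of the spectral norm that $\norm{(\matr{U}_2^\perp)^T\matr{u}_1}=\norm{(\matr{U}_2^\perp)^T\matr{U}_1\matr{c}}\leq\norm{(\matr{U}_2^\perp)^T\matr{U}_1}\,\vectornorm{\matr{c}}=\sin\Theta(\mathcal{L}_1,\mathcal{L}_2)\,\vectornorm{\matr{u}_1}$.
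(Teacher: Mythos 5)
Your proof is correct and follows essentially the same route as the paper, which simply observes that the claim follows from the variational characterization of $\sin\Theta$ in Remark~\ref{rm}: the vector $\matr{u}_1$ is one admissible competitor in the maximum over $\mathcal{L}_1$ defining $\sin\Theta(\mathcal{L}_1,\mathcal{L}_2)$. Your write-up is merely a more explicit (and welcome) unpacking of that one-line argument, including the correct caution about the asymmetry of the two possible variational formulas.
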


\begin{proof}
The conclusion follows readily from Remark \ref{rm}.
\end{proof}


\begin{thm}[Weyl \cite{weyl1949inequalities}]
Let $\matr{B}$ and $\nmB$ be two matrices with SVDs \eqref{SVD1}, \eqref{SVD2}, and \eqref{SVD3}, Then,
\[
|\widetilde \sigma_i-\sigma_i|\leq \norm{\nmB-\matr{B}} \quad \text{for all } i=1,\ldots, n.
\]
\end{thm}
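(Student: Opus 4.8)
The plan is to prove Weyl's inequality for singular values via the Courant--Fischer (min--max) variational characterization. Recall that for an $m\times n$ matrix $\matr{B}$ with $m\ge n$, squaring gives $\vectornorm{\matr{B}\matr{x}}^2=\matr{x}^\dag\matr{B}^\dag\matr{B}\matr{x}$, and the classical min--max theorem applied to the Hermitian positive semidefinite matrix $\matr{B}^\dag\matr{B}$ (whose eigenvalues are $\sigma_1^2\ge\cdots\ge\sigma_n^2$) yields, after taking square roots,
\[
\sigma_i=\sigma_i(\matr{B})=\min_{\substack{\mathcal{S}\subseteq\bbR^n\\ \dim\mathcal{S}=n-i+1}}\ \max_{\substack{\matr{x}\in\mathcal{S}\\ \vectornorm{\matr{x}}=1}}\ \vectornorm{\matr{B}\matr{x}}.
\]
I would first establish this identity (including the analogous $\max$--$\min$ form), the only delicate point being the standard dimension count ensuring that an $i$-dimensional and an $(n-i+1)$-dimensional subspace of $\bbR^n$ always intersect nontrivially, which is precisely what forces the two forms to coincide.

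Next I would fix $i\in\{1,\dots,n\}$ and let $\mathcal{S}^*$ be an $(n-i+1)$-dimensional subspace that attains the minimum for $\nmB$, so that $\max_{\matr{x}\in\mathcal{S}^*,\ \vectornorm{\matr{x}}=1}\vectornorm{\nmB\matr{x}}=\widetilde\sigma_i$. For every unit vector $\matr{x}\in\mathcal{S}^*$, the triangle inequality and the definition of the operator norm give
\[
\vectornorm{\matr{B}\matr{x}}\le\vectornorm{\nmB\matr{x}}+\vectornorm{(\matr{B}-\nmB)\matr{x}}\le\widetilde\sigma_i+\norm{\matr{B}-\nmB}.
\]
Taking the maximum over such $\matr{x}$ and using that $\sigma_i$ is the minimum over all $(n-i+1)$-dimensional subspaces of this maximum, we obtain $\sigma_i\le\widetilde\sigma_i+\norm{\matr{B}-\nmB}$.

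Finally I would interchange the roles of $\matr{B}$ and $\nmB$ in the preceding step; since $\norm{\nmB-\matr{B}}=\norm{\matr{B}-\nmB}$, this yields $\widetilde\sigma_i\le\sigma_i+\norm{\nmB-\matr{B}}$, and combining the two one-sided bounds gives $|\widetilde\sigma_i-\sigma_i|\le\norm{\nmB-\matr{B}}$ for all $i$, as claimed. I do not expect a genuine obstacle: this is a classical fact, and the only care needed is the justification of the variational formula. As an alternative route that sidesteps even that, one can pass to the Jordan--Wielandt dilation $\bigl(\begin{smallmatrix}\matr{0}&\matr{B}\\ \matr{B}^\dag&\matr{0}\end{smallmatrix}\bigr)$, whose nonzero eigenvalues are exactly $\pm\sigma_i$, apply the (assumed) Weyl inequality for eigenvalues of Hermitian matrices to this dilation and that of $\nmB$, and note that the operator norm of the difference of the two dilations equals $\norm{\nmB-\matr{B}}$, which gives the conclusion at once.
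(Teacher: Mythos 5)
The paper cites Weyl's inequality for singular values directly from the literature (reference [weyl1949inequalities]) and gives no proof of its own, so there is nothing in the paper to compare against. Your argument is a correct and standard proof: the Courant--Fischer characterization
\[
\sigma_i(\matr{B})=\min_{\substack{\mathcal{S}\subseteq\bbR^n\\ \dim\mathcal{S}=n-i+1}}\ \max_{\substack{\matr{x}\in\mathcal{S}\\ \vectornorm{\matr{x}}=1}}\ \vectornorm{\matr{B}\matr{x}}
\]
applied to a minimizing subspace $\mathcal{S}^*$ for $\nmB$, together with the triangle inequality $\vectornorm{\matr{B}\matr{x}}\le\vectornorm{\nmB\matr{x}}+\norm{\matr{B}-\nmB}$ on unit vectors $\matr{x}\in\mathcal{S}^*$, gives the one-sided bound $\sigma_i\le\widetilde\sigma_i+\norm{\matr{B}-\nmB}$, and symmetry completes the proof. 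The delicate point you flag (the dimension count ensuring an $i$-dimensional and an $(n-i+1)$-dimensional subspace of $\bbR^n$ intersect nontrivially) is exactly the right one, and it is what makes the min--max and max--min forms agree. Your alternative via the Jordan--Wielandt dilation $\bigl(\begin{smallmatrix}\matr{0}&\matr{B}\\ \matr{B}^\dag&\matr{0}\end{smallmatrix}\bigr)$ is also correct and is arguably cleaner if one is willing to take Weyl's eigenvalue inequality for Hermitian matrices as known, since the dilation's nonzero eigenvalues are precisely $\pm\sigma_i$ and the spectral norm of the difference of the two dilations equals $\norm{\nmB-\matr{B}}$. Either route is a sound proof of the cited result.
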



In our proofs, we often make use of the following corollary based on Wedin's and Weyl's Theorems. 
\begin{cor} \label{corollary1}
Let $\matr{B}$ and $\nmB$ be two matrices with SVDs \eqref{SVD1}, \eqref{SVD2}, and \eqref{SVD3}. Let $\matr{E}\eqdef\nmB-\matr{B}$, and $\mathcal{L}$, $\mathcal{R}$, $\nL$ and $\nR$ be the column spaces of $\matr{U}_1$, $\matr{V}_1$, $\matr{\widetilde U}_1$ and $\matr{\widetilde V}_1$, respectively. Define $\Delta= \min\left\{ \sigma_{\min}(\matr{\Sigma}_1),\sigma_{\min}(\matr{\Sigma}_1)-\sigma_{\max}(\matr{\Sigma}_2)\right\}$. If $\Delta>\norm{\matr{E}}$, then
\begin{equation}\label{eq:WedinWeyl}
\max\left\{\sin\Theta (\mathcal{L}, \nL), \sin\Theta (\mathcal{R}, \nR)\right\} \leq { \norm{\matr{E}} \over \Delta- \norm{\matr{E}}}.
\end{equation}
\end{cor}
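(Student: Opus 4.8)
The plan is to derive this as a direct corollary of Wedin's theorem, using Weyl's theorem only to certify Wedin's gap hypothesis \eqref{eq:gapcondition}. First I would verify that hypothesis with the explicit choice $\alpha \eqdef \sigma_{\max}(\matr{\Sigma}_2) + \norm{\matr{E}}$ and $\delta \eqdef \Delta - \norm{\matr{E}}$, noting $\delta > 0$ by assumption. Weyl's theorem applied to $\matr{B}$ and $\nmB = \matr{B} + \matr{E}$ gives $\widetilde\sigma_{k+1} \leq \sigma_{k+1} + \norm{\matr{E}} = \sigma_{\max}(\matr{\Sigma}_2) + \norm{\matr{E}} = \alpha$, which is the second half of \eqref{eq:gapcondition}. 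For the first half, $\alpha + \delta = \sigma_{\max}(\matr{\Sigma}_2) + \Delta \leq \sigma_{\max}(\matr{\Sigma}_2) + \bigl[\sigma_{\min}(\matr{\Sigma}_1) - \sigma_{\max}(\matr{\Sigma}_2)\bigr] = \sigma_{\min}(\matr{\Sigma}_1) = \sigma_k$, where the inequality is exactly the definition $\Delta = \min\{\sigma_{\min}(\matr{\Sigma}_1),\, \sigma_{\min}(\matr{\Sigma}_1) - \sigma_{\max}(\matr{\Sigma}_2)\}$. So Wedin's theorem applies and yields $\max\{\norm{\sin\Theta(\matr{U}_1,\matr{\widetilde U}_1)}, \norm{\sin\Theta(\matr{V}_1,\matr{\widetilde V}_1)}\} \leq \max\{\norm{\nmB\matr{V}_1 - \matr{U}_1\matr{\Sigma}_1},\, \norm{\nmB^\dag\matr{U}_1 - \matr{V}_1\matr{\Sigma}_1}\} / \delta$.

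Next I would simplify the numerator. From the SVD \eqref{SVD1} and orthonormality of the columns of $\matr{U}$ and $\matr{V}$, one gets the standard identities $\matr{B}\matr{V}_1 = \matr{U}_1\matr{\Sigma}_1$ and $\matr{B}^\dag\matr{U}_1 = \matr{V}_1\matr{\Sigma}_1$ (the latter using that $\matr{\Sigma}_1$ is a real diagonal matrix). Hence $\nmB\matr{V}_1 - \matr{U}_1\matr{\Sigma}_1 = (\nmB - \matr{B})\matr{V}_1 = \matr{E}\matr{V}_1$ and similarly $\nmB^\dag\matr{U}_1 - \matr{V}_1\matr{\Sigma}_1 = \matr{E}^\dag\matr{U}_1$; since $\matr{V}_1$ and $\matr{U}_1$ have orthonormal columns, $\norm{\matr{E}\matr{V}_1} \leq \norm{\matr{E}}$ and $\norm{\matr{E}^\dag\matr{U}_1} \leq \norm{\matr{E}}$. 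Substituting $\delta = \Delta - \norm{\matr{E}}$ gives $\max\{\norm{\sin\Theta(\matr{U}_1,\matr{\widetilde U}_1)}, \norm{\sin\Theta(\matr{V}_1,\matr{\widetilde V}_1)}\} \leq \norm{\matr{E}} / (\Delta - \norm{\matr{E}})$, and translating $\norm{\sin\Theta(\cdot,\cdot)}$ into $\sin\Theta(\mathcal{L},\nL)$ and $\sin\Theta(\mathcal{R},\nR)$ via Remark~\ref{rm} produces \eqref{eq:WedinWeyl}.

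I do not expect a genuine obstacle here; the corollary is purely a matter of feeding Weyl's bound on $\widetilde\sigma_{k+1}$ into Wedin's gap condition and observing that the residual matrices collapse to $\matr{E}\matr{V}_1$ and $\matr{E}^\dag\matr{U}_1$. The one point I would treat carefully is the degenerate case $k = n$, where the block $\matr{\Sigma}_2$ is empty: there $\sigma_{\max}(\matr{\Sigma}_2)$ should be read as $0$ (equivalently set $\alpha = \norm{\matr{E}}$), $\Delta = \sigma_{\min}(\matr{\Sigma}_1)$, and the same computation goes through verbatim. Everything else is mechanical substitution.
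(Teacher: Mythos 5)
Your proposal is correct and follows essentially the same route as the paper: use Weyl's theorem to certify Wedin's gap condition \eqref{eq:gapcondition} and then observe that the residuals reduce to $\matr{E}\matr{V}_1$ and $\matr{E}^\dag\matr{U}_1$, whose spectral norms are at most $\norm{\matr{E}}$. The only cosmetic difference is your choice $\alpha=\sigma_{\max}(\matr{\Sigma}_2)+\norm{\matr{E}}$, $\delta=\Delta-\norm{\matr{E}}$ versus the paper's $\alpha=\max\{0,\sigma_{\max}(\nmSig_2)\}$ with $\delta$ subsequently lower-bounded by $\Delta-\norm{\matr{E}}$; both yield the identical bound.
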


\begin{proof}
By Weyl's theorem, $\sigma_{\max}(\matr{\Sigma}_2) -\sigma_{\max}(\nmSig_2)\geq - \norm{\matr{E}}$.
Combining this with the assumption $\sigma_{\min}(\matr{\Sigma}_1)-\sigma_{\max}(\matr{\Sigma}_2)>\norm{\matr{E}}$, we have
\[
\sigma_{\min}(\matr{\Sigma}_1)- \sigma_{\max}(\nmSig_2)=\sigma_{\min}(\matr{\Sigma}_1)- \sigma_{\max}(\matr{\Sigma}_2)+\sigma_{\max}(\matr{\Sigma}_2)- \sigma_{\max}(\nmSig_2) >\norm{\matr{E}}-\norm{\matr{E}}=0.
\]
This implies that the spectrum of $\matr{\Sigma}_1$ is well-separated from that of $\nmSig_2$, and thus \eqref{eq:gapcondition} holds with $\alpha=\max\{0,\sigma_{\max}(\nmSig_2) \}\geq 0$ and $\delta=\sigma_{\min}(\matr{\Sigma}_1)-\alpha>0$. By Wedin's theorem, we get
\[
\max\left\{\sin\Theta (\mathcal{L}, \nL), \sin\Theta (\mathcal{R}, \nR)\right\} \leq {\left\{ \norm{\nmB\matr{V}_1-\matr{U}_1\matr{\Sigma}_1}, \norm{\nmB^{\dag}\matr{U}_1-\matr{V}_1\matr{\Sigma}_1}\right\}\over\delta }.
\]
Then, noting
\[
\norm{\nmB\matr{V}_1-\matr{U}_1\matr{\Sigma}_1}=\norm{\nmB\matr{V}_1-\matr{B}\matr{V}_1}=\norm{\nmB-\matr{B}}=\norm{\matr{E}},
\]
\[
\norm{\nmB^\dag \matr{U}_1-\matr{V}_1\matr{\Sigma}_1}=\norm{\nmB^\dag\matr{U}_1-\matr{B}^\dag\matr{U}_1}=\norm{\nmB^\dag-\matr{B}^\dag}=\norm{\matr{E}},
\]
and
\[
\delta=\sigma_{\min}(\matr{\Sigma}_1)- \max\{0, \sigma_{\max}(\nmSig_2) \}\geq \sigma_{\min}(\matr{\Sigma}_1)- \max\{0, \sigma_{\max}(\matr{\Sigma}_2) \} -\norm{\matr{E}}= \Delta-\norm{\matr{E}},
\]
we obtain \eqref{eq:WedinWeyl}.
\end{proof}


\begin{lem}[Taylor Expansion] 
If $\varepsilon=o(1)$, then
\begin{itemize}
\item $\left(\displaystyle 1+\varepsilon\right)^{\alpha}=1+\alpha\varepsilon+o(\varepsilon), \quad \forall \alpha\in\bbR;$ 
\item $\sin \varepsilon=\varepsilon+o(\varepsilon^2);$
\item $\displaystyle \cos \varepsilon=1-{1\over 2}\varepsilon^2+o(\varepsilon^2).$
\end{itemize}
\end{lem}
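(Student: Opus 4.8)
The plan is to treat each of the three expansions as an instance of Taylor's theorem with the Peano (little-$o$) form of the remainder, applied to a suitably smooth scalar function on a neighborhood of the origin. In each case the only work is to verify differentiability up to the order matching the stated error term and to evaluate the relevant derivatives at $0$.

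For the first claim, fix $\alpha\in\bbR$ and set $f(x)=(1+x)^{\alpha}$ for $x$ near $0$. Then $f$ is differentiable at $0$ with $f(0)=1$ and $f'(0)=\alpha(1+0)^{\alpha-1}=\alpha$, so the first-order Taylor expansion gives $f(\varepsilon)=f(0)+f'(0)\varepsilon+o(\varepsilon)=1+\alpha\varepsilon+o(\varepsilon)$ as $\varepsilon\to 0$; since $\varepsilon=o(1)$ by hypothesis, this is exactly the stated identity.

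For the second and third claims, observe that $\sin$ and $\cos$ are infinitely differentiable on $\bbR$, so Taylor's theorem to second order applies with an $o(\varepsilon^2)$ remainder. Using $\sin 0=0$, $\sin'0=\cos 0=1$, $\sin''0=-\sin 0=0$ yields $\sin\varepsilon=0+\varepsilon+\tfrac12\cdot 0\cdot\varepsilon^2+o(\varepsilon^2)=\varepsilon+o(\varepsilon^2)$; and using $\cos 0=1$, $\cos'0=-\sin 0=0$, $\cos''0=-\cos 0=-1$ yields $\cos\varepsilon=1+0\cdot\varepsilon-\tfrac12\varepsilon^2+o(\varepsilon^2)=1-\tfrac12\varepsilon^2+o(\varepsilon^2)$.

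Since everything reduces to standard single-variable calculus, there is no genuine obstacle here; the only point requiring care is the order of the remainder — first order ($o(\varepsilon)$) for the power function and second order ($o(\varepsilon^2)$) for the trigonometric functions — which is dictated by how these estimates are later invoked (for instance, the $\cos\varepsilon$ bound is used to control quantities such as $1-\cos\Theta$ that are quadratically small in a perturbation angle). If one prefers to avoid quoting Taylor's theorem, each identity can equivalently be read off from the corresponding convergent series expansion — the generalized binomial series for $(1+\varepsilon)^{\alpha}$ and the Maclaurin series for $\sin$ and $\cos$ — together with the elementary fact that, for $|\varepsilon|$ small, the series tail is dominated by a constant multiple of the first omitted power of $\varepsilon$.
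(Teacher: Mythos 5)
Your proof is correct. The paper itself states this lemma without proof, treating it as a standard calculus fact, so there is no competing argument to compare against; your appeal to Taylor's theorem with Peano remainder (or, equivalently, the Maclaurin/binomial series) is exactly the standard justification and verifies each of the three expansions. One small remark worth making explicit: in the second item the Taylor polynomial of degree two for $\sin$ at $0$ is just $\varepsilon$ (the quadratic coefficient vanishes), which is why the error is $o(\varepsilon^2)$ rather than merely $o(\varepsilon)$; you handled this correctly by computing $\sin'' 0 = 0$.
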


\subsection{Proof of Proposition~\ref{prop:uniqueness} (Uniqueness of $\ls{LS}^{(r)}$)}\label{appendix:proofuniqueness}

\begin{proof}[Proof of Proposition~\ref{prop:uniqueness}]
Let $\nHOSVD=\sum_i \mu_i \matr{a}_i\matr{b}_i^T$ be the two-mode HOSVD with $\{\mu_i\}$ in descending order, and $\ls{LS}^{(r)}=\Spanspace\{\matr{a}_1,\ldots,\matr{a}_r\}$ is the $r$-truncated two-mode singular space. In order to show that $\ls{LS}^{(r)}$ is uniquely determined, it suffices to show that $\mu_{r}$ is strictly larger than $\mu_{r+1}$. 

Note that the tensor perturbation model $\ntensor=\sum_{i=1}^r\lambda_i \matr{u}_i^{\otimes k}+\tE$
implies the matrix perturbation model 
\begin{equation}\label{eq:matrixmodel}
\nHOSVD=\sum_{i=1}^r \lambda_i \Vec(\matr{u}_i^{\otimes 2}) \Vec(\matr{u}_i^{\otimes (k-2)})^T+\tE_{(12)(3\ldots k)},
\end{equation}
where by \cite{wang2016operator}
\begin{equation}\label{eq:noise}
\norm{\tE_{(12)(3\ldots k)}}\leq d^{(k-2)/2}\norm{\tE}\leq d^{(k-2)/2}\varepsilon.
\end{equation}

Now apply Corollary~\ref{corollary1} to \eqref{eq:matrixmodel} with $\nmB=\nHOSVD$, $\matr{B}=\sum_{i=1}^r \lambda_i \Vec(\matr{u}_i^{\otimes 2}) \Vec(\matr{u}_i^{\otimes (k-2)})^T$, and $\nmB-\matr{B}=\tE_{(12)(3\ldots k)}$. Considering the corresponding $r$th and $(r+1)$th singular values of $\nmB$ and $\matr{B}$, we obtain 
\begin{equation}\label{eq:weyl}
| \mu_r-\lambda_r|\leq \norm{\tE_{(12)(3\ldots k)}},\quad \text{and} \quad |\mu_{r+1}-0|\leq \norm{\tE_{(12)(3\ldots k)}},
\end{equation}
which implies
\[
 \mu_{r}-\mu_{r+1} = \lambda_r+( \mu_{r}-\lambda_r)-(\mu_{r+1}-0) \geq \lambda_r-2\norm{\tE_{(12)(3\ldots k)}}.
\]
By \eqref{eq:noise} and Assumption~\ref{assumption},
\[
\lambda_r-2\norm{\tE_{(12)(3\ldots k)}}\geq \lambda_{\min}-2d^{(k-2)/2}\varepsilon>0.
\]
Therefore $\mu_r>\mu_{r+1}$, which ensures the uniqueness of $\ls{LS}^{(r)}$.
\end{proof}

\subsection{Proof of Theorem~\ref{thm:LS} (Perturbation of $\ls{LS}_0$)}\label{appendix:proofLS}

\begin{defn}[Singular Space]
Let $\nHOSVD\in \bbR^{d^2\times d^{k-2}}$ be the two-mode unfolding of $\ntensor$, and $\nHOSVD=\sum_i\mu_i \matr{a}_i \matr{b}^T_i$
be the two-mode HOSVD with $\mu_1\geq \mu_2\geq\cdots\geq\mu_r$ in descending order. We define the $r$-truncated left (respectively, right) singular space by  
\[
\begin{aligned}
\ls{LS}^{(r)}&=\Spanspace\left\{\text{Mat}(\matr{a}_i)\in \bbR^{d\times d}\colon \matr{a}_i \text{ is the $i$th left singular vector of }\nHOSVD,  i\in[r]\right\},\\
\ls{RS}^{(r)}&=\Spanspace\left\{\matr{b}_i\in \bbR^{d^{k-2}}\colon \matr{b}_i\text{ is the $i$th right singular vector of }\nHOSVD,  i\in[r]\right\}.
\end{aligned} 
\]
The noise-free version ($\varepsilon=0$) reduces to
\[
\ls{LS}_0=\Spanspace\left\{\matr{u}^{\otimes 2}_i \colon  i\in[r]\right\},\quad \text{and}\quad \ls{RS}_0=\Spanspace\left\{\Vec(\matr{u}^{\otimes (k-2)}_i) \colon  i\in[r]\right\}.
\]
\end{defn}

\begin{rmk}
We make the convention that the elements in $\ls{LS}^{(r)}$ (respectively, $\ls{LS}_0$) are viewed as $d$-by-$d$ matrices, while the elements in $\ls{RS}^{(r)}$ (respectively, $\ls{RS}_{0}$) are viewed as length-$d^{k-2}$ vectors. For  g of notation, we drop the subscript $r$ from $\ls{LS}^{(r)}$ (respectively, $\ls{RS}^{(r)}$) and simply write $\ls{LS}$ (respectively, $\ls{RS}$) hereafter.\\
\end{rmk}

\begin{defn}[Inner-Product]
For any two tensors $\tA=\entry{a_{i_1\dots\, i_k}},$ $\tB=\entry{b_{i_1\dots\, i_k}} \in \bbR^{d_1 \times \cdots \times d_k}$ of identical order and dimensions, their inner product is defined as
\[
\langle \tA,\ \tB \rangle = \sum_{i_1,\dots,i_k} a_{i_1\dots i_k} b_{i_1\dots i_k}, 
\]
while the tensor Frobenius norm of $\tA$ is defined as
\[
\Fnorm{\tA}  = \sqrt{\langle \tA,\ \tA \rangle} = \sqrt{\sum_{i_1, \dots, i_k} |a_{i_1 \dots i_k}|^2}, 
\]
both of which are analogues of standard definitions for vectors and matrices. 
\end{defn}

\begin{lem}\label{lem:LS}
For every matrix $\matr{M}\in\ls{LS}$ satisfying $\Fnorm{\matr{M}}=1$, there exists a unit vector $\matr{b}_{\matr{M}}\in\ls{RS}$ such that
\begin{equation}\label{eq:LS}
\matr{M}=c\ntensor_{(1)(2)(3\ldots k)} (\matr{I},\matr{I},\matr{b}_{\matr{M}}),
\end{equation}
where $c=1/\Fnorm{\ntensor_{(1)(2)(3\ldots k)} (\matr{I},\matr{I},\matr{b}_{\matr{M}})}$ is a normalizing constant. 
\end{lem}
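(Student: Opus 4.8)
The plan is to exploit two facts: (i) the two-mode unfolding $\nHOSVD\eqdef\ntensor_{(12)(3\ldots k)}$ carries the right singular space $\ls{RS}$ bijectively onto the left singular space $\ls{LS}$ (viewed inside $\bbR^{d^2}$), and (ii) for a fixed vector $\matr{b}\in\bbR^{d^{k-2}}$, the matrix--vector product $\nHOSVD\,\matr{b}$, once reshaped into a $d$-by-$d$ matrix, equals the multilinear contraction $\ntensor_{(1)(2)(3\ldots k)}(\matr{I},\matr{I},\matr{b})$. Given $\matr{M}\in\ls{LS}$, the vector $\matr{b}_{\matr{M}}$ will be obtained, up to normalization, by inverting the map in (i).

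First I would record that under Assumption~\ref{assumption} the top $r$ singular values of $\nHOSVD$ are strictly positive: the proof of Proposition~\ref{prop:uniqueness} already shows $\mu_r\geq \lambda_{\min}-2d^{(k-2)/2}\varepsilon>0$, hence $\mu_i>0$ for all $i\in[r]$. Writing $\nHOSVD=\sum_i\mu_i\matr{a}_i\matr{b}_i^T$, I then take $\matr{M}\in\ls{LS}$ with $\Fnorm{\matr{M}}=1$, expand $\Vec(\matr{M})=\sum_{i=1}^r\gamma_i\matr{a}_i$, and set $\matr{w}\eqdef\sum_{i=1}^r(\gamma_i/\mu_i)\matr{b}_i\in\ls{RS}$. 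Using $\nHOSVD\,\matr{b}_i=\mu_i\matr{a}_i$ together with the orthonormality of $\{\matr{b}_i\}$, one checks $\nHOSVD\,\matr{w}=\sum_{i=1}^r\gamma_i\matr{a}_i=\Vec(\matr{M})$; since $\Fnorm{\matr{M}}=1$ forces $\matr{M}\neq\matr{0}$, also $\matr{w}\neq\matr{0}$. For (ii) I would verify the entrywise identity $\text{Mat}(\nHOSVD\,\matr{w})=\ntensor_{(1)(2)(3\ldots k)}(\matr{I},\matr{I},\matr{w})$: both sides have $(a_1,a_2)$-entry $\sum_{a_3}[\ntensor_{(1)(2)(3\ldots k)}]_{a_1a_2a_3}\,w_{a_3}$, because the pair $(a_1,a_2)$ corresponds to row index $a_1+(a_2-1)d$ of $\nHOSVD$ and the two unfoldings share the same entries. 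Combining these, $\matr{M}=\ntensor_{(1)(2)(3\ldots k)}(\matr{I},\matr{I},\matr{w})$.

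Finally I would normalize: put $\matr{b}_{\matr{M}}\eqdef\matr{w}/\vectornorm{\matr{w}}\in\ls{RS}$, so that by multilinearity $\matr{M}=\vectornorm{\matr{w}}\,\ntensor_{(1)(2)(3\ldots k)}(\matr{I},\matr{I},\matr{b}_{\matr{M}})$; taking Frobenius norms and using $\Fnorm{\matr{M}}=1$ pins down $\vectornorm{\matr{w}}=1/\Fnorm{\ntensor_{(1)(2)(3\ldots k)}(\matr{I},\matr{I},\matr{b}_{\matr{M}})}=c$, which is the claimed formula. The only nontrivial input is the strict positivity $\mu_r>0$, which is exactly what makes the inversion $\gamma_i\mapsto\gamma_i/\mu_i$ well defined; everything else is bookkeeping about unfoldings and multilinear maps. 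I expect writing the entrywise unfolding identity cleanly to be the most error-prone part, although it is conceptually routine.
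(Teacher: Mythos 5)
Your proposal is correct and follows essentially the same route as the paper's proof: both invert the relation $\matr{a}_i=\mu_i^{-1}\nHOSVD\matr{b}_i$ (valid since $\mu_r>0$ under Assumption~\ref{assumption}), transport the expansion of $\matr{M}$ in the $\matr{a}_i$'s to a vector in $\ls{RS}$ via multilinearity, and normalize at the end. The only difference is cosmetic — you spell out the entrywise identification $\text{Mat}(\nHOSVD\,\matr{w})=\ntensor_{(1)(2)(3\ldots k)}(\matr{I},\matr{I},\matr{w})$, which the paper treats as implicit.
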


\begin{proof}
Let $\ntensor_{(12)(3\ldots k)}=\sum_i \mu_i \matr{a}_i \matr{b}^T_i$ denote the two-mode HOSVD. Following a similar line of argument as in the proof of Proposition~\ref{prop:uniqueness}, we have $\mu_r\geq \lambda_{\min}-\norm{\tE_{(12)(3\ldots k)}}>0$. By the property of matrix SVD, 
\[
\matr{a}_i={1\over \mu_i} \nHOSVD\matr{b}_i,\quad \text{for all } i\in[r],
\]
which implies
\[
\text{Mat}(\matr{a}_i)= {1\over  \mu_i } \ntensor_{(1)(2)(3\ldots k)}(\matr{I},\matr{I},\matr{b}_i),\quad \text{for all } i\in[r].
\]
Recall that $\ls{LS}=\Spanspace\{\text{Mat}(\matr{a}_i)\colon i\in[r]\}$. Thus, for any $\matr{M}\in \ls{LS}$, there exist coefficients $\{\alpha_i\}_{i\in[r]}$ such that 
\begin{equation}\label{eq:rightsingularspace}
\begin{aligned}
\matr{M}&= \alpha_1 \text{Mat}(\matr{a}_1)+\cdots+\alpha_r \text{Mat}(\matr{a}_r)\\
&={\alpha_1\over \mu_1} \ntensor_{(1)(2)(3\ldots k)}(\matr{I},\matr{I},\matr{b}_1) +\cdots+{\alpha_r\over \mu_r} \ntensor_{(1)(2)(3\ldots k)}(\matr{I},\matr{I},\matr{b}_r)\\
&=\ntensor_{(1)(2)(3\ldots k)} \left(   \matr{I}, \matr{I}, {\alpha_1\over \mu_1}\matr{b}_1+\cdots + {\alpha_r\over \mu_r}\matr{b}_r\right),
\end{aligned}
\end{equation}
where the last line follows from the multilinearity of $\tT_{(1)(2)(3\ldots k)}$. 
Now define $\matr{b}^\prime_{\matr{M}}={\alpha_1\over \mu_1}\matr{b}_1+\cdots+{\alpha_r\over \mu_r}\matr{b}_r$. The conclusion \eqref{eq:LS} then follows by setting $\matr{b}_{\matr{M}}=\matr{b}^\prime_{\matr{M}}/\vectornorm{\matr{b}^\prime_{\matr{M}}}\in\ls{RS}$. 
\end{proof}


\begin{lem}[Perturbation of $\ls{RS}_0$] \label{lem:RS}
Under Assumption~\ref{assumption}, 
\[
\min_{\matr{b}\in\ls{RS},\vectornorm{\matr{b}}=1}\vectornorm{\matr{b}\big|_{\ls{RS}_0}}\geq 1-{d^{k-2}\over 2\lambda^2_{\min}}\varepsilon^2+o(\varepsilon^2).
\]
where $\matr{b}\big|_{\ls{RS}_0}$ denotes the vector projection of $\matr{b}\in\ls{RS}$ onto the space $\ls{RS}_0$.
\end{lem}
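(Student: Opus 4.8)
The plan is to reduce the statement to a standard Wedin-type perturbation bound for the matrix model induced by the two-mode unfolding, exactly as in the proof of Proposition~\ref{prop:uniqueness}. Writing $\nHOSVD=\ntensor_{(12)(3\ldots k)}$, the tensor model gives $\nHOSVD=\matr{B}+\tE_{(12)(3\ldots k)}$ with $\matr{B}\eqdef\sum_{i=1}^r\lambda_i\Vec(\matr{u}_i^{\otimes 2})\Vec(\matr{u}_i^{\otimes(k-2)})^T$ and $\norm{\tE_{(12)(3\ldots k)}}\leq d^{(k-2)/2}\varepsilon$. Since $\{\matr{u}_i\}$ is orthonormal, $\matr{B}$ has rank exactly $r$ with nonzero singular values $\lambda_1,\ldots,\lambda_r$ and right singular vectors $\Vec(\matr{u}_i^{\otimes(k-2)})$; hence its top-$r$ right singular space is precisely $\ls{RS}_0$. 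In the notation of Corollary~\ref{corollary1} we may therefore take $\sigma_{\min}(\matr{\Sigma}_1)=\lambda_{\min}$ and $\sigma_{\max}(\matr{\Sigma}_2)=0$, so $\Delta=\lambda_{\min}$. Assumption~\ref{assumption} (with $c_0>2$) ensures $\Delta>\norm{\tE_{(12)(3\ldots k)}}$, and via Proposition~\ref{prop:uniqueness} it also guarantees $\mu_r>\mu_{r+1}$, so $\ls{RS}$ is a well-defined $r$-dimensional subspace.

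First I would apply Corollary~\ref{corollary1} with $\mathcal{R}=\ls{RS}_0$ and $\nR=\ls{RS}$, which yields
\[
\sin\Theta(\ls{RS}_0,\ls{RS})\leq\frac{\norm{\tE_{(12)(3\ldots k)}}}{\lambda_{\min}-\norm{\tE_{(12)(3\ldots k)}}}\leq\frac{d^{(k-2)/2}\varepsilon}{\lambda_{\min}-d^{(k-2)/2}\varepsilon}=\frac{d^{(k-2)/2}}{\lambda_{\min}}\varepsilon+o(\varepsilon),
\]
where the last step expands the denominator using $\varepsilon=o(1)$ together with Assumption~\ref{assumption}. Because $\ls{RS}$ and $\ls{RS}_0$ are both $r$-dimensional, canonical angles between them are symmetric, so for any unit vector $\matr{b}\in\ls{RS}$ Proposition~\ref{prop:angle} gives $\sin\Theta(\matr{b},\ls{RS}_0)\leq\sin\Theta(\ls{RS},\ls{RS}_0)=\sin\Theta(\ls{RS}_0,\ls{RS})$.

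Next I would convert this sine bound into the claimed cosine bound. For a unit vector $\matr{b}$ one has $\vectornorm{\matr{b}\big|_{\ls{RS}_0}}=\cos\Theta(\matr{b},\ls{RS}_0)=\sqrt{1-\sin^2\Theta(\matr{b},\ls{RS}_0)}$, whence
\[
\vectornorm{\matr{b}\big|_{\ls{RS}_0}}\geq\sqrt{1-\left(\frac{d^{(k-2)/2}}{\lambda_{\min}}\varepsilon+o(\varepsilon)\right)^{2}}=1-\frac{d^{k-2}}{2\lambda_{\min}^{2}}\varepsilon^{2}+o(\varepsilon^{2}),
\]
using the Taylor expansions $\left(1+\varepsilon\right)^{\alpha}=1+\alpha\varepsilon+o(\varepsilon)$ and squaring $O(\varepsilon)$ terms. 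Taking the infimum over unit $\matr{b}\in\ls{RS}$ establishes the lemma.

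The main difficulty here is bookkeeping rather than substance: one must (i) verify that the signal matrix $\matr{B}$ genuinely has rank $r$, so the relevant spectral separation in Corollary~\ref{corollary1} is $\lambda_{\min}$ and not some quantity involving the eigen-gaps $|\lambda_i-\lambda_j|$ --- this is precisely where orthonormality of $\{\matr{u}_i\}$ enters; (ii) apply Proposition~\ref{prop:angle} in the correct direction, passing from the containing subspace $\ls{RS}$ down to an individual vector inside it, which relies on the symmetry of canonical angles between equal-dimensional subspaces; and (iii) track the $o(\varepsilon)$ and $o(\varepsilon^{2})$ remainders consistently through the squaring and the final Taylor expansion. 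The dimension factor $d^{k-2}$ in the conclusion is ultimately inherited from the noise-amplification bound $\norm{\tE_{(12)(3\ldots k)}}\leq d^{(k-2)/2}\norm{\tE}$ for two-mode unfoldings.
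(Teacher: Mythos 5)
Your proposal is correct and follows essentially the same route as the paper's proof: the matrix perturbation model for $\ntensor_{(12)(3\ldots k)}$, Corollary~\ref{corollary1} with $\Delta=\lambda_{\min}$ to bound $\sin\Theta(\ls{RS}_0,\ls{RS})$, and a Taylor expansion of $\cos\Theta=\sqrt{1-\sin^2\Theta}$ to convert to the projection bound. Your explicit remarks on the rank of the signal matrix and the symmetry of canonical angles between equal-dimensional subspaces are refinements of steps the paper leaves implicit, not a different argument.
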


\begin{proof}
As seen in the proof of Proposition~\ref{prop:uniqueness}, $\nHOSVD$ can be written as
\begin{equation}\label{eq:RSmodel}
\nHOSVD=\sum_{i=1}^r \lambda_i \Vec(\matr{u}_i^{\otimes 2}) \Vec(\matr{u}_i^{\otimes (k-2)})^T+\tE_{(12)(3\ldots k)},\quad \text{where}\quad \norm{\tE_{(12)(3\ldots k)}}\leq d^{(k-2)/2}\varepsilon.
\end{equation}
The noise-free version of \eqref{eq:RSmodel} reduces to
\begin{equation}\label{eq:RSmodel0}
\tT_{(12)(3\dots k)}=\sum_{i=1}^r \lambda_i \Vec(\matr{u}_i^{\otimes 2}) \Vec(\matr{u}_i^{\otimes (k-2)})^T.
\end{equation}
Following the notation of Corollary~\ref{corollary1}, we set $\nmB=\nHOSVD$, $\matr{B}=\tT_{(12)(3\ldots k)}$, $\matr{\Sigma}_1=\text{diag}\left\{ \lambda_1,\ldots,\lambda_r\right\}$, $\matr{\Sigma}_2=\text{diag}\{ 0,\ldots,0\}$, and $\Delta=\min\{ \sigma_{\min}(\matr{\Sigma}_1),\sigma_{\min}(\matr{\Sigma}_1)-\sigma_{\max}(\matr{\Sigma}_2)\}=\min_{i\in[r]}\lambda_i$. Then, $\norm{\nmB-\matr{B}}=\norm{\tE_{(12)(3\ldots k)}}$. By Assumption~\ref{assumption}, $\Delta=\lambda_{\min}>2d^{(k-2)/2}\varepsilon> \norm{\tE_{(12)(3\ldots k)}}$. Hence the condition of Corollary~\ref{corollary1} holds. Applying Corollary~\ref{corollary1} then yields
\begin{equation}\label{eq:wedin}
\begin{aligned}
\sin\Theta\left(\ls{RS}_0, \ls{RS} \right) 
& \leq {\norm{\tE_{(12)(3\ldots k)}} \over  \lambda_{\min}-\norm{\tE_{(12)(3\ldots k)}}}={\norm{\tE_{(12)(3\ldots k)}}  \over\lambda_{\min}} \left[1-{\norm{\tE_{(12)(3\ldots k)}} \over \lambda_{\min}}\right]^{-1}\\
&\leq {d^{(k-2)/2}\varepsilon \over\lambda_{\min}} \left[1-{d^{(k-2)/2}\varepsilon \over \lambda_{\min}}\right]^{-1} 
={d^{(k-2)/2} \over \lambda_{\min}}\varepsilon +o(\varepsilon).
\end{aligned}
\end{equation}

Now let $\matr{b}\in\ls{RS}$ be a unit vector. Decompose $\matr{b}$ into
\[
\matr{b}=\matr{b}\big|_{\ls{RS}_0}+\matr{b}\big|_{{\ls{RS}_0}^\perp},
\]
where $\matr{b}\big|_{\ls{RS}_0}$ and $\matr{b}\big|_{\ls{RS}_0^\perp}$ are vector projections of $\matr{b}$ onto the spaces $\ls{RS}_0$ and $\ls{RS}_0^{\perp}$, respectively. By \eqref{eq:wedin} and Taylor expansion, 
\[
\vectornorm{\matr{b}\big|_{\ls{RS}_0}}=\cos\Theta\left(\matr{b}, \ls{RS}_0\right)
=\left[1-\sin^2\Theta\left(\matr{b}, \ls{RS}_0 \right) \right]^{1/2} 
\geq 1-{d^{k-2}\over 2\lambda^2_{\min}}\varepsilon^2+o(\varepsilon^2).
\]
Since the above holds for every unit vector $\matr{b}\in\ls{RS}$, we conclude 
\[
\min_{\matr{b}\in\ls{RS},\vectornorm{\matr{b}}=1}\vectornorm{\matr{b}\big|_{\ls{RS}_0}}\geq 1-{d^{k-2}\over 2\lambda^2_{\min}}\varepsilon^2+o(\varepsilon^2).
\]
\end{proof}

\begin{cor}\label{cor:RSvalue}
Under Assumption~\ref{assumption},
\[
\min_{\matr{b}\in\ls{RS},\vectornorm{\matr{b}}=1}\vectornorm{\matr{b}\big|_{\ls{RS}_0}}\geq 1-{1\over (c_0-1)^2 },
\]
which is $\geq 0.98$ for $c_0 \geq 10$.
\end{cor}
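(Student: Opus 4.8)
The plan is to reuse the Wedin-type estimate already obtained inside the proof of Lemma~\ref{lem:RS}, but to keep the constants explicit rather than passing to the $o(\varepsilon^2)$ regime. Recall that applying Corollary~\ref{corollary1} to the matrix perturbation model~\eqref{eq:RSmodel} (with $\matr{B}=\tT_{(12)(3\ldots k)}$, $\nmB=\nHOSVD$, and $\Delta=\lambda_{\min}$) yields
\[
\sin\Theta(\ls{RS}_0,\ls{RS})\;\leq\;\frac{\norm{\tE_{(12)(3\ldots k)}}}{\lambda_{\min}-\norm{\tE_{(12)(3\ldots k)}}}.
\]
By~\eqref{eq:noise} and Assumption~\ref{assumption} we have $\norm{\tE_{(12)(3\ldots k)}}\leq d^{(k-2)/2}\varepsilon\leq \lambda_{\min}/c_0<\lambda_{\min}$, so the right-hand side above is well defined; since $t\mapsto t/(\lambda_{\min}-t)$ is increasing on $[0,\lambda_{\min})$, substituting $t=\lambda_{\min}/c_0$ gives $\sin\Theta(\ls{RS}_0,\ls{RS})\leq (1/c_0)/(1-1/c_0)=1/(c_0-1)$.

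It then remains to turn this subspace-angle bound into the stated projection bound. For an arbitrary unit vector $\matr{b}\in\ls{RS}$, Proposition~\ref{prop:angle} gives $\sin\Theta(\matr{b},\ls{RS}_0)\leq \sin\Theta(\ls{RS},\ls{RS}_0)\leq 1/(c_0-1)$, and hence, exactly as in the proof of Lemma~\ref{lem:RS}, $\vectornorm{\matr{b}\big|_{\ls{RS}_0}}=\cos\Theta(\matr{b},\ls{RS}_0)=\bigl[1-\sin^2\Theta(\matr{b},\ls{RS}_0)\bigr]^{1/2}\geq \sqrt{1-1/(c_0-1)^2}$. Finally I would apply the elementary inequality $\sqrt{1-x}\geq 1-x$ for $x\in[0,1]$, which is legitimate here because $c_0>2$ forces $x=1/(c_0-1)^2\in[0,1)$; this gives $\vectornorm{\matr{b}\big|_{\ls{RS}_0}}\geq 1-1/(c_0-1)^2$, and taking the infimum over unit $\matr{b}\in\ls{RS}$ proves the first claim. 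The numerical statement is then immediate: if $c_0\geq 10$ then $(c_0-1)^2\geq 81$, so the bound is at least $1-1/81>0.98$.

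There is essentially no serious obstacle in this argument; it is a bookkeeping exercise built on Lemma~\ref{lem:RS}. The only two points that need a little care are (i) invoking Assumption~\ref{assumption} with its \emph{weak} hypothesis $c_0>2$, which is what keeps $\lambda_{\min}-\norm{\tE_{(12)(3\ldots k)}}$ strictly positive and makes $1/(c_0-1)^2$ lie in $[0,1)$, and (ii) noting that the only slackness introduced is the replacement of $\sqrt{1-x}$ by $1-x$, which costs nothing asymptotically and keeps the final bound in the clean form stated. One could bypass Proposition~\ref{prop:angle} and argue directly from the characterization $\sin\Theta(\ls{RS}_0,\ls{RS})=\norm{\matr{U}_1^{T}\matr{\widetilde U}_1^{\perp}}$ recalled in Remark~\ref{rm}, but routing through Proposition~\ref{prop:angle} is cleaner.
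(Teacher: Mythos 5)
Your proposal is correct and follows essentially the same route as the paper: both bound $\sin\Theta(\ls{RS}_0,\ls{RS})$ by $1/(c_0-1)$ via the explicit Wedin estimate from Lemma~\ref{lem:RS}, pass to a single unit vector $\matr{b}$, and then lower-bound the cosine; your step $\sqrt{1-x}\geq 1-x$ is literally the paper's $\cos\Theta\geq\cos^2\Theta=1-\sin^2\Theta$ in disguise. No gaps.
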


\begin{proof} 
Note that ${\norm{\tE_{(12)(3\ldots k)}} \over  \lambda_{\min}}\leq {1\over c_0}$ by Assumption~\ref{assumption}. The right-hand side of \eqref{eq:wedin} can be bounded as follows,  
\[
{\norm{\tE_{(12)(3\ldots k)}} \over  \lambda_{\min}-\norm{\tE_{(12)(3\ldots k)}}}\leq {1\over c_0-1}.
\]
By a similar argument as in the proof of Lemma~\ref{lem:RS}, we obtain
\begin{equation}\label{eq:cos}
\min_{\matr{b}\in\ls{RS},\vectornorm{\matr{b}}=1}\vectornorm{\matr{b}\big|_{\ls{RS}_0}}=\cos\Theta(\matr{b},\ls{RS}_0)\geq \cos^2\Theta(\matr{b},\ls{RS}_0)\geq \cos^2\Theta(\ls{RS},\ls{RS}_0)\geq 1-{1\over (c_0-1)^2},
\end{equation}
which is the desired result.
\end{proof}


\medskip
\begin{proof} [Proof of Theorem~\ref{thm:LS}]
To prove \eqref{eq:pertLS}, it suffices to show that for every matrix $\matr{M}\in\ls{LS}$ satisfying $\Fnorm{\matr{M}}=1$, there exist coefficients $\{\alpha_i\in \bbR\}_{i=1}^r$ such that 
\begin{equation}\label{eq:decomposition}
\matr{M}=\sum_{i=1}^r \alpha_i \matr{u}_i^{\otimes 2}+\matr{E},\quad \text{where }\norm{\matr{E}} \leq {d^{(k-3)/2}\over \lambda_{\min}}\varepsilon+o(\varepsilon).
\end{equation}

Let $\matr{M}$ be a $d$-by-$d$ matrix satisfying $\matr{M}\in\ls{LS}$ and $\Fnorm{\matr{M}}=1$. By Lemma~\ref{lem:LS}, there exists $\matr{b}_{\matr{M}}\in\ls{RS}$ such that
\begin{equation}\label{eq:conclusion}
\begin{aligned}
\matr{M}&={\ntensor_{(1)(2)(3\ldots k)} (\matr{I},\matr{I},\matr{b}_{\matr{M}})\over\Fnorm{\ntensor_{(1)(2)(3\ldots k)} (\matr{I},\matr{I},\matr{b}_{\matr{M}})}}\\
&=\sum_{i=1}^r {\lambda_i \langle \Vec(\matr{u}_i^{\otimes (k-2)}),\matr{b}_{\matr{M}} \rangle \over \Fnorm{\ntensor_{(1)(2)(3\ldots k)} (\matr{I},\matr{I},\matr{b}_{\matr{M}})}}   \matr{u}_i^{\otimes 2}  +{\tE_{(1)(2)(3\ldots k)}(\matr{I},\matr{I},\matr{b}_{\matr{M}})\over  \Fnorm{\ntensor_{(1)(2)(3\ldots k)} (\matr{I},\matr{I},\matr{b}_{\matr{M}})}}.
\end{aligned}
\end{equation}
We now claim that \eqref{eq:conclusion} is a desired decomposition that satisfies \eqref{eq:decomposition}. Namely, we seek to prove
\begin{equation}\label{eq:bound}
{\norm{\tE_{(1)(2)(3\ldots k)}(\matr{I},\matr{I},\matr{b}_{\matr{M}})} \over\Fnorm{\ntensor_{(1)(2)(3\ldots k)} (\matr{I},\matr{I},\matr{b}_{\matr{M}})}}\leq {d^{(k-3)/2} \over \lambda_{\min}} \varepsilon +o(\varepsilon).
\end{equation}
Observe that by the triangle inequality,
\begin{equation}\label{eq:tri}
\begin{aligned}
\Fnorm{\ntensor_{(1)(2)(3\ldots k)}(\matr{I},\matr{I},\matr{b}_{\matr{M}})}&= \Fnorm{\sum_{i=1}^r \lambda_i  \langle \Vec(\matr{u}_i^{\otimes (k-2)}),\matr{b}_{\matr{M}} \rangle \matr{u}_i^{\otimes 2}  +\tE_{(1)(2)(3\ldots k)}(\matr{I},\matr{I},\matr{b}_{\matr{M}})}\\
&\geq   \KeepStyleUnderBrace{ \Fnorm{\sum_{i=1}^r  \lambda_i\langle \Vec(\matr{u}_i^{\otimes (k-2)}),\matr{b}_{\matr{M}} \rangle \matr{u}_i^{\otimes 2}}}_\text{Part I}-\KeepStyleUnderBrace{\Fnorm{\tE_{(1)(2)(3\ldots k)}(\matr{I},\matr{I},\matr{b}_{\matr{M}})}}_\text{Part II}.\\
\end{aligned}
\end{equation}
By the orthogonality of $\{\matr{u}_i\}_{i\in[r]}$, Part I has a lower bound,
\begin{equation}\label{eq:part1}
\begin{aligned}
\Fnorm{\sum_{i=1}^r  \lambda_i\langle \Vec(\matr{u}_i^{\otimes (k-2)}),\matr{b}_{\matr{M}} \rangle \matr{u}_i^{\otimes 2}} 
&\geq \lambda_{\min}\sqrt{\sum_{i=1}^r \langle \Vec(\matr{u}_i^{\otimes (k-2)}),\matr{b}_{\matr{M}} \rangle^2}\\
&= \lambda_{\min}\vectornorm{\matr{b}_{\matr{M}}\big|_{\ls{RS}_0}}.
\end{aligned}
\end{equation}

By the inequality between the Frobenius norm and the spectral norm for matrices, Part II has an upper bound, 
\begin{equation}\label{eq:part2}
\Fnorm{\tE_{(1)(2)(3\ldots k)}(\matr{I},\matr{I},\matr{b}_{\matr{M}})} \leq \sqrt{d}\norm{\tE_{(1)(2)(3\ldots k)}(\matr{I},\matr{I},\matr{b}_{\matr{M}})}
\leq \sqrt{d} \norm{\tE_{(1)(2)(3\ldots k)}} \leq d^{(k-2)/2}\varepsilon,
\end{equation}
where we have used the inequality \cite{wang2016operator} that
\begin{equation}\label{eq:fact}
\norm{\tE_{(1)(2)(3\ldots k)}} \leq d^{(k-3)/2}\norm{\tE}.
\end{equation}
Combining \eqref{eq:tri}, \eqref{eq:part1} and \eqref{eq:part2} gives
\begin{equation}\label{eq:fnorm}
\Fnorm{\ntensor_{(1)(2)(3\ldots k)}(\matr{I},\matr{I},\matr{b}_{\matr{M}})} \geq \lambda_{\min}\left[\vectornorm{\matr{b}_{\matr{M}}\big|_{\ls{RS}_0}}-{d^{(k-2)/2}\varepsilon\over \lambda_{\min}}\right]. 
\end{equation}
By Corollary~\ref{cor:RSvalue} and Assumption~\ref{assumption} with $c_0\geq 10$, $\vectornorm{\matr{b}_{\matr{M}}\big|_{\ls{RS}_0}}-{d^{(k-2)/2}\varepsilon\over \lambda_{\min}}\geq 0.98-0.1>0$. So the right-hand side of \eqref{eq:fnorm} is strictly positive. Taking the reciprocal of  \eqref{eq:fnorm} and combining it with \eqref{eq:fact}, we obtain
\begin{equation}\label{eq:ratio}
\begin{aligned}
{\norm{\tE_{(1)(2)(3\ldots k)}(\matr{I},\matr{I},\matr{b}_{\matr{M}})} \over \Fnorm{\ntensor_{(1)(2)(3\ldots k)}(\matr{I},\matr{I},\matr{b}_{\matr{M}})}}
&\leq {d^{(k-3)/2} \varepsilon \over \lambda_{\min} }\left[\norm{\matr{b}_{\matr{M}}\big|_{\ls{RS}_0}}-{d^{(k-2)/2}\varepsilon\over \lambda_{\min}}\right]^{-1}\\
&\leq {d^{(k-3)/2} \varepsilon\over \lambda_{\min}}\left[1-o(\varepsilon)-{d^{(k-2)/2}\varepsilon\over \lambda_{\min}}\right]^{-1}={d^{(k-3)/2}\over \lambda_{\min}} \varepsilon+o(\varepsilon),
\end{aligned}
\end{equation}
where the second line follows from Lemma~\ref{lem:RS}. This completes the proof of \eqref{eq:bound} and therefore \eqref{eq:decomposition}. Since \eqref{eq:decomposition} holds for every $\matr{M}\in\ls{LS}$ that satisfies $\Fnorm{\matr{M}}=1$, and $\sum_{i=1}^r\alpha_i\matr{u}_i^{\otimes 2}\in\ls{LS}_0$, we immediately have
\[
\max_{\matr{M}\in\ls{LS},\Fnorm{\matr{M}}=1}\; \min_{\matr{M}^*\in\ls{LS}_0}\norm{\matr{M}-\matr{M}^*} \leq {d^{(k-3)/2} \over \lambda_{\min}}\varepsilon+o(\varepsilon).
\]
\end{proof}

\begin{rmk}\label{rmk:decomposition}
In addition to \eqref{eq:decomposition}, $\matr{M}$ can also be decomposed into
\[
\matr{M}=\sum_{i=1}^r\alpha_i\matr{u}_i^{\otimes 2}+\matr{E}',\quad \text{where}\quad \norm{\matr{E}'}\leq {2d^{(k-3)/2}\over\lambda_{\min}}\varepsilon+o(\varepsilon),
\]
where $\matr{E}'$ satisfies 
\[
\langle\matr{E}', \matr{u}_i^{\otimes 2}\rangle=0 \quad\text{for all } i\in[r].
\]
To see this, rewrite \eqref{eq:decomposition} as
\[
\begin{aligned}
\matr{M}=\sum_{i=1}^r \alpha_i\matr{u}_i^{\otimes 2}+\matr{E}&=\sum_{i=1}^r \alpha_i\matr{u}_i^{\otimes 2}+\sum_{i=1}^r\langle \matr{E},\matr{u}_i^{\otimes 2}\rangle
\matr{u}_i^{\otimes 2}+\matr{E}-\sum_{i=1}^r\langle \matr{E},\matr{u}_i^{\otimes 2}\rangle\matr{u}_i^{\otimes 2}\\
&=
\KeepStyleUnderBrace{\sum_{i=1}^r \left(\alpha_i+\langle \matr{E},\matr{u}_i^{\otimes 2}\rangle\right)\matr{u}_i^{\otimes 2}}_{\in \ls{LS}_0}+\KeepStyleUnderBrace{\matr{E}-\sum_{i=1}^r\langle \matr{E},\matr{u}_i^{\otimes 2}\rangle\matr{u}_i^{\otimes 2}}_{=:\matr{E}'}.
\end{aligned}
\]
By construction, $\matr{E}'$ satisfies
\begin{align}
\langle \matr{E}',\matr{u}_i^{\otimes 2}\rangle &=\langle \matr{E}-\sum_{j=1}^r\langle \matr{E},\matr{u}_j^{\otimes 2}\rangle\matr{u}_j^{\otimes 2},\ \matr{u}^{\otimes 2}_i\rangle\\
&=\langle \matr{E},\matr{u}_i^{\otimes 2}\rangle-\sum_{j=1}^r\langle \matr{E},\matr{u}_j^{\otimes 2}\rangle\langle \matr{u}^{\otimes 2}_j,\matr{u}^{\otimes 2}_i\rangle\\
&=\langle \matr{E},\matr{u}_i^{\otimes 2}\rangle-\sum_{j=1}^r\langle \matr{E},\matr{u}_j^{\otimes 2}\rangle\delta_{ij}\\
&=0.
\end{align}
Moreover, 
\begin{align}
\norm{\matr{E}'}&\leq \norm{\matr{E}}+\norm{\sum_{i=1}^r\langle \matr{E},\matr{u}_i^{\otimes 2}\rangle\matr{u}_i^{\otimes 2}}\\
&\leq \norm{\matr{E}}+\max_{i}|\langle \matr{E},\matr{u}_i^{\otimes 2}\rangle|\\
&\leq 2\norm{\matr{E}}\\
&\leq {2d^{(k-3)/2}\over \lambda_{\min}}\varepsilon+o(\varepsilon),
\end{align}
where the first line follows from the triangle inequality and the second lines follows from the orthogonality of $\{\matr{u}_i\}_{i\in[r]}$. 
\end{rmk}

\begin{cor}\label{cor:deviationvalue}
Under Assumption~\ref{assumption},
\[
\max_{\matr{M}\in\ls{LS},\Fnorm{\matr{M}}=1}\min_{\matr{M^*}\in\ls{LS}_0}\norm{\matr{M}-\matr{M^*}}\leq{1.13\over c_0},
\]
which is $\leq {0.12}$ for $c_0\geq 10$.
\end{cor}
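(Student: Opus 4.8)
The plan is to rerun the proof of Theorem~\ref{thm:LS} essentially verbatim, except that at the final step I keep all constants explicit rather than collapsing them into $o(\varepsilon)$; concretely, I replace the Taylor-expanded estimate of Lemma~\ref{lem:RS} by the non-asymptotic Corollary~\ref{cor:RSvalue} and control the remaining $\varepsilon$-factors directly through Assumption~\ref{assumption}.

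First, fix any $\matr{M}\in\ls{LS}$ with $\Fnorm{\matr{M}}=1$. By Lemma~\ref{lem:LS} there is a unit vector $\matr{b}_{\matr{M}}\in\ls{RS}$ with $\matr{M}=c\,\ntensor_{(1)(2)(3\ldots k)}(\matr{I},\matr{I},\matr{b}_{\matr{M}})$; expanding $\ntensor$ through model~\eqref{eq:model} exactly as in~\eqref{eq:conclusion} yields $\matr{M}=\sum_{i=1}^{r}\alpha_i\matr{u}_i^{\otimes 2}+\matr{E}$ with $\sum_i\alpha_i\matr{u}_i^{\otimes 2}\in\ls{LS}_0$ and $\norm{\matr{E}}=\norm{\tE_{(1)(2)(3\ldots k)}(\matr{I},\matr{I},\matr{b}_{\matr{M}})}/\Fnorm{\ntensor_{(1)(2)(3\ldots k)}(\matr{I},\matr{I},\matr{b}_{\matr{M}})}$. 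The first line of~\eqref{eq:ratio} is already a genuine inequality with no $o(\varepsilon)$ term, namely
\[
\norm{\matr{E}}\leq{d^{(k-3)/2}\varepsilon\over\lambda_{\min}}\left[\norm{\matr{b}_{\matr{M}}\big|_{\ls{RS}_0}}-{d^{(k-2)/2}\varepsilon\over\lambda_{\min}}\right]^{-1}.
\]
Next I would bound the ingredients: Assumption~\ref{assumption} together with $d\geq1$ gives $d^{(k-3)/2}\varepsilon/\lambda_{\min}\leq 1/c_0$ and $d^{(k-2)/2}\varepsilon/\lambda_{\min}\leq 1/c_0$, and Corollary~\ref{cor:RSvalue} gives $\norm{\matr{b}_{\matr{M}}\big|_{\ls{RS}_0}}\geq 1-1/(c_0-1)^2$. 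Substituting these bounds,
\[
\norm{\matr{E}}\leq{1\over c_0}\cdot\left[1-{1\over(c_0-1)^2}-{1\over c_0}\right]^{-1}.
\]

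Finally, the bracketed quantity is increasing in $c_0$, so its reciprocal is decreasing; evaluating at $c_0=10$ gives $\left[1-\tfrac{1}{81}-\tfrac{1}{10}\right]^{-1}\approx 1.127<1.13$, hence $\norm{\matr{E}}\leq 1.13/c_0$ for every $c_0\geq 10$. Since $\sum_i\alpha_i\matr{u}_i^{\otimes 2}\in\ls{LS}_0$, we get $\min_{\matr{M^*}\in\ls{LS}_0}\norm{\matr{M}-\matr{M^*}}\leq\norm{\matr{E}}\leq 1.13/c_0$, and taking the maximum over admissible $\matr{M}$ proves the first bound; the second follows from $1.13/c_0\leq 1.13/10<0.12$. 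There is no genuine obstacle here beyond careful bookkeeping — the only mildly delicate point is the uniform estimate $\left[1-1/(c_0-1)^2-1/c_0\right]^{-1}\leq 1.13$ for all $c_0\geq 10$, which is immediate from the monotonicity just noted.
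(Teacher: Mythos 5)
Your proof is correct and follows essentially the same route as the paper: both bound the right-hand side of \eqref{eq:ratio} using Corollary~\ref{cor:RSvalue} and Assumption~\ref{assumption}, arriving at $\tfrac{1}{c_0}\bigl[1-\tfrac{1}{(c_0-1)^2}-\tfrac{1}{c_0}\bigr]^{-1}\leq \tfrac{1.13}{c_0}$ (the paper retains an extra harmless factor $1/\sqrt{d}\leq 1$ that you discard via $d\geq 1$). Your explicit monotonicity check at $c_0=10$ is a slightly more careful version of the paper's one-line numerical evaluation.
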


\begin{proof}
By Corollary~\ref{cor:RSvalue}, the right-hand side of \eqref{eq:ratio} has the following upper bound,
\[
{d^{(k-3)/2} \varepsilon \over \lambda_{\min} }\left[\norm{\matr{b}_{\matr{M}}\big|_{\ls{RS}_0}}-{d^{(k-2)/2}\varepsilon\over \lambda_{\min}}\right]^{-1}\leq {1\over \sqrt{d}c_0}\left[1-{1\over (c_0-1)^2}-{1\over c_0}\right]^{-1}\leq {1.13\over c_0}\leq 0.12.
\]
The claim then follows from the same argument as in the proof of Theorem~\ref{thm:LS}.
\end{proof}

\begin{cor} \label{cor:coefvalue}
Suppose $c_0\geq 10$ in Assumption~\ref{assumption}. In the notation of \eqref{eq:decomposition}, we have
\[
\max_{i\in[r]}|\alpha_i|\leq 1+{1.13\over c_0}\leq 1.12.
\]
\end{cor}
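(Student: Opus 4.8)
The plan is to read off the sizes of the coefficients $\alpha_i$ directly from the decomposition \eqref{eq:decomposition} established in the proof of Theorem~\ref{thm:LS}, exploiting the fact that the rank-1 matrices $\{\matr{u}_i^{\otimes 2}\}_{i\in[r]}$ form an orthonormal system with respect to the Frobenius inner product. Indeed, since $\{\matr{u}_i\}_{i\in[r]}$ is orthonormal, $\langle \matr{u}_i^{\otimes 2},\matr{u}_j^{\otimes 2}\rangle = \langle \matr{u}_i,\matr{u}_j\rangle^2 = \delta_{ij}$, so taking Frobenius inner products against the $\matr{u}_i^{\otimes 2}$ will isolate the individual coefficients.

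First I would fix $i\in[r]$ and take the Frobenius inner product of both sides of $\matr{M}=\sum_{j=1}^r\alpha_j\matr{u}_j^{\otimes 2}+\matr{E}$ with $\matr{u}_i^{\otimes 2}$. Orthonormality annihilates every term of the sum except $j=i$, yielding $\alpha_i = \langle \matr{M},\matr{u}_i^{\otimes 2}\rangle - \langle \matr{E},\matr{u}_i^{\otimes 2}\rangle = \matr{u}_i^T\matr{M}\matr{u}_i - \matr{u}_i^T\matr{E}\matr{u}_i$, where the second equality uses the identity $\langle \matr{A},\matr{x}\matr{x}^T\rangle = \matr{x}^T\matr{A}\matr{x}$.

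Next I would bound the two contributions separately. For the first, $|\matr{u}_i^T\matr{M}\matr{u}_i| \leq \norm{\matr{M}} \leq \Fnorm{\matr{M}} = 1$, using the standard inequality $\norm{\cdot}\leq\Fnorm{\cdot}$ and the constraint $\Fnorm{\matr{M}}=1$ from \eqref{eq:decomposition}. For the second, I would invoke the uniform bound on the spectral norm of $\matr{E}$ obtained in the proof of Corollary~\ref{cor:deviationvalue}, namely $\norm{\matr{E}}\leq 1.13/c_0$ (the non-asymptotic form of the estimate, which is what is needed here rather than the $+o(\varepsilon)$ version stated in \eqref{eq:decomposition}); this gives $|\matr{u}_i^T\matr{E}\matr{u}_i|\leq \norm{\matr{E}}\leq 1.13/c_0$. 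Combining, $|\alpha_i|\leq 1 + 1.13/c_0$ for every $i$, whence $\max_{i\in[r]}|\alpha_i|\leq 1 + 1.13/c_0$, which is at most $1.113\leq 1.12$ once $c_0\geq 10$.

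There is no serious obstacle here; the only two points that need care are: (i) one must split the estimate into the $\matr{M}$-part and the $\matr{E}$-part rather than bounding $\norm{\matr{M}-\matr{E}} = \norm{\sum_j\alpha_j\matr{u}_j^{\otimes 2}}$ directly, since the latter equals $\max_j|\alpha_j|$ and the argument would become circular; and (ii) one should quote the clean bound $\norm{\matr{E}}\leq 1.13/c_0$ from the proof of Corollary~\ref{cor:deviationvalue} rather than the asymptotic bound in \eqref{eq:decomposition}, so that the stated inequality holds with no $o(\varepsilon)$ remainder.
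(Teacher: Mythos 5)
Your proof is correct, but it takes a genuinely different route from the paper's. The paper bounds the maximum coefficient by the Euclidean norm of the whole coefficient vector: using orthonormality of $\{\matr{u}_i^{\otimes 2}\}$ it writes $\max_i|\alpha_i|\leq\sqrt{\sum_i\alpha_i^2}=\Fnorm{\matr{M}-\matr{E}}\leq\Fnorm{\matr{M}}+\Fnorm{\matr{E}}$ and then inserts the bound $1.13/c_0$ for the error term. You instead extract each $\alpha_i$ individually by pairing \eqref{eq:decomposition} with $\matr{u}_i^{\otimes 2}$, giving $\alpha_i=\matr{u}_i^T\matr{M}\matr{u}_i-\matr{u}_i^T\matr{E}\matr{u}_i$, and bound the two quadratic forms by $\norm{\matr{M}}\leq\Fnorm{\matr{M}}=1$ and $\norm{\matr{E}}\leq 1.13/c_0$ respectively. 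Your version has a small advantage in rigor: Corollary~\ref{cor:deviationvalue} controls the \emph{spectral} norm of $\matr{E}$, which is exactly what your quadratic-form bound requires, whereas the paper's triangle-inequality step needs $\Fnorm{\matr{E}}\leq 1.13/c_0$, a Frobenius-norm bound that does not follow from the spectral bound without an extra (dimension-dependent) argument. Your remark (i) about circularity applies only to the spectral norm of $\sum_j\alpha_j\matr{u}_j^{\otimes 2}$; the paper's use of the Frobenius norm there is not circular. Both arguments yield the stated constant (and both implicitly round $1+1.13/c_0\leq 1.113$ up to the displayed $1.12$).
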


\begin{proof} By the triangle inequality and Corollary~\ref{cor:deviationvalue},
\[
\max_{i\in[r]}|\alpha_i|\leq \sqrt{\sum_{i=1}^r|\alpha_i|^2}=\Fnorm{\matr{M}-\matr{E}} \leq \Fnorm{\matr{M}}+\Fnorm{\matr{E}} \leq 1+{1.13\over c_0}=1.12.
\]
\end{proof}


\subsection{Proof of Lemma~\ref{lem:max}}

\begin{proof}[Proof of Lemma~\ref{lem:max}]
We prove by construction. Define $\matr{M}_i=\matr{u}_i^{\otimes 2}\in\ls{LS}_0$ for $i\in[r]$, and project $\matr{M}_i$ onto the space $\ls{LS}$,
\begin{equation}\label{projection}
\matr{M}_i=\matr{M}_i\big|_{\ls{LS}}+\matr{M}_i\big|_{\ls{LS}^{\perp}},
\end{equation}
where $\matr{M}_i\big|_{\ls{LS}}$ and $\matr{M}_i\big|_{\ls{LS}^{\perp}}$ denote the projections of $\matr{M}_i\in\ls{LS}_0$ onto the vector space $\ls{LS}$ and $\ls{LS}^{\perp}$,  respectively. We seek to show that the set of matrices $\left\{ \matr{M}_i\big|_{\ls{LS}}\colon i\in[r]\right\}$ satisfies 
\begin{equation}\label{eq:max}
{\norm{\matr{M}_i\big|_{\ls{LS}}}\over \Fnorm{\matr{M}_i\big|_{\ls{LS}}}}\geq 1-{d^{(k-2)/2} \over \lambda_{\min}}\varepsilon+o(\varepsilon), \quad\text{for all}\ i\in[r].
\end{equation}

Applying the subadditivity of spectral norm to \eqref{projection} gives
\begin{equation}\label{B11}
\begin{aligned}
\norm{\matr{M}_i\big|_{\ls{LS}}} &\geq \norm{\matr{M}_i} - \norm{\matr{M}_i\big|_{\ls{LS}^{\perp}}}\\
&\geq 1-  \Fnorm{\matr{M}_i\big|_{\ls{LS}^{\perp}}}
= 1-\sin\Theta(\matr{M}_i,\ls{LS})\Fnorm{\matr{M}_i}\\
& \geq 1-\sin\Theta(\ls{LS}_0,\ls{LS}),
\end{aligned}
\end{equation}
where the second line comes from $\norm{\matr{M}_i}=\Fnorm{\matr{M}_i}=1$, $\norm{\matr{M}_i\big|_{\ls{LS}^{\perp}}}\leq \Fnorm{\matr{M}_i\big|_{\ls{LS}^{\perp}}}$, and the last line comes from Proposition~\ref{prop:angle}. By following the same line of argument in Lemma~\ref{lem:RS}, we have
\begin{equation}\label{B12}
\sin\Theta\left(\ls{LS}_0, \ls{LS} \right)\leq {\norm{\tE_{(12)(3\ldots k)}}\over \lambda_{\min}-\norm{\tE_{(12)(3\ldots k)}}} 
\leq {d^{(k-2)/2} \over \lambda_{\min}}\varepsilon+o(\varepsilon).
\end{equation}
Combining \eqref{B11} and \eqref{B12} leads to 
\begin{equation}\label{rightsingularvector}
\norm{\matr{M}_i\big|_{\ls{LS}}} \geq 1-{d^{(k-2)/2} \over \lambda_{\min}}\varepsilon+o(\varepsilon).
\end{equation}
By construction, $\Fnorm{\matr{M}_i\big|_{\ls{LS}}}\leq \Fnorm{\matr{M}_i}=1$, and therefore \eqref{eq:max} is proved. Note that $\matr{M}_i\big|_{\ls{LS}}\in\ls{LS}$ for all $i\in[r]$. Hence,
\begin{equation}\label{eq:ratiobound}
\max\limits_{\matr{M}\in\ls{LS}} {\norm{\matr{M}} \over \Fnorm{\matr{M}}} \geq {\norm{\matr{M}_i\big|_{\ls{LS}}}\over \Fnorm{\matr{M}_i \big|_{\ls{LS}}}} \geq 1- {d^{(k-2)/2} \over \lambda_{\min}}\varepsilon+o(\varepsilon).
\end{equation}
The conclusion then follows by the equivalence 
\[
\max\limits_{\matr{M}\in\ls{LS}} {\norm{\matr{M}} \over \Fnorm{\matr{M}}} =\max\limits_{\matr{M}\in\ls{LS},\Fnorm{\matr{M}}=1} \norm{\matr{M}}.
\]
\end{proof}

\begin{rmk}
The above proof reveals that there are at least $r$ elements $\matr{M}_i\big|_{\ls{LS}}$ in $\ls{LS}$ that satisfy the right-hand side of \eqref{eq:ratiobound}. These $r$ elements are linearly independent, and in fact, $\{\matr{M}_i\big|_{\ls{LS}}\}_{i\in[r]}$ are approximately orthogonal to each other. To see this, we bound $\cos\Theta(\matr{M}_i\big|_{\ls{LS}},\matr{M}_j\big|_{\ls{LS}})$ for all $i, j\in[r]$, with  $i\neq j$. Recall that $\{\matr{M}\eqdef\matr{u}_i^{\otimes 2}\}_{i\in[r]}$ is a set of mutually orthogonal vectors in $\ls{LS}_0$. Then for all $i\neq j$,
\begin{equation}\label{eq:innerproduct}
\begin{aligned}
0=\langle \mathbf{M}_i, \mathbf{M}_j \rangle&=\langle \mathbf{M}_i\big|_{\ls{LS}}+\mathbf{M}_i\big|_{\ls{LS}^\perp}, \mathbf{M}_j\big|_{\ls{LS}} +\mathbf{M}_j\big|_{\ls{LS}^{\perp}}\rangle\\
&=\langle \mathbf{M}_i\big|_{\ls{LS}}, \mathbf{M}_j\big|_{\ls{LS}}\rangle+\langle \mathbf{M}_i\big|_{\ls{LS}^\perp}, \mathbf{M}_j\big|_{\ls{LS}^\perp}\rangle,
\end{aligned}
\end{equation}
which implies $ \langle \mathbf{M}_i\big|_{\ls{LS}}, \mathbf{M}_j\big|_{\ls{LS}}\rangle=-\langle \mathbf{M}_i\big|_{\ls{LS}^\perp}, \mathbf{M}_j\big|_{\ls{LS}^\perp}\rangle$. Hence,

\begin{align}
\left|\cos \Theta (\mathbf{M}_i\big |_{\ls{LS}},\mathbf{M}_j\big|_{\ls{LS}})\right|&={\left|\langle \mathbf{M}_i\big|_{\ls{LS}}, \mathbf{M}_j\big|_{\ls{LS}}\rangle\right| \over \Fnorm{\mathbf{M}_i\big|_{\ls{LS}}} \Fnorm{\mathbf{M}_j\big|_{\ls{LS}}}}\\
&= {\left|\langle \mathbf{M}_i\big|_{\ls{LS}^\perp}, \mathbf{M}_j\big|_{\ls{LS}^\perp} \rangle\right| \over \Fnorm{\mathbf{M}_i\big|_{\ls{LS}}} \Fnorm{\mathbf{M}_j\big|_{\ls{LS}}}}\\
&\leq {\Fnorm{\mathbf{M}_i\big|_{\ls{LS}^{\perp}}}\over\Fnorm{\mathbf{M}_i\big|_{\ls{LS}}}} \times {\Fnorm{ \mathbf{M}_j\big|_{\ls{LS}^{\perp}}} \over \Fnorm{\mathbf{M}_j\big|_{\ls{LS}}}}\\
&\leq \tan^2\Theta(\ls{LS}_0,\ls{LS}),
\end{align}
where the second line comes from \eqref{eq:innerproduct}, the third line comes from Cauchy-Schwarz inequality and the last line uses the fact that $\matr{M}_i,\matr{M}_j\in\ls{LS}_0$. Following the similar argument as in Corollary~\ref{cor:RSvalue} (in particular, the last inequality in \eqref{eq:cos}), we have $|\sin\Theta(\ls{LS}_0,\ls{LS})|\leq {1\over c_0-1}\leq 0.12$ under the assumption $c_0\geq 10$. Thus,
\[
|\cos \Theta (\mathbf{M}_i\big |_{\ls{LS}},\mathbf{M}_j\big|_{\ls{LS}})|\leq \tan^2\Theta(\ls{LS}_0,\ls{LS})\leq 0.015.
\]
This implies $89.2^{\circ}\leq \Theta(\matr{M}_i\big|_{\ls{LS}},\matr{M}_j\big|_{\ls{LS}})\leq 90.8^{\circ}$; that is, $\left\{\matr{M}_i\big|_{\ls{LS}}\right\}_{i\in[r]}$ are approximately orthogonal to each other. 
\end{rmk}

\begin{cor}\label{cor:maxvalue}
Suppose $c_0\geq 10$ in Assumption~\ref{assumption}. Then
\[
\max\limits_{\matr{M}\in\ls{LS},\Fnorm{\matr{M}}=1}\norm{\matr{M}}\geq 1-{1\over c_0-1}\geq 0.88.
\]
\end{cor}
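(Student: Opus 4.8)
The plan is to re-run the argument of the proof of Lemma~\ref{lem:max}, but keeping the explicit constant bound on the relevant canonical angle instead of passing to the $o(\varepsilon)$ expansion. Concretely, I would first recall from that proof that for each $i\in[r]$ the projection $\matr{M}_i\big|_{\ls{LS}}$ of $\matr{u}_i^{\otimes 2}$ onto $\ls{LS}$ satisfies the inequality \eqref{B11}, namely
\[
\norm{\matr{M}_i\big|_{\ls{LS}}} \geq 1-\sin\Theta(\ls{LS}_0,\ls{LS}).
\]
This step uses only $\norm{\matr{u}_i^{\otimes 2}}=\Fnorm{\matr{u}_i^{\otimes 2}}=1$, the spectral-versus-Frobenius inequality, and Proposition~\ref{prop:angle}, so it requires no modification.

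Next I would bound the angle without invoking Taylor expansion. As in the Wedin/Weyl computation behind \eqref{eq:wedin} and \eqref{B12} (applied to the two-mode unfolding $\ntensor_{(12)(3\ldots k)}$), one has
\[
\sin\Theta(\ls{LS}_0,\ls{LS}) \leq \frac{\norm{\tE_{(12)(3\ldots k)}}}{\lambda_{\min}-\norm{\tE_{(12)(3\ldots k)}}}.
\]
By \eqref{eq:noise} and Assumption~\ref{assumption} we have $\norm{\tE_{(12)(3\ldots k)}}\leq d^{(k-2)/2}\varepsilon\leq \lambda_{\min}/c_0$, and since $t\mapsto t/(\lambda_{\min}-t)$ is increasing on $[0,\lambda_{\min})$, this yields
\[
\sin\Theta(\ls{LS}_0,\ls{LS}) \leq \frac{\lambda_{\min}/c_0}{\lambda_{\min}-\lambda_{\min}/c_0}=\frac{1}{c_0-1}.
\]

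Finally I would combine the two displays to obtain $\norm{\matr{M}_i\big|_{\ls{LS}}}\geq 1-\tfrac{1}{c_0-1}$; since $\Fnorm{\matr{M}_i\big|_{\ls{LS}}}\leq\Fnorm{\matr{u}_i^{\otimes 2}}=1$, it follows that $\norm{\matr{M}_i\big|_{\ls{LS}}}/\Fnorm{\matr{M}_i\big|_{\ls{LS}}}\geq 1-\tfrac{1}{c_0-1}$, and the equivalence $\max_{\matr{M}\in\ls{LS}}\norm{\matr{M}}/\Fnorm{\matr{M}}=\max_{\matr{M}\in\ls{LS},\Fnorm{\matr{M}}=1}\norm{\matr{M}}$ gives the first inequality. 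Substituting $c_0\geq 10$ gives $1-\tfrac{1}{c_0-1}\geq \tfrac{8}{9}>0.88$, which is the second inequality. There is no substantive obstacle here; the only point requiring care is to use the pre-expansion form of the Wedin bound, so that $1/(c_0-1)$ is a genuine bound rather than a leading-order approximation of one.
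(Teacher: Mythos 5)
Your proposal is correct and follows essentially the same route as the paper: it reuses inequality \eqref{B11} from the proof of Lemma~\ref{lem:max} together with the pre-expansion Wedin/Weyl bound $\sin\Theta(\ls{LS}_0,\ls{LS})\leq \norm{\tE_{(12)(3\ldots k)}}/(\lambda_{\min}-\norm{\tE_{(12)(3\ldots k)}})\leq 1/(c_0-1)$, exactly as the paper does via Corollary~\ref{cor:RSvalue}. Your explicit remark about keeping the non-asymptotic form of the bound (and the monotonicity of $t\mapsto t/(\lambda_{\min}-t)$) is precisely the point the paper relies on implicitly.
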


\begin{proof}
As seen in Corollary~\ref{cor:RSvalue}, 
\[
\sin\Theta(\ls{LS}_0,\ls{LS})\leq {\norm{\tE_{(12)(3\ldots k)}}\over \lambda_{\min}-\norm{\tE_{(12)(3\ldots k)}}} \leq {1\over c_0-1}. 
\]
Combining this with \eqref{B11} and \eqref{B12} gives
\[
\norm{\matr{M}_i\big|_{\ls{LS}}}\geq 1- \sin\Theta(\ls{LS}_0,\ls{LS})\geq 1-{1\over c_0-1} \geq 0.88.
\]
The remaining argument is exactly the same as the above proof of Lemma~\ref{lem:max}. 
\end{proof}

\subsection{Proof of Lemma~\ref{lem:bound}}

\begin{proof}[Proof of Lemma~\ref{lem:bound}]
Because of the symmetry of $\ntensor$ and Lemma~\ref{lem:LS}, $\matr{\widehat M}_1$ must be a symmetric matrix. Now let $\matr{\widehat M}_1=\sum_{i=1}^d \gamma_i\matr{x}_i\matr{x}_i^T$ denote the eigen-decomposition of $\matr{\widehat M}_1$, where $\gamma_i$ is sorted in decreasing order and $\matr{x}_i\in\bbR^d$ is the eigenvector corresponding to $\gamma_i$ for all $i\in[d]$. Without loss of generality, we assume $\gamma_1>0$. 
By construction, $\matr{\widehat M}_1=\argmax_{\matr{M}\in\ls{LS},\Fnorm{\matr{M}}=1}\norm{\matr{M}}$. By Lemma~\ref{lem:max}, 
\[
\gamma_1=\norm{\matr{\widehat M}_1}\geq 1- {d^{(k-2)/2} \over \lambda_{\min}}\varepsilon+o(\varepsilon).
\]
Since $\sum_{i}\gamma_i^2=\Fnorm{\matr{\widehat M}_1}^2=1$, $|\gamma_2|\leq \left(1-\gamma_1^2\right)^{1/2}\leq {\sqrt{2}d^{(k-2)/4}\over \sqrt{\lambda_{\min}}}\sqrt{\varepsilon}+o(\sqrt{\varepsilon})$. Define $\Delta:=\min\{\gamma_1,\gamma_1-\gamma_2\}$. Then,
\begin{equation}\label{eq:gap}
\Delta\geq \gamma_1-|\gamma_2|\geq 1-{\sqrt{2}d^{(k-2)/4}\over \sqrt{\lambda_{\min}}}\sqrt{\varepsilon}+o(\sqrt{\varepsilon}).
\end{equation}
Under the assumption $c_0\geq 10$, $\gamma_1\geq 0.88$ by Corollary~\ref{cor:maxvalue}. Hence, $\Delta \geq \gamma_1-|\gamma_2|\geq 0.88-\sqrt{1-0.88^2}\approx 0.41>0$.

By Theorem~\ref{thm:LS}, there exists $\matr{M}^*=\sum_{i=1}^r\alpha_i\matr{u}_i^{\otimes 2}\in\ls{LS}_0$ such that 
\begin{equation}\label{eq:diff}
\norm{\matr{\widehat M}_1-\matr{M}^*}\leq {d^{(k-3)/2} \over \lambda_{\min}}\varepsilon+o(\varepsilon).
\end{equation}
Without loss of generality, suppose the dominant eigenvector of $\matr{M}^*$ is $\matr{u}_1$. Following the notation of Corollary~\ref{corollary1}, we set $\matr{B}=\matr{\widehat M}_1$, $\nmB=\matr{M}^*$, $\matr{E}= \matr{\widehat M}_1-\matr{M}^*$, $\matr{\Sigma}_1=\{\gamma_1\}$ and $\matr{\Sigma}_2=\text{diag}\{\gamma_2,\ldots,\gamma_d\}$. 
From Corollary~\ref{cor:deviationvalue}, $\norm{\matr{E}} \leq 0.12$. Combining this with earlier calculation, we have $\Delta-\norm{\matr{E}}\geq 0.41-0.12=0.29>0$. Hence, the condition in Corollary~\ref{corollary1} holds. 

Applying Corollary~\ref{corollary1} to the specified setting yields
\begin{equation}\label{eq:sintop}
|\sin\Theta(\est,\matr{u}_1)| \leq {\norm{\matr{E}}\over\Delta- \norm{\matr{E}}}
\leq {d^{(k-3)/2}\over \lambda_{\min}}\varepsilon \left[ 1-{\sqrt{2}d^{(k-2)/4}\over \sqrt{\lambda_{\min}}}\sqrt{\varepsilon}+o(\sqrt{\varepsilon}) \right]^{-1}={d^{(k-3)/2}\over \lambda_{\min}}\varepsilon +o(\varepsilon).
\end{equation}
To bound $\Loss(\est,\matr{u}_1)$, we notice that
\[
\Loss(\est,\matr{u}_1)=\left[2-2\left|\cos\Theta (\est, \matr{u}_1)\right|\right]^{1/2}=\left[2-2\sqrt{1-\sin^2\Theta (\est, \true)}\right]^{1/2}.
\]
By Taylor expansion and \eqref{eq:sintop}, we conclude
\[
\Loss(\est,\matr{u}_1)\leq {d^{(k-3)/2}\over \lambda_{\min}}\varepsilon+o(\varepsilon).
\]
\end{proof}

\begin{cor} \label{cor:lossvalue}
Under Assumption~\ref{assumption},
\[
\Loss
(\est,\matr{u}_1) \leq  {5\over c_0},
\]
which is $\leq 0.5$ for $c_0\geq 10$.
\end{cor}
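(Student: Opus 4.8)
The plan is to re-run the estimates that appear inside the proof of Lemma~\ref{lem:bound}, this time retaining the explicit dependence on $c_0$ instead of collapsing everything into the $o(\varepsilon)$ term; note that all the ingredients we need (Corollaries~\ref{cor:deviationvalue} and~\ref{cor:maxvalue}) are themselves $o(\varepsilon)$-free numerical bounds valid under Assumption~\ref{assumption}. Recall that in the proof of Lemma~\ref{lem:bound} we wrote the eigendecomposition $\matr{\widehat M}_1=\sum_i\gamma_i\matr{x}_i\matr{x}_i^T$ with $\gamma_1\geq\gamma_2\geq\cdots$, set $\matr{E}=\matr{\widehat M}_1-\matr{M}^*$ for the matrix $\matr{M}^*\in\ls{LS}_0$ supplied by Theorem~\ref{thm:LS}, and applied Corollary~\ref{corollary1} with $\matr{\Sigma}_1=\{\gamma_1\}$, $\matr{\Sigma}_2=\text{diag}\{\gamma_2,\ldots,\gamma_d\}$, $\Delta=\min\{\gamma_1,\gamma_1-\gamma_2\}$, obtaining $|\sin\Theta(\est,\matr{u}_1)|\leq \norm{\matr{E}}/(\Delta-\norm{\matr{E}})$. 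So it suffices to bound this ratio by an absolute multiple of $1/c_0$ and then convert it back to the loss.

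For the numerator, Corollary~\ref{cor:deviationvalue} gives $\norm{\matr{E}}\leq 1.13/c_0$. For the denominator, Corollary~\ref{cor:maxvalue} gives $\gamma_1=\norm{\matr{\widehat M}_1}\geq 1-1/(c_0-1)$; combined with $\sum_i\gamma_i^2=1$ this yields $|\gamma_2|\leq\sqrt{1-\gamma_1^2}\leq\sqrt{2(1-\gamma_1)}\leq\sqrt{2/(c_0-1)}$, so $\Delta\geq\gamma_1-|\gamma_2|\geq 1-1/(c_0-1)-\sqrt{2/(c_0-1)}$. Invoking $c_0\geq 10$ exactly as in the proof of Lemma~\ref{lem:bound}, the denominator satisfies $\Delta-\norm{\matr{E}}\geq 0.29>0$, whence $|\sin\Theta(\est,\matr{u}_1)|\leq (1.13/c_0)/0.29<4/c_0$.

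Finally I would convert the angle bound into a loss bound using the identity already recorded in the proof of Lemma~\ref{lem:bound}, namely $\Loss(\est,\matr{u}_1)=\big[2-2\sqrt{1-\sin^2\Theta(\est,\matr{u}_1)}\big]^{1/2}=\big[2\sin^2\Theta/(1+\sqrt{1-\sin^2\Theta})\big]^{1/2}$. Since $|\sin\Theta(\est,\matr{u}_1)|\leq 4/c_0\leq 1/2$ when $c_0\geq 10$, we have $1+\sqrt{1-\sin^2\Theta}\geq 1+\sqrt{3}/2$, so $\Loss(\est,\matr{u}_1)\leq\sqrt{2/(1+\sqrt{3}/2)}\,|\sin\Theta(\est,\matr{u}_1)|\leq 1.04\cdot 4/c_0<5/c_0$, and the value $0.5$ at $c_0=10$ is immediate. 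The main obstacle is bookkeeping: one must check that the product of all the constants accumulated along the way — from the Frobenius-versus-spectral-norm conversions baked into Corollary~\ref{cor:deviationvalue} (via Corollary~\ref{cor:RSvalue}), the eigen-gap estimate for $\Delta$, the Wedin–Weyl denominator, and the $\sin\Theta\mapsto\Loss$ step — composes to a number no larger than $5$, and that the hypothesis $c_0\geq 10$ is invoked precisely where each ingredient (most notably Corollary~\ref{cor:maxvalue} and the lower bound $\Delta-\norm{\matr{E}}\geq 0.29$) actually requires it.
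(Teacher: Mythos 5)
Your proposal is correct and follows essentially the same route as the paper: bound $\norm{\matr{E}}\leq 1.13/c_0$ via Corollary~\ref{cor:deviationvalue}, reuse the eigen-gap estimate $\Delta-\norm{\matr{E}}\geq 0.29$ from the proof of Lemma~\ref{lem:bound} (which is where $c_0\geq 10$ enters, exactly as you note), conclude $|\sin\Theta(\est,\matr{u}_1)|\leq 4/c_0$, and convert to the loss. Your final $\sin\Theta\mapsto\Loss$ step is spelled out more explicitly than the paper's one-line appeal to ``the same argument,'' but it is the identical argument.
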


\begin{proof}
In the proof of Lemma~\ref{lem:bound}, we have shown that $\Delta-\norm{\matr{E}}\geq 0.29$. By Corollary~\ref{cor:deviationvalue}, $\norm{\matr{E}}\leq 1.13/c_0$. Therefore, \eqref{eq:sintop} has the following upper bound,
\[
|\sin\Theta(\est,\matr{u}_1)| \leq {\norm{\matr{E}}\over\Delta- \norm{\matr{E}}}\leq {4\over c_0}.
\]
Following the same argument as in the proof of Lemma~\ref{lem:bound}, we obtain
\[
\Loss(\est,\matr{u}_1)=\left[2-2|\cos\Theta(\est,\true)|\right]^{1/2}\leq {5\over c_0}\leq 0.5.
\]
\end{proof}

\subsection{Proof of Lemma~\ref{lem:post-processing}}

\begin{proof}[Proof of Lemma~\ref{lem:post-processing}]
For clarity, we use $\matr{\widehat M}_1$ and $\est$ to denote the estimators in line 5 of Algorithm~\ref{alg:main}, and use $\matr{\widehat M}^*_1$ and $\matr{\widehat u}^*_1$ to denote the estimators in line 6 of Algorithm~\ref{alg:main}. Namely,
\[
\matr{\widehat M}^*_1=\ntensor(\matr{I},\matr{I},\est,\ldots,\est),\quad \text{and}\quad \matr{\widehat u}^*_1=\argmax_{\matr{x}\in\matr{S}^{d-1}}|\matr{x}^T\matr{\widehat M}^*_1\matr{x}|.
\]
By construction, the perturbation model of $\ntensor$ implies the perturbation model of $\matr{\widehat M^*}_1$,
\begin{equation}\label{eq:newdecomposition}
\matr{\widehat M}^*_1=\sum_{i=1}^r \lambda_i \langle \est,\matr{u}_i \rangle^{(k-2)} \matr{u}_i^{\otimes 2}+\tE(\matr{I},\matr{I},\est,\ldots,\est),
\end{equation}
where $\norm{\tE(\matr{I},\matr{I},\est,\ldots,\est)} \leq \norm{\tE}\leq \varepsilon$. 

Without loss of generality, assume $\est$ is the estimator of $\matr{u}_1$ and $\langle \est,\matr{u}_1 \rangle>0$; otherwise, we take $-\est$ to be the estimator. Let $\eta_i:=\lambda_i \langle \est,\matr{u}_i \rangle^{(k-2)}$ for all $i\in[r]$. In the context of Corollary~\ref{corollary1}, we set $\matr{B}=\sum_{i\in[r]}\eta_i\matr{u}_i^{\otimes 2}$, $\nmB=\matr{\widehat M}^*$, $\matr{E}=\nmB-\matr{B}$, $\matr{\Sigma}_1=\{\eta_1\}$, $\matr{\Sigma}_2=\text{diag}\{ \eta_2,\ldots,\eta_r\}$, and $\Delta=\min\{\eta_1,\eta_1-\max_{i\neq 1}\eta_i\}$. Then, 
\begin{equation}\label{eq:eigengap}
\Delta\geq \eta_1-\max_{i\neq 1}|\eta_i|= \lambda_1\langle \est,\matr{u}_1\rangle^{(k-2)} -\max_{i\neq 1}|\lambda_i \langle \est,\matr{u}_{1}\rangle|^{(k-2)}.
\end{equation}
Note that $\norm{\matr{E}}\leq \norm{\tE}\leq \varepsilon$. In order to apply Corollary~\ref{corollary1}, we seek to show $\Delta>\varepsilon$. 

By Definition~\ref{def:loss}, we have
\begin{equation}\label{eq:1}
\langle \est,\matr{u}_1\rangle = \cos \Theta (\matr{\widehat  u}_1,\matr{u}_1)=1-{1\over 2}\Loss^2(\est,\matr{u}_{1}),
\end{equation}
and by the orthogonality of $\{\matr{u}_i\}_{i\in[r]}$, 
\begin{equation}\label{eq:2}
\left| \langle \est, \matr{u}_i\rangle\right|^2 \leq \sum_{j=2}^r \left| \langle \matr{\widehat  u}_1, \matr{u}_j\rangle\right|^2 
\leq 1-\cos^2\Theta(\matr{\widehat  u}_{1},\matr{u}_1)
= \Loss^2(\matr{\widehat u}_{1},\matr{u}_{1})\left[1-{1\over 4}\Loss^2(\matr{\widehat u}_{1},\matr{u}_{1})\right],
\end{equation}
for all $i=2,\ldots,r$.

Combining  \eqref{eq:1}, \eqref{eq:2}, $0\leq \Loss(\est,\matr{u}_1)\leq 1/2$ (by Corollary~\ref{cor:lossvalue}), and the fact that $(1-x)^{(k-2)}\geq 1-(k-2) x$ for all $0\leq x\leq 1$ and $k\geq 3$, we further have
\begin{equation}\label{eq:top}
\langle \matr{\widehat u}_{1},\matr{u}_1\rangle^{(k-2)} = \left[1-{1\over 2}\Loss^2(\matr{\widehat u}_{1},\matr{u}_{1})\right]^{(k-2)}\geq 1-{k-2 \over 2}\Loss^2(\est,\matr{u}_1)\geq 1-{k-2 \over 4}\Loss(\est,\matr{u}_1),
\end{equation}
and 
\begin{equation}\label{eq:others}
|\langle \matr{\widehat  u}_1,\matr{u}_i\rangle|^{(k-2)} \leq \left[\Loss^2(\est,\matr{u}_1)\right]^{(k-2)/2}
=\Loss^{k-2}(\est,\matr{u}_1)\leq\Loss(\est,\matr{u}_1),
\end{equation}
for all $i=2,\ldots,r$.
Putting \eqref{eq:top} and \eqref{eq:others} back in \eqref{eq:eigengap}, we obtain
\begin{align}
\Delta&\geq \lambda_1\left [1-{k-2\over 4}\Loss(\est,\matr{u}_1)\right ]-\lambda_{\max}\Loss(\est,\matr{u}_1)\\
&\geq \lambda_1\left[ 1-\left({k-2\over 4}+{\lambda_{\max} \over \lambda_{\min}}\right)\Loss(\est,\true) \right].
\end{align}
By Corollary~\ref{cor:lossvalue}, $\Loss(\est,\true)\leq 5/c_0$. Write $c:={k-2\over 4}+{\lambda_{\max}\over \lambda_{\min}}$. Under the assumption
$
c_0\geq \max\left\{10, 6c \right\},
$
we have $\Delta\geq \lambda_1/6$ and hence
\[
\Delta-\varepsilon  \geq { \lambda_1\over 6}-{\lambda_{\min}\over c_0 d^{(k-2)/2}}> { \lambda_1\over 6}-{\lambda_{\min}\over 10}> 0.
\]
This implies that the condition in Corollary~\ref{corollary1} holds. Now applying Corollary~\ref{corollary1} to the specified setting gives
\begin{equation}\label{eq:sin}
\begin{aligned}
\left|\sin \Theta (\est, \matr{u}_1)\right|
&\leq {\varepsilon \over \Delta-\varepsilon }\\
&\leq {\varepsilon \over \lambda_1}\left[ 1- c\Loss(\est,\true)-{\varepsilon\over \lambda_1}  \right]^{-1}\\
&\leq {\varepsilon \over \lambda_1}\left[ 1- {cd^{(k-3)/2}\varepsilon\over \lambda_{\min}}-{\varepsilon\over \lambda_1} +o(\varepsilon) \right]^{-1}={\varepsilon\over \lambda_1}+o(\varepsilon),
\end{aligned}
\end{equation}
where the third line follows from Lemma~\ref{lem:bound}.
Using the fact that $\Loss(\est,\true)=\left[2-2|\cos\Theta (\est, \true)|\right]^{1/2}=\left[2-2\sqrt{1-\sin^2\Theta (\est, \true)}\right]^{1/2}$ and Taylor expansion, we conclude
\[
\Loss(\est,\true)\leq {\varepsilon\over \lambda_1}+o(\varepsilon).
\]

To obtain $\Loss(\estv,\truev)$, recall that under the assumption $\langle \est,\true \rangle>0$, $\Loss(\estv,\truev)=|\estv-\truev|$. (Otherwise, we need to consider $|\estv+\truev|$ instead). Observe that by the triangle inequality,
\begin{equation}\label{eigenvalue}
\begin{aligned}
|\estv -\truev|&= \left|\mathcal{T}(\est,\ldots,\est)- \lambda_{1}\right| =\left|\sum_{i=1}^r\lambda_i\langle \est ,\matr{u}_i \rangle^{k}+\tE(\est,\ldots,\est)- \lambda_1\right|\\
&\leq \lambda_1\left|1- \langle\est, \matr{u}_1 \rangle^k \right|+\sum_{i=2}^r \lambda_i \left|\langle \est,\matr{u}_i \rangle\right|^k+\left|\tE(\est,\ldots,\est)\right|.\\
\end{aligned}
\end{equation}
Using similar techniques as in \eqref{eq:1}, \eqref{eq:2}, \eqref{eq:top} and \eqref{eq:others}, as well as the fact $(1-x)^k\geq 1-kx$ for all $0\leq x\leq 1$ and $k\geq 3$, we conclude
\begin{equation}
\begin{aligned}
|\estv-\lambda_1|&\leq {\lambda_1 k\over 2}\Loss^2(\est,\matr{u}_1) + \lambda_{\max}\Loss^2(\est,\matr{u}_1) +\varepsilon\\
&\leq \left({\lambda_1k\over 2}+\lambda_{\max} \right)\left[{\varepsilon\over \lambda_1}+o(\varepsilon)\right]^2+\varepsilon\\
&=\varepsilon+o(\varepsilon).
\end{aligned}
\end{equation}
\end{proof}

\subsection{Proof of Lemma~\ref{lem:deflation}}

\begin{proof}[Proof of Lemma~\ref{lem:deflation}]
Let $\matr{M}$ be a $d$-by-$d$ matrix in the space $\ls{LS}(X)\eqdef\ls{LS} \cap \Spanspace\{ \matr{\widehat u}_i^{\otimes 2}: i\in X\}^{\perp}$ and suppose $\matr{M}$ satisfies $\Fnorm{\matr{M}}=1$. Since $\ls{LS}(X)\subset \ls{LS}$, from Remark~\ref{rmk:decomposition}, $\matr{M}$ can be decomposed into
\begin{equation}\label{decomplemma}
\matr{M}=\sum_{i=1}^r\alpha_i\matr{u}_i^{\otimes 2}+\matr{E},
\end{equation}
where
\begin{equation}\label{eq:relaxnorm}
\langle \matr{E},\matr{u}_i^{\otimes 2}\rangle=0\quad \text{for all } i\in[r],\quad \text{and}\quad  \norm{\matr{E}}\leq {2d^{(k-3)/2}\varepsilon \over\lambda_{\min}}+o(\varepsilon).
\end{equation}

By definition, every element in $\ls{LS}(X)$ is orthogonal to $\Vec(\esti^{\otimes 2})$ for all $i\in X$. We claim that under this condition, one must have $\alpha_i=o(\varepsilon)$ for all $i\in X$. To show this, we project $\esti$ onto the space $\Spanspace\{\matr{u}_i\}$ and write
\begin{equation}
\esti=\xi_i\matr{u}_i+\eta_i\matr{u}_i^{\perp},
\end{equation}
where $\xi_i^2+\eta_i^2=1$ and $\matr{u}_i^{\perp}\in\mathbf{S}^{d-1}$ denotes the normalized (i.e., unit) vector projection of $\esti$ onto the space $\Spanspace\{\matr{u}_i\}^{\perp}$. Then for all $i \in X$,
\begin{equation}\label{eq:compare}
\begin{aligned}
0&=\langle \matr{M},\esti^{\otimes 2}\rangle \\
&=\Bigg\langle \sum_{j\in[r]}\alpha_j\matr{u}_j^{\otimes 2}+\matr{E},\; \left( \xi_i\matr{u}_i+\eta_i\matr{u}_i^\perp\right)^{\otimes 2}\Bigg\rangle\\
&=\Bigg\langle \alpha_i\matr{u}_i^{\otimes 2}+\sum_{j\neq i,\;  j\in[r]} \alpha_j\matr{u}_j^{\otimes 2}+\matr{E},\ \xi_i^2\matr{u}_i^{\otimes 2}+2\xi_i\eta_i\matr{u}_i\otimes \matr{u}_i^\perp+\eta_i^2(\matr{u}_i^\perp)^{\otimes 2}\Bigg\rangle\\
&=\alpha_i\xi_i^2+2\xi_i\eta_i\Bigg \langle \matr{E},\matr{u}_i\otimes \matr{u}_i^\perp\Bigg \rangle+\eta_i^2\Bigg\langle \sum_{j\neq i,\; j\in[r]}\alpha_j\matr{u}_j^{\otimes 2}+\matr{E},\; (\matr{u}_i^\perp)^{\otimes 2}\Bigg\rangle,\\
\end{aligned}
\end{equation}
where the last line uses the fact that $\langle \matr{E},\matr{u}_i^{\otimes 2}\rangle=0$, $\langle \matr{u}_i, \matr{u}_i^\perp\rangle=0$ and $\langle \matr{u}_i, \matr{u}_j\rangle=0$ for all $j\neq i$.  By assumption, $\Loss(\esti,\matr{u}_i)\leq 2\varepsilon/ \lambda_i+o(\varepsilon)$. This implies $|\eta_i|=|\langle \esti, \matr{u}_i^\perp \rangle| = [1-\cos^2\Theta(\esti,\matr{u}_i)]^{1/2}\leq \Loss(\esti,\matr{u}_i)[1-{1\over 4}\Loss^2(\esti,\matr{u}_i)]^{1/2}\leq \Loss(\esti,\matr{u}_i)=O(\varepsilon)$, and $|\xi_i|=(1-\eta^2_i)^{1/2}\geq 1-O(\varepsilon)$. It then follows from \eqref{eq:compare} that
\begin{align}
\xi_i^2|\alpha_i|&=\bigg|2\xi_i\eta_i\bigg \langle \matr{E},\matr{u}_i\otimes \matr{u}_i^\perp\bigg \rangle+\eta_i^2\Bigg\langle \sum_{j\neq i,\; j\in [r]}\alpha_j\matr{u}_j^{\otimes 2}+\matr{E},\ (\matr{u}_i^\perp)^{\otimes 2}\Bigg\rangle\bigg|\\
&\leq 2|\xi_i\eta_i| \bigg|\bigg \langle \matr{E},\matr{u}_i\otimes \matr{u}_i^\perp \bigg \rangle \bigg|+\eta^2_i  \left(\sum_{j\neq i,\; j\in[r]} \bigg|\alpha_j\bigg \langle \matr{u}_j^{\otimes 2},\ (\matr{u}_i^\perp)^{\otimes 2} \bigg \rangle\bigg| +\bigg |\bigg\langle \matr{E},\ (\matr{u}_i^\perp)^{\otimes 2}\bigg \rangle\bigg|\right) \\
&\leq 2|\xi_i\eta_i|\norm{\matr{E}}+\eta_i^2\left(\sum_{j\neq i,\; j\in[r]}|\alpha_j|+\norm{\matr{E}}\right)\\
&\leq O(\varepsilon)\left({2d^{(k-3)/2}\over \lambda_{\min}}\varepsilon+o(\varepsilon)\right)+O(\varepsilon^2)\left(1.12r+{2d^{(k-3)/2}\over \lambda_{\min}}\varepsilon+o(\varepsilon) \right)=o(\varepsilon),
\end{align}
where the last line follows from $|\eta_i|\leq O(\varepsilon)$, $|\xi_i|\leq 1$, $\norm{\matr{E}}\leq{2d^{(k-3)/2}\varepsilon\over \lambda_{\min}}+o(\varepsilon)$  (c.f., \eqref{eq:relaxnorm}) and $\max_{i\in[r]}|\alpha_i|\leq 1.12$ (c.f., Corollary~\ref{cor:coefvalue}). Therefore, since $|\xi_i|\geq 1-O(\varepsilon)$, we conclude that $|\alpha_i|=o(\varepsilon)$ for all $i\in X$. 

Now write \eqref{decomplemma} as
\[
\matr{M}=\sum_{i\in [r]\backslash X}\alpha_i\matr{u}_i^{\otimes 2}+\sum_{i\in X}\alpha_i\matr{u}_i^{\otimes 2}+\matr{E},
\]
Note that $\sum_{i\in[r]\backslash X} \alpha_i\matr{u}_i^{\otimes 2}\in\ls{LS}_0(X)\eqdef\Spanspace\{\matr{u}_i^{\otimes 2}\colon i \in [r]\backslash X\}$. Hence, 
\[
\min_{\matr{M}^*\in \ls{LS}_0(X)}\norm{\matr{M}-\matr{M}^*}\leq \norm{\sum_{i\in X}\alpha_i\matr{u}_i^{\otimes 2}+\matr{E}}\leq \max_{i\in X}|\alpha_i|+\norm{\matr{E}}\leq {2d^{(k-3)/2}\varepsilon\over\lambda_{\min}}+o(\varepsilon).
\]
Since the above holds for all $\matr{M}\in \ls{LS}(X)$ that satisfies $\Fnorm{\matr{M}}=1$, taking maximum over $\matr{M}$ yields the desired result. 
\end{proof}

\subsection{Proof of Theorem~\ref{thm:main}}
We use the following lemma \cite{mu2015successive} in our proof of Theorem~\ref{thm:main}.

\begin{lem}\label{lem:residual}
Fix a subset $X\subset [r]$ and assume that $0\leq \varepsilon\leq \lambda_i/2$ for each $i\in X$. Choose any $\{\esti,\estvi\}_{i\in X}\subset \bbR^d\times \bbR$ such that
\[
|\lambda_i-\estvi|\leq \varepsilon,\quad \vectornorm{\esti}=1, \quad \text{and}\quad \langle \matr{u}_i,  \esti \rangle\geq 1-2(\varepsilon/\lambda_i)^2>0,
\]
and define tensor $\Delta_i:=\lambda_i\matr{u}_i^{\otimes k}-\estvi\esti^{\otimes k}$ for $i\in X$. Pick any unit vector $\matr{a}=\sum_{i=1}^da_i\matr{u}_i$. Then, there exist positive constants $C_1, C_2>0$, depending only on $k$, such that
\begin{equation}\label{eq:residualconclusion}
\norm{\sum_{i\in X}\Delta_i\matr{a}^{\otimes k-1}}\leq C_1\left( \sum_{i\in X}|a_i|^{k-1}\varepsilon\right)+C_2\left(|X|\left(\varepsilon\over \lambda_{\min} \right)^{k-1} \right),
\end{equation}
where $\Delta_i\matr{a}^{\otimes k-1}:= \Delta_i (\matr{a},\ldots,\matr{a},\matr{I})\in \bbR^d$.
\end{lem}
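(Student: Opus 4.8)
The plan is to contract each $\Delta_i$ against $\matr{a}^{\otimes(k-1)}$, split the resulting vector according to the two sources of error -- the eigenvalue error $\widehat\lambda_i-\lambda_i$ and the eigenvector error $\matr{\widehat u}_i-\matr{u}_i$ -- and then sum over $i\in X$, bounding the ``aligned'' contributions via orthonormality of $\{\matr{u}_i\}$ (rather than the triangle inequality, which would cost an extra $\sqrt{|X|}$) and the ``residual'' contributions via elementary size bounds. The elementary inputs I would record first: $|\widehat\lambda_i-\lambda_i|\le\varepsilon$, hence $\widehat\lambda_i\le\tfrac32\lambda_i$ since $\varepsilon\le\lambda_i/2$; and, writing $\matr{e}_i:=\matr{\widehat u}_i-\matr{u}_i$, the hypothesis $\langle\matr{u}_i,\matr{\widehat u}_i\rangle\ge 1-2(\varepsilon/\lambda_i)^2$ gives $\vectornorm{\matr{e}_i}^2=2(1-\langle\matr{u}_i,\matr{\widehat u}_i\rangle)\le 4(\varepsilon/\lambda_i)^2$, so $\vectornorm{\matr{e}_i}\le 2\varepsilon/\lambda_i$; also $|a_i|\le 1$.

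Since $\matr{u}_i^{\otimes k}(\matr{a},\dots,\matr{a},\matr{I})=\langle\matr{u}_i,\matr{a}\rangle^{k-1}\matr{u}_i=a_i^{k-1}\matr{u}_i$ and likewise $\matr{\widehat u}_i^{\otimes k}(\matr{a},\dots,\matr{a},\matr{I})=b_i^{k-1}\matr{\widehat u}_i$ with $b_i:=\langle\matr{\widehat u}_i,\matr{a}\rangle$ (so $b_i=a_i+\langle\matr{e}_i,\matr{a}\rangle$ and $|b_i|\le 1$, $|a_i-b_i|\le 2\varepsilon/\lambda_i$), one has
\[
\Delta_i\matr{a}^{\otimes k-1}=(\lambda_i-\widehat\lambda_i)a_i^{k-1}\matr{u}_i+\widehat\lambda_i\bigl(a_i^{k-1}\matr{u}_i-b_i^{k-1}\matr{\widehat u}_i\bigr).
\]
The first group is immediate: orthonormality of $\{\matr{u}_i\}$ gives $\norm{\sum_{i\in X}(\lambda_i-\widehat\lambda_i)a_i^{k-1}\matr{u}_i}=\bigl(\sum_{i\in X}(\lambda_i-\widehat\lambda_i)^2a_i^{2(k-1)}\bigr)^{1/2}\le\varepsilon\sum_{i\in X}|a_i|^{k-1}$, already of the first type in the conclusion. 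For the second group I would use the telescoping expansion of $\matr{u}_i^{\otimes k}-\matr{\widehat u}_i^{\otimes k}$ (write $\matr{\widehat u}_i=\matr{u}_i+\matr{e}_i$ and expand the $k$-fold product), contract it with $\matr{a}^{\otimes(k-1)}$, and separate the terms linear in $\matr{e}_i$ from those carrying two or more factors of $\matr{e}_i$. After expanding $b_i^{k-1}$ about $a_i^{k-1}$, the linear part reduces to $-a_i^{k-1}\matr{e}_i-(k-1)a_i^{k-2}\langle\matr{e}_i,\matr{a}\rangle\matr{u}_i$ up to a higher-order remainder; summed over $i\in X$, the first piece is $\le 3\varepsilon\sum_{i\in X}|a_i|^{k-1}$ by the triangle inequality (using $\widehat\lambda_i\vectornorm{\matr{e}_i}\le 3\varepsilon$), and the second is handled by orthonormality since $\norm{\sum_{i\in X}\widehat\lambda_i a_i^{k-2}\langle\matr{e}_i,\matr{a}\rangle\matr{u}_i}=\bigl(\sum_{i\in X}\widehat\lambda_i^2a_i^{2(k-2)}\langle\matr{e}_i,\matr{a}\rangle^2\bigr)^{1/2}$, with $\widehat\lambda_i^2\langle\matr{e}_i,\matr{a}\rangle^2\le\widehat\lambda_i^2\vectornorm{\matr{e}_i}^2\le 9\varepsilon^2$, $2(k-2)\ge k-1$ for $k\ge 3$, and $\sum_{i\in X}a_i^2\le 1$. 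All remaining contributions -- the $(\ge 2)$-$\matr{e}_i$ terms, the $(\widehat\lambda_i-\lambda_i)$ cross terms, and the higher-order remainders from expanding $b_i^{k-1}$ -- amount to at most $C(k)$ monomials per index $i$, each a product of a bounded power of $|a_i|\le 1$ and at least two factors of $\varepsilon/\lambda_i\le\tfrac12$ with total weighted degree at least $k-1$, hence each bounded by a $k$-dependent multiple of $(\varepsilon/\lambda_{\min})^{k-1}$; summing over $i\in X$ yields the second term $C_2|X|(\varepsilon/\lambda_{\min})^{k-1}$.

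The main obstacle is the bookkeeping in the second group: one must track precisely which monomials of the binomial/multinomial expansion belong in the ``aligned'' bucket -- those to be summed against the orthonormal system $\{\matr{u}_i\}$ in the $\ell^2$ sense, without which an extra $\sqrt{|X|}$ would appear -- versus the ``residual'' bucket, and verify that after this split the constants depend only on $k$ and not on $d$, $r$, or the $\lambda_i$. Performing the telescoping at the tensor level before contracting makes the linear-versus-higher-order separation transparent, the scalar estimate $|a_i^{k-1}-b_i^{k-1}|\le(k-1)\max(|a_i|,|b_i|)^{k-2}|a_i-b_i|$ supplies the per-term bounds, and $C_1(k)$, $C_2(k)$ are then read off from the number of terms and the multinomial coefficients.
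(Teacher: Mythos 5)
The paper never proves this lemma: it is imported wholesale from \cite{mu2015successive} (``We use the following lemma \cite{mu2015successive} in our proof of Theorem~\ref{thm:main}''), so there is no in-paper argument to compare against. Your overall strategy --- contract, separate the eigenvalue error from the eigenvector error, expand $\matr{\widehat u}_i^{\otimes k}$ about $\matr{u}_i^{\otimes k}$, and sum the $\matr{u}_i$-aligned contributions in $\ell^2$ via orthonormality --- is the natural one, and your first group and the piece $\sum_i\widehat\lambda_i a_i^{k-1}\matr{e}_i$ are handled correctly.

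The gap is in the cross term $(k-1)\sum_{i\in X}\widehat\lambda_i a_i^{k-2}\langle\matr{e}_i,\matr{a}\rangle\matr{u}_i$, and it cannot be repaired, because the statement as transcribed is not provable with constants depending only on $k$. Your bound for that term is $3(k-1)\varepsilon\left(\sum_{i\in X}a_i^{2(k-2)}\right)^{1/2}$, and the step ``$2(k-2)\ge k-1$ and $\sum_i a_i^2\le 1$'' yields at best $3(k-1)\varepsilon\left(\sum_i|a_i|^{k-1}\right)^{1/2}$; the square root cannot be removed, and an $O(\varepsilon)$ quantity with only $|a_i|^{k-2}$ damping fits neither bucket. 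The obstruction is real: take $k=3$, $X=\{2\}$, $\widehat\lambda_2=\lambda_2=\lambda_{\min}=\lambda$, $\matr{\widehat u}_2=s\matr{u}_1+c\matr{u}_2$ with $c=1-2(\varepsilon/\lambda)^2$ (so $s\approx 2\varepsilon/\lambda$), and $\matr{a}=\sqrt{1-t^2}\,\matr{u}_1+t\matr{u}_2$. The $\matr{u}_2$-coordinate of $\Delta_2\matr{a}^{\otimes 2}$ is $\lambda t^2-\lambda c(s\sqrt{1-t^2}+ct)^2\approx-\lambda s(s+2t)\approx-4\varepsilon t$ for $\varepsilon/\lambda\ll t\ll 1$, while the claimed bound is $C_1t^2\varepsilon+C_2(\varepsilon/\lambda)^2$; optimizing over $t$ forces $C_1C_2\gtrsim\lambda^2/\varepsilon$, which is unbounded over the admissible $(\lambda,\varepsilon)$. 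The same scaling problem afflicts your residual bucket: a typical higher-order monomial such as $\widehat\lambda_i\langle\matr{e}_i,\matr{a}\rangle^{k-1}$ has size $\asymp\lambda_i(\varepsilon/\lambda_i)^{k-1}$, which exceeds $(\varepsilon/\lambda_{\min})^{k-1}$ by a factor of $\lambda_i$ (note the stated second term is even dimensionless in $\lambda$, unlike $\norm{\Delta_i}$ itself). The deflation lemmas in the source literature carry an additional middle term of the form $C\varepsilon\left(\sum_{i\in X}(\varepsilon/\lambda_i)^2a_i^{2(k-2)}\right)^{1/2}=O(\varepsilon^2/\lambda_{\min})$, together with a correctly $\lambda$-scaled tail, precisely to absorb these contributions; those extras are $o(\varepsilon)$ under Assumption~\ref{assumption}, so Theorem~\ref{thm:main} is unaffected, but your proof must either add such a term to the conclusion or acknowledge that the two-term bound as written fails.
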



\medskip
\begin{proof}[Proof of Theorem~\ref{thm:main}]
We prove \eqref{eq:finalconclusion} by induction on $i$. For $i=1$, the error bound of $\{(\matr{\widehat u}_1, {\widehat \lambda}_1)\in\bbR^d\times \bbR\}$ follows readily from Lemmas~\ref{lem:max}--\ref{lem:post-processing}. Now suppose \eqref{eq:finalconclusion} holds for $i\leq s$. Taking $X=[s]$ in Lemma~\ref{lem:deflation} yields the deviation of $\ls{LS}(X)$ from $\ls{LS}_0(X)$,
\begin{equation}\label{eq:iterative}
\max_{\matr{M}\in \ls{LS}(X), \Fnorm{\matr{M}}=1} \; \min_{\matr{M}^*\in\ls{LS}_0(X)} \norm{\matr{M}-\matr{M}^*}\leq {2d^{(k-3)/2}\varepsilon\over \lambda_{\min}}+o(\varepsilon).
\end{equation}
Applying Theorem~\ref{thm:LS} and Lemmas~\ref{lem:max}--\ref{lem:post-processing} to $i=s+1$ with $\varepsilon$ replaced by $2\varepsilon$ in \eqref{eq:maxrank}, \eqref{eq:boundmaintext} and \eqref{eq:firstbound} (because of the additional factor ``2'' in \eqref{eq:iterative} compared to \eqref{eq:pertLS}), we obtain
\[
\Loss(\matr{\widehat u}_{s+1},\matr{u}_{\pi(s+1)})\leq {2\varepsilon\over \lambda_{\pi(s+1)}}+o(\varepsilon),\quad \Loss(\widehat \lambda_{s+1},\lambda_{\pi(s+1)})\leq 2\varepsilon+o(\varepsilon).
\]
So \eqref{eq:finalconclusion} also holds for $i=s+1$. 

It remains to bound the residual tensor $\Delta\ntensor\eqdef\ntensor-\sum_{i\in [r]}\estvi\esti^{\otimes k}$. Note that $\Loss(\esti,\matr{u}_{\pi(i)})\leq 2\varepsilon/ \lambda_{\pi(i)}+o(\varepsilon)$ implies $ \langle \esti,\matr{u}_{\pi(i)}\rangle=1-{1\over 2}\Loss^2(\esti,\matr{u}_{\pi(i)})\geq 1-2(\varepsilon/\lambda_{\pi(i)})^2+o(\varepsilon^2)$. When $c_0$ is sufficiently large (i.e., $\varepsilon$ is sufficiently small), $\esti$ is approximately parallel to $\matr{u}_{\pi(i)}$ and orthogonal to $\matr{u}_j$ for all $j\neq \pi(i)$. For ease of notation, we renumber the indices and assume $\pi(i)=i$ for all $i\in[r]$. Following the definition of $\Delta_i$ in Lemma~\ref{lem:residual}, 
\begin{align}
\norm{\Delta\ntensor}=\norm{\sum_{i\in [r]}\lambda_i\matr{u}_i^{\otimes k}+\tE-\sum_{i\in [r]}\estvi\esti^{\otimes k}}= \norm{\sum_{i\in [r]}\Delta_i+\tE}.
\end{align}
Now taking $X=[r]$ in \eqref{eq:residualconclusion} gives
\begin{equation}
\begin{aligned}
\norm{\Delta\ntensor} &\leq\max_{\matr{a}\in\mathbf{S}^{d-1}}\norm{\sum_{i\in [r]}\Delta_i \matr{a}^{\otimes (k-1)}}+\varepsilon\\
 &\leq  \max_{\matr{a}\in\mathbf{S}^{d-1}}C_1\sum_{i\in [r]}|a_i|^{k-1} \varepsilon+C_2r\left({\varepsilon \over \lambda_{\min}}\right)^{k-1}+\varepsilon\\
&\leq \max_{\matr{a}\in\mathbf{S}^{d-1}} C_1\varepsilon \sum_{i\in[r]}|a_i|^2+C_2r\left({\varepsilon \over \lambda_{\min}}\right)^2+\varepsilon\\
&\leq C\varepsilon+o(\varepsilon),
\end{aligned}
\end{equation}
where the third line comes from the fact that $k\geq 3$, $|a_i|\leq 1$, and $\varepsilon/ \lambda_{\min}\leq 1$ from Assumption~\ref{assumption}.
\end{proof}

\section{Conclusion}

We have proposed a new method for tensor decompositions based on the two-mode HOSVD. This method tolerates a higher level of noise and achieves accuracy bounds comparable to that of existing methods while displaying empirically favorable performance. Our approach extends naturally to asymmetric tensors, e.g., by using the matrix SVD in place of the eigendecomposition in lines 5--6 of Algorithm~\ref{alg:main}. In addition, recent works have shown that some non-orthogonal tensors can be converted to orthogonal tensors by an additional whitening step \cite{anandkumar2014tensor, kolda2015symmetric}. Therefore,  the two-mode HOSVD is applicable to a broad class of structured tensors.  In particular, our proposed algorithm shows stable convergence and exhibits pronounced advantage especially as the order of the tensor increases.

\section*{Acknowledgments}
We thank Khanh Dao Duc and Jonathan Fischer for useful discussion.
This research is supported in part by a Math+X Simons Research Grant from the Simons Foundation and a Packard Fellowship for Science and Engineering.

\clearpage
\appendix
\section*{Appendix}

\section{Supplementary Figures and Table}\label{appendix:figures}

\renewcommand{\thefigure}{{S\arabic{figure}}}%
\renewcommand{\thetable}{{S\arabic{table}}}%
\renewcommand{\figurename}{{Supplementary Figure}}    
\renewcommand{\tablename}{{Supplementary Table}}    
\setcounter{figure}{0}   
\setcounter{table}{0}

\begin{figure}[H]
\begin{center}
\includegraphics[width=15cm]{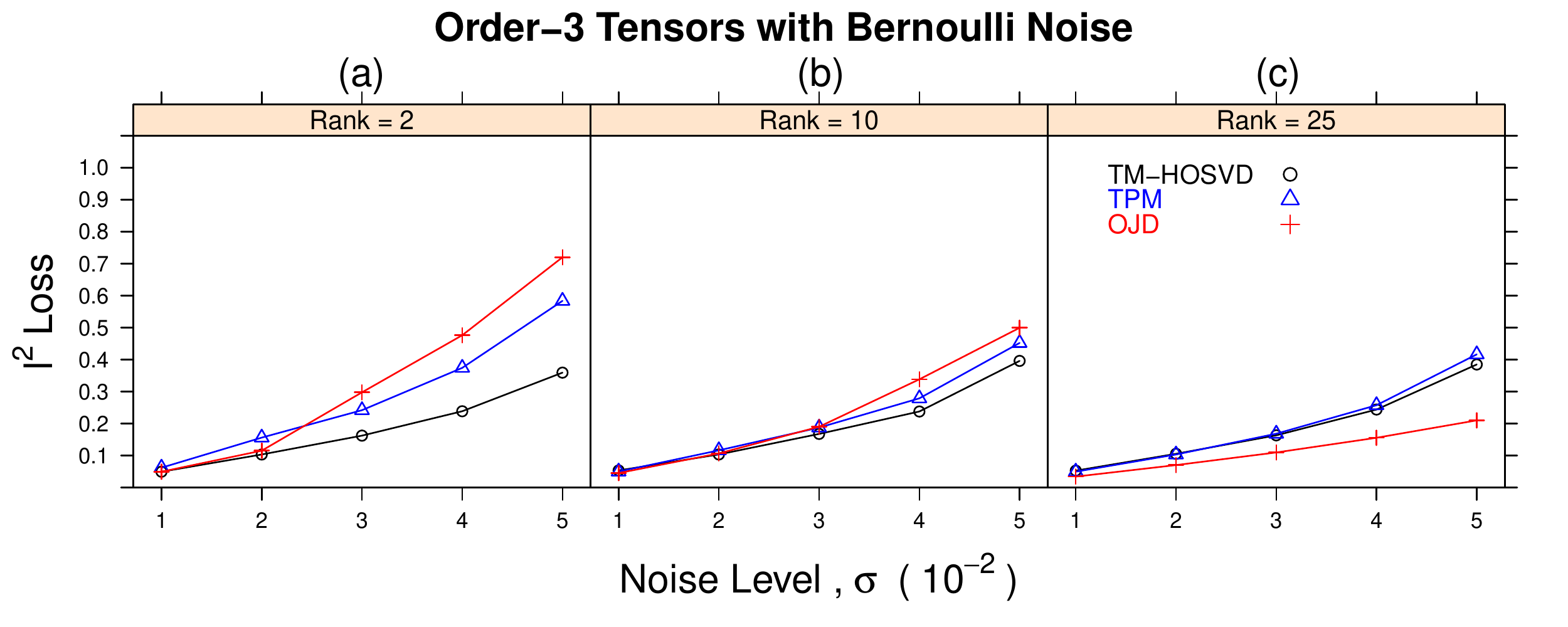}\\
\includegraphics[width=15cm]{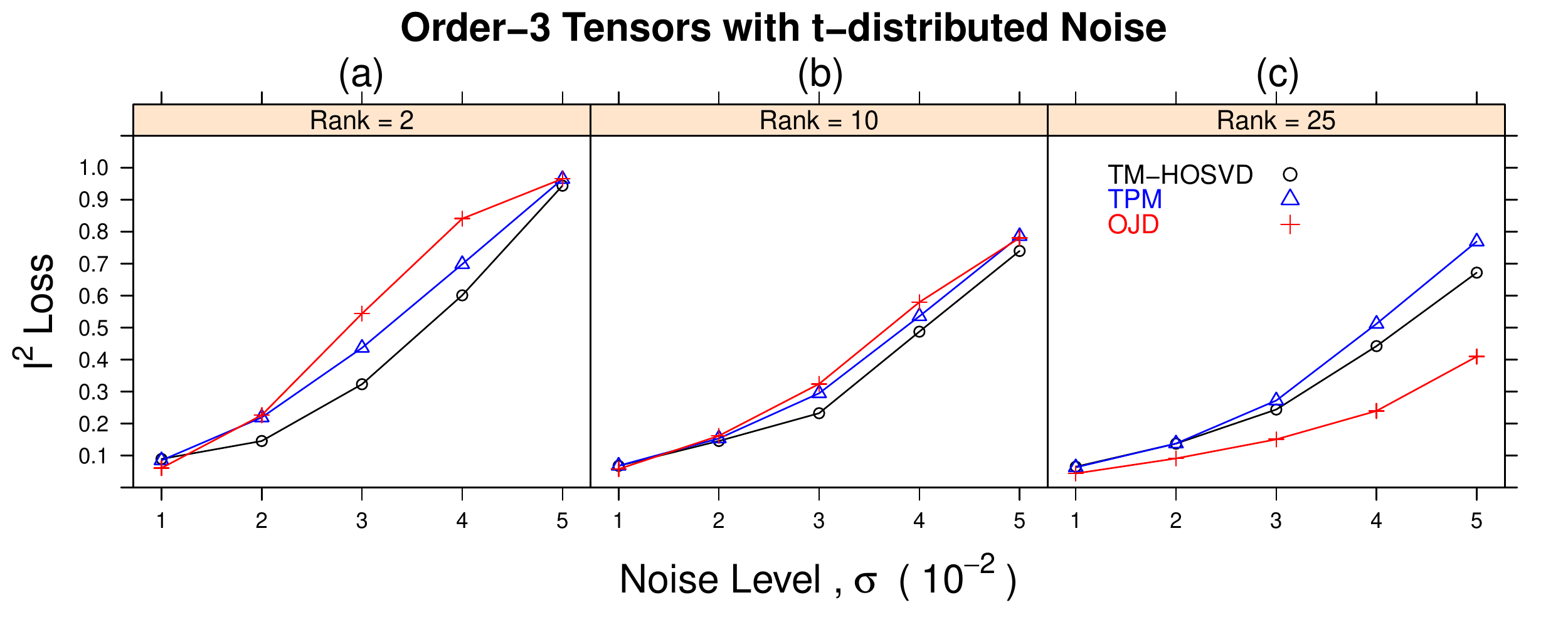}
\end{center}
\caption{Average $l^2$ Loss for decomposing order-3 nearly SOD tensors with Bernoulli/T-distributed noise, $d=25$.}\label{fig:suppfigure1}
\end{figure}

\begin{figure}[H]
\centerline{\includegraphics[width=15cm]{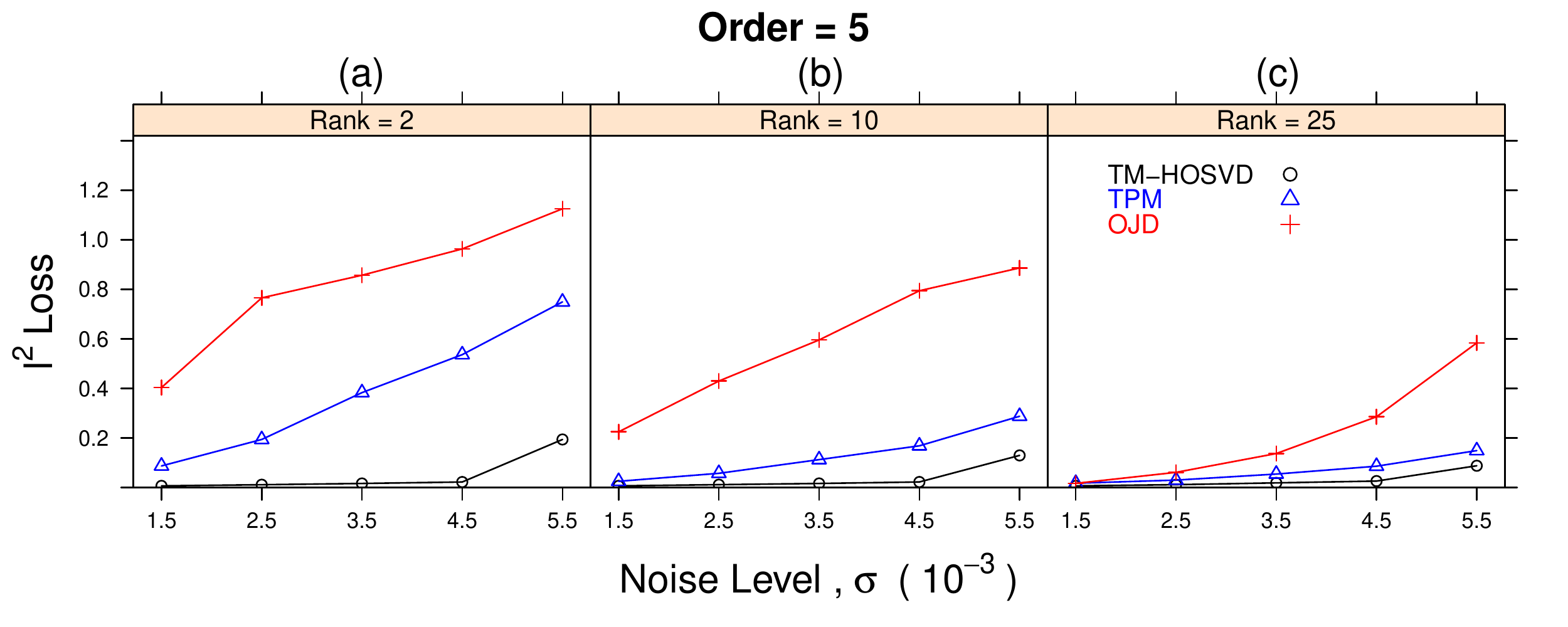}}
  \caption{Average $l^2$ Loss for decomposing order-5 nearly SOD tensors with Gaussian noise, $d=25$.}\label{fig:suppfigure2}
\end{figure}

\begin{table}[H]
\caption{Runtime for decomposing nearly-SOD tensors with Gaussian noise, $d=25$.}\label{tab:supptable1}
\hspace{5mm}
\begin{center}
\begin{tabular}{c|c|c|c|c|c}
\toprule
\multirow{2}{*}{Order}&\multirow{2}{*}{Rank}&\multirow{2}{*}{Noise Level ($\sigma$)}&\multicolumn{3}{c}{Time (sec.)}\\
\cline{4-6}
&&&TM-HOSVD &TPM&OJD\\
\hline
3&2&$5\times 10^{-2}$&0.08&0.01&0.13\\
3&10&$5\times 10^{-2}$&0.20&0.03&0.80\\
3&25&$5\times 10^{-2}$&0.47& 0.07&0.92\\
4&2&$1.5\times 10^{-2}$&0.13&0.06&0.12\\
4&10&$1.5\times 10^{-2}$&0.29 &0.14&1.06\\
4&25&$1.5\times 10^{-2}$&0.57&0.25&1.58\\
5&2&$5.5\times 10^{-3}$&0.25&0.51&0.14\\
5&10&$5.5\times 10^{-3}$& 0.45&1.98&1.01\\
5&25&$5.5\times 10^{-3}$&0.87&4.27&2.66\\
\bottomrule
\end{tabular}
\end{center}
\end{table}

\clearpage
\bibliographystyle{acm}
\bibliography{TensorDecomposition} 

\end{document}